\DeclareMathOperator*{\polylog}{\textrm{polylog}}
\newcommand{\beq}{\begin{equation}}
\newcommand{\eeq}{\end{equation}}
\newcommand{\beqa}{\begin{eqnarray}}
\newcommand{\eeqa}{\end{eqnarray}}
\newcommand{\beqan}{\begin{eqnarray*}}
\newcommand{\eeqan}{\end{eqnarray*}}
\newcommand{\trace}{\mbox{\rm Tr}}
\newcommand{\vnorm}[1]{\left\|#1\right\|}
\newcommand{\E}{\mathds{E} }
\newcommand{\prob}{\mathbb{P}}
\newcommand{\Nset}{\mathbb{N}}
\newcommand{\Rset}{\mathbb{R}}
\newcommand{\Acal}{{\cal A}}
\newcommand{\Bcal}{{\cal B}}
\newcommand{\Dcal}{{\cal D}}
\newcommand{\Fcal}{{\cal F}}
\newcommand{\Gcal}{{\cal G}}
\newcommand{\Lcal}{{\cal L}}
\newcommand{\Ncal}{{\cal N}}
\newcommand{\Ocal}{{\cal O}}
\newcommand{\Scal}{{\cal S}}
\newcommand{\Tcal}{{\cal T}}
\newcommand{\argmax}{\mathop{\rm argmax}}
\newcommand{\bone}{\mathbf{1}}
\newcommand{\ve}{\varepsilon}
\newcommand{\ol}[1]{\ensuremath{\overline{{#1}}}}
\newcommand{\ul}[1]{\ensuremath{\underline{{#1}}}}
\newcounter{l1}
\newcounter{l2}
\newcounter{l3}
\newcommand{\bdotlist}{\begin{list}{$\bullet$}{}}
\newcommand{\bboxlist}{\begin{list}{$\Box$}{}}
\newcommand{\bbboxlist}{\begin{list}{\raisebox{.005in}{{\tiny
$\blacksquare$ \ \ }}}{}}
\newcommand{\bdashlist}{\begin{list}{$-$}{} }
\newcommand{\blist}{\begin{list}{}{} }
\newcommand{\barablist}{\begin{list}{\arabic{l1}}{\usecounter{l1}}}
\newcommand{\balphlist}{\begin{list}{(\alph{l2})}{\usecounter{l2}}}
\newcommand{\bAlphlist}{\begin{list}{\Alph{l2}.}{\usecounter{l2}}}
\newcommand{\bdiamlist}{\begin{list}{$\diamond$}{}}
\newcommand{\bromalist}{\begin{list}{(\roman{l3})}{\usecounter{l3}}}
\newtheorem{assumption}{Assumption}
\renewcommand{\tilde}{\widetilde}
\renewcommand{\hat}{\widehat}
\newcommand{\A}{\Acal}
\newcommand{\hmax}{\lambda_{\max}^\Delta}
\newcommand{\hmin}{\lambda_{\min}^\Delta}
\newcommand{\nmin}{\nabla_g^{\min}}
\newcommand{\nmax}{\nabla_g^{\max}}
\newcommand{\co}{\mathsf{c}}
\newcommand{\proj}{{\textrm{proj}}}
\newcommand{\projA}[1]{{\textrm{proj}_{\Acal}\left(#1\right)}}
\newcommand{\probb}[1]{{\prob \left \{#1 \right\}}}
\newcommand{\Ex}[1]{{\E_{s-1} \left[#1 \right]}}
\renewcommand{\mathring}{\accentset{\circ}}
\tikzset{ac source/.style={
  circuit symbol lines,
  circuit symbol size = width 2 height 2,
  shape = generic circle IEC,
  /pgf/generic circle IEC/before background={
    \pgfpathmoveto{\pgfpoint{-0.8pt}{0pt}}
    \pgfpathsine{\pgfpoint{0.4pt}{0.4pt}}
    \pgfpathcosine{\pgfpoint{0.4pt}{-0.4pt}}
    \pgfpathsine{\pgfpoint{0.4pt}{-0.4pt}}
    \pgfpathcosine{\pgfpoint{0.4pt}{0.4pt}}
    \pgfusepath{stroke}
  },
  transform shape
}}
\newcommand{\solidblackline}{\raisebox{2pt}{\tikz{\draw[solid,black,line width = 1.5pt](0,0) -- (6mm,0);}}}
\newcommand{\dashedblackline}{\raisebox{2pt}{\tikz{\draw[dashed,black,line width = 1.5pt](0,0) -- (6mm,0);}}}
\begin{document}

\title{Tangential Randomization in Linear Bandits (TRAiL):\\ Guaranteed Inference and Regret Bounds}
\author{\name Arda G\"{u}\c{c}l\"{u} \email aguclu2@illinois.edu \\
       \addr Department of Electrical and Computer Engineering,\\
       Coordinated Science Laboratory,\\
       University of Illinois, Urbana-Champaign,\\
       Urbana, IL 61801, USA
       \AND
       \name Subhonmesh Bose \email boses@illinois.edu \\
       \addr Department of Electrical and Computer Engineering,\\
       Coordinated Science Laboratory,\\
       University of Illinois, Urbana-Champaign,\\
       Urbana, IL 61801, USA}
\editor{N/A}

\maketitle

\begin{abstract}
We propose and analyze TRAiL (Tangential Randomization in Linear Bandits), a computationally efficient regret-optimal forced exploration algorithm for linear bandits on action sets that are sublevel sets of strongly convex functions. TRAiL estimates the governing parameter of the linear bandit problem through a standard regularized least squares and perturbs the reward-maximizing action corresponding to said point estimate along the tangent plane of the convex compact action set before projecting back to it.
Exploiting concentration results for matrix martingales, we prove that TRAiL ensures a $\Omega(\sqrt{T})$ growth in the inference quality, measured via the minimum eigenvalue of the design (regressor) matrix with high-probability over a $T$-length period. We build on this result to obtain an $\Ocal(\sqrt{T} \log(T))$ upper bound on cumulative regret with probability at least $ 1 - 1/T$ over $T$ periods, and compare TRAiL to other popular algorithms for linear bandits. Then, we characterize an $\Omega(\sqrt{T})$ minimax lower bound for any algorithm on the expected regret that covers a wide variety of action/parameter sets and noise processes. Our analysis not only expands the realm of lower-bounds in linear bandits significantly, but as a byproduct, yields a trade-off between regret and inference quality.
Specifically, we prove that any algorithm with an $\Ocal(T^\alpha)$ expected regret growth must have an $\Omega(T^{1-\alpha})$ asymptotic growth in expected inference quality. Our experiments on the $L^p$ unit ball as action sets reveal how this relation can be violated, but only in the short-run, before returning to respect the bound asymptotically. In effect, regret-minimizing algorithms must have just the right rate of inference--too fast or too slow inference will incur sub-optimal regret growth.
\end{abstract}

\section{Introduction}
\label{sec:introduction}
Linear bandits define a generalization of the multi-armed bandit problem where actions have outcomes dependent on a linear function of unknown parameters. An agent sequentially selects actions $a_t \in \Acal \subset \Rset^n$ at time $t$, seeking to maximize the cumulative reward $\sum_{t=1}^T Y_t$ over a time-horizon $T$, where
$Y_t=\theta^{\star \top} a_t+\ve_t
$. The noise $\ve_t$ is zero-mean. The reward-defining parameter $\theta^\star$ is unknown, but assumed to lie in  $\Theta \subset \Rset^n$. A reward-maximizing action at each time is given  by
\begin{align}
    a^\star(\theta^\star) \in \underset{a \in \Acal}\argmax \; \theta^{\star\top} a, \label{linear_bandits_optimization}
\end{align}
for which the best achievable expected reward is $ \left(\theta^{\star}\right)^{\top} a^{\star}\left(\theta^{\star}\right)$. Selecting another action $a \in \Acal$ leads to an expected lost opportunity cost or regret. 
Thus, the cumulative expected regret up to time $T$ is
\begin{align}
    \mathscr{R}_{\theta^\star}(T)  
    := \sum_{t=1}^T r_{\theta^\star}(a_t) 
    := \sum_{t=1}^T \left( {\theta^{\star\top} a^{\star}(\theta^\star)- \theta^{\star\top} a_t} \right).
    \label{eq:regret.def}
\end{align}
The goal of the linear bandit problem is to find a causal action selection policy $\pi$ that maps the sequence of actions and observed rewards up to time $t$ to an action $a_t \in \Acal$ in a way that minimizes the expected cumulative regret, $\mathscr{R}_{\theta^\star}(T)$. 
Linear bandits and their analyses have found applications in recommendation systems in \cite{recomendation_systems}, optimal network routing  in \cite{network_routing}, personalized healthcare in \cite{ph}, and online advertising in \cite{online_advertising}. 

A reasonable goal is to design control policies that guarantee sublinear regret, i.e., $\mathscr{R}_{\theta^\star}(T)/ T \to 0$ as $T\to \infty$. A simple design approach to that end comprises estimation of the parameter $\theta^\star$ from a sequence of observations $Y_1, \ldots, Y_{t-1}$ via a regularized least squares (RLS) and a subsequent selection of a reward-maximizing action corresponding to that estimate, i.e., to solve 
\begin{align}
    \hat{\theta}_{t + 1}:=\min_{\theta}\left[ \sum_{s=1}^{t}\left(\theta^\top a_s-Y_s\right)^2+\lambda\|\theta\|^2\right] = \left(\underbrace{\lambda I + \sum_{s=1}^{t} a_s a_s^\top }_{:= V_t \in \Rset^{n \times n} }\right)^{-1} \left(\sum_{s = 1}^t a_s Y_s \right),
    \label{eq:Vt.def}
\end{align}
for a regularization parameter $\lambda >0$ and then select an action $a_{t+1} = a^{\star}(\hat{\theta}_{t +1})$. Such a point estimate-based control has been known to incur linear regret, where an incorrect estimate of $\theta^\star$ leads to suboptimal actions whose rewards remain consistent with the erroneous estimate; see \cite{incomplete_learning_example} for an example. Therefore, it is necessary to \emph{explore} actions away from those suggested by the current estimate. Too much exploration, however, can accrue large regret. The art lies in  delicately balancing exploration and exploitation. In this work, we present a forced exploration algorithm for linear bandits with convex compact action sets for which we provide high probability results for inference of the underlying parameter $\theta^\star$ and an upper bound on regret. Then, we proceed to show that the result is order-optimal (up to log factors) by proving a lower bound on both.

There are numerous learning algorithms designed to solve the linear bandit problem effectively. These include the upper confidence bound (UCB) algorithms in \cite{improved_algorithms_for_stochastic_bandits, contextual_bandits_with_ucb}, Thompson Sampling (TS) in \cite{thompson_sampling_for_contextual_bandits_with_linear_payoffs, an_information_theoretic_analysis_of_TS}, information directed sampling (IDS) in \cite{learning_to_optimize_with_IDS,IDS_for_linear_partial_monitoring}, and forced exploration in \cite{AbbasiYadkori2009ForcedExplorationBA}. The UCB algorithm operates on the principle of optimism in the face of uncertainty (OFUL), selecting actions that lead to the most favorable outcomes among those suggested by error bounds around parameter estimates obtained from past observations. 
The TS algorithm samples parameter estimates from a distribution and selects the action with the highest reward based on the sampled parameters. It updates the parameter sampling distribution after making an additional observation. 
A more recent addition to this literature is IDS in \cite{learning_to_optimize_with_IDS,IDS_for_linear_partial_monitoring}, where the agent selects an action by optimizing a quantity that depends on both regret and the information gained about the parameter of interest at each step. 
Each of these algorithms and their analyses have gaps that, in collection, has motivated our algorithm design and  analysis in this work. For example, UCB applied to linear bandits from \cite{context_bandits_ucb} requires the solution of a nonconvex optimization problem at each iteration that becomes computationally challenging in high dimensions. Vanilla TS algorithm as presented in \cite{russo2014learning} requires Bayesian updates on distributions over parameters, a task that can be computationally difficult without conjugate distributions. The recent work in \cite{linear_thompson_sampling_revisited} avoids calculation of the posterior, but as will become evident from our numerical simulations, it is far sub-optimal in performance compared to forced exploration algorithms which introduce exploratory perturbations into the action selection policy, a method that, unlike UCB and vanilla TS algorithms, does not rely on the action history, given the current best estimate of $\theta^\star$. This makes forced-exploration algorithms computationally efficient and easily scalable, advantages we also examine in our experimental results. Despite their potential,  this algorithm class has remained under-studied compared to their counterparts.

Perhaps the closest in spirit to our work is the seminal paper on forced exploration in \cite{AbbasiYadkori2009ForcedExplorationBA}, where the authors devise a random exploration strategy in $\Ocal(\sqrt{T})$ time instants over a $T$-period horizon and then implements a myopic control based on the current best estimate of the underlying parameter for the remaining $T - \Ocal(\sqrt{T})$ periods. The Forced Exploration for Linear Bandit Problems (FEL) algorithm, given in \cite{AbbasiYadkori2009ForcedExplorationBA} generates $a_t$'s from a fixed distribution that needs to be tailored to specific action sets. Even though this limits the applicability of the algorithm, FEL is computationally light, and is known to achieve a sublinear regret \emph{in expectation}. This direction has remained so understudied that even the most common high-probability guarantees on cumulative regret, common in the literature on UCB, TS, and IDS algorithms over the last decade, have not been established for such algorithms to our knowledge. Our work addresses this gap by introducing a novel algorithm, TRAiL, which differs not only in its exploration strategy from FEL, but offers an array of theoretical results that have broad implications. TRAiL, presented in Section \ref{sec:algo}, estimates the governing parameter of the linear bandit problem through a regularized least squares problem and perturbs the reward-maximizing action corresponding to the point estimate of that parameter along the \emph{tangent plane} of the convex compact action set before projecting back to it, a technique deeply rooted in our analysis of the action space geometry. We consider action sets $\Acal$ defined as sub-level sets of strongly convex functions which posses many properties as explained in Section \ref{sec:geometric_properties}. TRAiL performs exploration and exploitation simultaneously, and does not separate the two phases as FEL advocates. 
 
Beyond introducing a new algorithm, our work also presents an innovative matrix-martingale-based analysis approach to derive high-probability guarantees for inferring $\theta^\star$--a first of its kind in linear bandits to our knowledge. This result shows $\lambda_{\min}(V_T) \sim \Omega(\sqrt{T})$ with high probability, where $\lambda_{\min}$ computes the minimum eigenvalue of its argument. This \emph{regressor} or \emph{design} matrix, $V_t$, underlies the quality of the inference of $\theta^\star$, as proven by \cite{improved_algorithms_for_stochastic_bandits}. Said succinctly, $\theta^\star$ must lie in an ellipsoid around the the current regularized estimate $\hat{\theta}_t$ with high probability, which itself lies in a ball of radius $[\lambda_{\min}(V_t)]^{-1}$. Thus, a guaranteed growth on $\lambda_{\min}(V_t)$ results in a guaranteed inference of $\theta^\star$. Building on this, we then provide a high-probability guarantee on finite-time cumulative regret. While not the focus of our work, our proof structure generalizes to establish both guaranteed inference and an $\Ocal(\sqrt{T})$ upper bound on cumulative regret with high probability for the FEL algorithm as well. This result, included in Appendix \ref{sec:FEL_hp_proof}, highlights the power of our proof techniques. A separate line of work has examined the growth rate of $\lambda_{\min}(V_T)$ for algorithms such as UCB and TS, however a high-probability lower bound only exists under stringent conditions as discussed in \cite[Section 11]{rich_action_spaces} or with spherical action sets as given by \cite{log_minmax_regret}. 
On the contrary, we obtain these results for a broader class of sets with TRAiL.

In Section \ref{sec:lowerBoundRegret}, we turn to analyze theoretical limits of inference and control across \emph{all} algorithms. In turn, we establish the minimax order-optimality of TRAiL. Prior art in \cite{rusmevichientong2010linearlyparameterizedbandits} provides a foundational $\Omega(\sqrt{T})$ 
lower bound on regret for linear bandits on a unit-circle action set, assuming a multivariate normal prior on the unknown parameter $\theta^\star$ and standard normal distribution for errors. In \cite{bandit_algorithms_book}, the authors produce the same order without requiring a specific prior over $\Theta$, but retaining the assumptions on the action set and the error distribution.  \cite{regret_for_all_balls} extends the analysis in \cite{bandit_algorithms_book} to show minimax $\Omega(\sqrt{T})$ expected regret on $L^p$ balls for $p \in (1, \infty)$, by relying on specific properties of these unit balls. In contrast, we provide a lower bound that applies to rich action sets and general noise distributions that our algorithm targets, proving the minimax order-optimality of TRAiL. To our knowledge such a broad analysis is new for infinite-armed linear bandit problems. We achieve this result using a Bayesian version of Cram\'{e}r-Rao inequality, inspired by the analysis of dynamic pricing in \cite{keskin2014dynamic} and linear quadratic regulators in \cite{ziemann2021uninformative,new_lqr_result_arxiv}.

Our lower-bound analysis offers unique insights into the relationship between rates for inference of the underlying parameter and rates for regret growth. While inference and control technically define different objectives, one anticipates sufficient inference to be a fundamental prerequisite for reliable control performance, not just for linear bandits, but more broadly for adaptive control problems. Such connections between inference and control has been examined for $K$-armed contextual bandits in \cite{estimation_regret} and the adaptive control of linear quadratic regulators in \cite{ziemann2021uninformative,new_lqr_result_arxiv}. Our analysis shows a similar connection in linear bandits. Informally, we show that 
\begin{align}
\text{Inference Quality} \lesssim \text{Cumulative Regret},
\label{eq:informal.1}
\end{align}
implying that one cannot infer too fast without sacrificing cumulative regret. The second observation of our analysis is that
\begin{align}
\left( \text{Cumulative Regret}\right) \left(\text{Inference Quality}\right)\gtrsim T
\label{eq:informal.2}
\end{align}
holds on average. 
That is, one cannot achieve an $\Ocal(\sqrt{T})$ regret without inference quality of the same order, measured through the growth rate of $\E^\varrho[\lambda_{\min}(\E^\pi[V_T])] \sim \Omega(\sqrt{T})$, where the outer expectation is taken with respect to a prior over $\Theta$ and the actions are selected using policy $\pi$. These two (informal) inequalities produce a universal $\Ocal(\sqrt{T})$ minimax lower bound on cumulative regret for a wide array of linear bandit problems. As a byproduct, it also implies that inference cannot lag behind regret for minimax regret-optimal algorithms, at least asymptotically. This result aligns with observations made in \cite{rich_action_spaces} for action sets  with locally constant Hessians.

Inspired by \cite{rich_action_spaces}, we examine $L^p$ unit norm balls as action spaces in Section \ref{sec:Lp.balls} that define smooth approximations to the $L^\infty$ unit norm ball. For  $p>2$, their boundaries are characterized by variable Hessians; \cite{rich_action_spaces} provides $\Omega({T}^s)$ lower bounds on $\lambda_{\min}(\E[V_T])$ with $s\in(0, 1/2]$. They even  demonstrate an example where a regret-optimal algorithm produces $\approx T^{2/5}$ growth rate for the minimum eigenvalue of the design matrix with such action sets. We show that in the Bayesian setting, such a behavior can only be short-lived. How long such behavior lasts depends on how close the action set gets to breaking the assumptions of our analysis. Asymptotically, rates for inference quality and regret growth satisfy \eqref{eq:informal.2}, which for $\Ocal(\sqrt{T})$ regret growth forces an $\Omega(\sqrt{T})$ asymptotic growth rate on inference quality. To conclude, our work provides a unifying lens to study inference guarantees and lower bounds on cumulative regret in the asymptotic regime.

\subsection{Summary of Contributions}
 \begin{itemize}[leftmargin=*]
     \item  \emph{Forced Exploration Revisited:} We propose a novel forced exploration algorithm for the linear bandit problem that enjoys a high-probability inference guarantee on the unknown parameter, $\theta^\star$, and an order-optimal upper bound on cumulative regret.

    \item \emph{A Universal Lower-Bound:} We propose a lower bound on the expected cumulative regret for any algorithm whose analysis covers a wide variety of action sets and noise distributions.

    \item \emph{Connection between Inference and Control:} Our analysis showcases the intricate connection between inference of the underlying parameter and control performance to infinite-armed linear bandits, showing that order-optimal control cannot exist with too fast or too slow inference. 

    \item \emph{Illusion of Low-Cost Control:} For certain action sets, order-optimal control does not seem to necessitate the same quality of inference as more well-behaved action sets demand. We show that when the action sets adhere to certain regularity conditions, such behavior can only be transient and must vanish asymptotically.
 \end{itemize}

\subsection{Paper Organization}
We present the algorithm in Section \ref{sec:algo} and provide a roadmap of the main results in Section \ref{sec:main_results}. Then, we investigate properties of the action set $\Acal$ in Section \ref{sec:geometric_properties} that prove useful in our analyses throughout the paper. In Section \ref{sec:inference}, we derive the high probability result on guaranteed inference quality of the underlying parameter. The upper bound on regret is presented in Section \ref{sec:regret} and the universal lower bound on both inference and regret are derived in Section \ref{sec:lowerBoundRegret}. Performance comparison of the algorithm with other methods in the literature are demonstrated in Section \ref{sec:numerics}. Then in Section \ref{sec:Lp.balls}, we delve deeper into the connection between inference and regret and present experiments on the unit $L^p$-norm balls as action sets. The paper concludes in Section 10.
\section{The Algorithm}
\label{sec:algo}
We propose an action selection policy that plays a perturbed version of the optimal action based on the current RLS estimate. The algorithm is illustrated in Figure \ref{fig_problem_definition} and formally presented in Algorithm \ref{alg:FELP}.

At time $t$, we calculate $\hat{\theta}_t$ from past actions/observations $\{a_1,Y_1,...,a_{t-1}, Y_{t-1}\}$ using \eqref{eq:Vt.def}, and then compute the reward-maximizing action, $a^{\star}(\hat{\theta}_t)$ by solving the convex optimization problem in \eqref{linear_bandits_optimization} with $\theta^\star$ replaced by its current estimate $\hat{\theta}_t$. Since the cost is linear, this action lies on the boundary of the convex set $\Acal$. Then, we find the plane tangent to the boundary of $\Acal$ at $a^\star(\hat{\theta}_t)$. We randomly perturb away from $a^\star(\hat{\theta}_t)$ on this tangent plane, and project the perturbed action back on the boundary of $\Acal$. We adopt the convention that $\mu_t^1$ is a unit-norm vector, normal to the surface of $\Acal$ at $a^\star(\hat{\theta}_t)$. The perturbations are generated along $\{\mu_t^2,...,\mu_t^n\}$ that define an orthonormal basis for the tangent plane of $\Acal$ at $a^\star(\hat{\theta}_t)$. To generate the perturbations, we sample $\nu^i_t$ from a common distribution $\mathcal{D}_t$, sampled independently for each orthonormal direction $\mu^i_t$ and independently of the history. Then, we construct $a_t$ as
\begin{align}
    a_t := \projA{a^\star(\hat{\theta}_t) + \sum_{i=2}^n \nu^i_t \mu^i_t},
\end{align}
where $\proj$ is the projection operator.  Knowledge about the underlying parameter grows with time, and naturally, the perturbations are scaled to vanish over time.

\begin{figure}[t]
    \centering
    \includegraphics[width=0.4\textwidth]{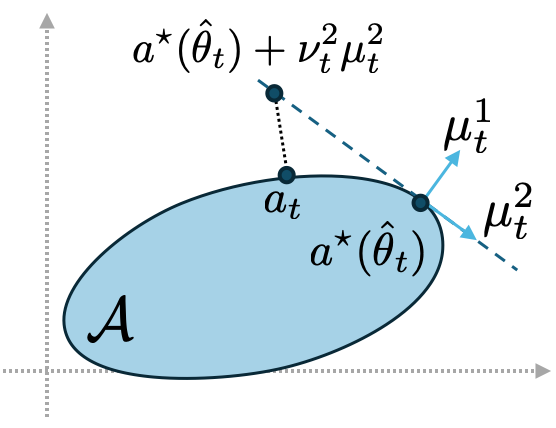}
    \caption{Visualization of action formation at time $t$ via $a_{t} = \projA{a^\star(\hat{\theta}_t) + \nu^2_t \mu^2_t}$ for a 2-dimensional convex action set $\Acal$.}
    \label{fig_problem_definition}
\end{figure}
The intuition behind TRAiL lies in the geometric properties of the action set that we catalog in Section \ref{sec:geometric_properties}. Choosing an action in a direction allows us to gather information about the reward available in that direction as it reveals how well that action aligns with the governing parameter $\theta^\star$. Indeed, if one chooses a single action $a$ for a number of rounds, the sample average of the rewards will provide an estimate of the alignment between that action and $\theta^\star$. The myopic choice $a^\star(\hat{\theta}_t)$ thus yields information about the alignment of $a^\star(\hat{\theta}_t)$ with $\theta^\star$, albeit in a noisy fashion. As long as possible governing parameters and their reward-maximizing actions are somewhat aligned, playing $a^\star(\hat{\theta}_t)$ then also provides information about how well $\hat{\theta}_t$ aligns with $\theta^\star$.
The goal of exploration is to move away from that action, but not too much, lest it accrues regret. In TRAiL, we explore \emph{tangentially} away from $a^\star(\hat{\theta}_t)$ by a random amount. As
Lemma \ref{lemma_nab} will reveal, the normal $\mu^1_t$ to the surface of $\Acal$ at $a^\star(\hat{\theta}_t)$ is $\propto \hat{\theta}_t$. Moving tangentially from  $a^\star(\hat{\theta}_t)$ loosely permits us to collect information away from $\hat{\theta}_t$. This movement, however, renders the perturbed action infeasible, forcing us to project back to $\Acal$. As one might expect, the projection must neither hinder exploration generated from the walk on the tangent plane, nor deviate too much in performance from that obtained with $a^\star(\hat{\theta}_t)$. Our assumptions on $\Acal$, essentially encode these intuitions that facilitate the derivation of a collection of geometric properties pertaining to $\Acal$ in Section \ref{sec:geometric_properties}. Being the foundation of our intuition behind the design of TRAiL, these properties play a vital role in its ensuing analysis. 

To present the result, we must impose a collection of assumptions on the action set $\Acal$, the candidate underlying parameter set $\Theta$, and $\Dcal_t$.
The following notation will prove useful in stating the assumptions. 
Let $\Bcal_{a}(m)$ denote the ball centered at $a$ with radius $m$.  Consider the filtration ${\Fcal}_{t=0}^T$, where $\Fcal_{t-1}$ as the $\sigma$-algebra generated by the past actions/observations before time $t$, with $\Fcal_0$ being the empty set. Define the notation $\E_{t-1}[\cdot] = \E[\cdot | \Fcal_{t-1}]$. Let $\Acal^\star$ be the set of candidate maximizing actions, i.e., $\Acal^\star := \left\{ \argmax_{a\in\Acal} \; \theta^{\top} a |  \theta \in \Theta \right\}$, and let $    \Acal^\star(m) := \{x \in \Bcal_{a}(m) | a \in \Acal^\star\}$. For any vector $z\neq 0$, let $\mathring{z}:=z/\vnorm{z}$.

\begin{algorithm}
\caption{TRAiL (Tangential Randomization in Linear Bandits) Algorithm}
\begin{algorithmic}
\State \textbf{Data:} $\hat{\theta}_1$, $T$, $\lambda$
\State $V_0 \gets \lambda I$; $t \gets 1 $ 
\While{$t \leq T$}
\State $a^\star(\hat{\theta}_t) \gets \arg\max_{a \in \Acal} \ \hat{\theta}_t^\top a$  
\State Find an orthonormal basis $\{\mu_t^1,...,\mu_t^n \}$ of $\mathbb{R}^{n}$ with $\mu_t^1 = \mathring{\nabla}g(a^\star(\hat{\theta}_t))$
\State $a_t \gets \projA{a^\star(\hat{\theta}_t)+ \sum_{i = 2}^n \nu_t^i \mu_t^i}$, where $\nu_t^i \sim \mathcal{D}_t$, $i \in \{2,3,...,n\}$ 
\State Observe $Y_t$
\State $V_{t} \gets V_{t-1} + a_t a_t^\top$
\State $\widehat{\theta}_{t+1} \gets V_{t}^{-1} \sum_{s=1}^{t} a_{s} Y_{s}$
\State $t \gets t + 1$
\EndWhile
\end{algorithmic}
\label{alg:FELP}
\end{algorithm}

\begin{assumption}[Parameter Space]
\label{assumption_of_parameter_set}
$\theta^\star  \in \Theta$, where $\Theta$ is bounded,
$\theta_{\min} := \min_{\theta \in \Theta}\| \theta\| > 0$, and $
    \theta_{\max} := \max_{\theta \in \Theta}\| \theta\| < \infty$.
\end{assumption}

\begin{assumption}[Noise Process]
\label{assumption_noise}
$\varepsilon_t$ is zero-mean $M$-subgaussian for some $M > 0$ that satisfies $   \E_{t-1}\left[Y_{t}\right]= \theta^{\star\top} a_t.$
\end{assumption}

\begin{assumption}[Action Set]
\label{assumption_of_action_set}
 $\Acal \subset \Rset^n$ is compact and is defined by $\Acal := \left\{x \in \mathbb{R}^{n} |  g(x) \leq 0\right\}$
for a 3- times differentiable strongly convex function $g:\Rset^n \to \Rset$ such that

\begin{enumerate}[leftmargin=*,label=(\alph*)]
    \item There exists $m'>0$, $\nabla_g^{\min} > 0$, and $\lambda_{\max}^\Delta > \lambda_{\min}^\Delta > 0$,
    \begin{gather}
    \label{assumption_of_g}   
    \nabla_g^{\max} \geq \|\nabla g(a)\| \geq \nabla_g^{\min}, 
    \;
    \lambda_{\min}^\Delta \leq \lambda_{\min}(\Delta g(a)),
    \;
    \lambda_{\max} (\Delta g(a)) \leq \lambda_{\max}^\Delta
    \end{gather}
    for all $a \in \Acal^\star(m')$ where $\nabla g$ and $\Delta g$ denote the gradient and the Hessian of $g$, respectively. 
    \item There exists $a_{\max} \geq  a_{\min} > 0$ such that 
\begin{align}
    a_{\min} \leq \|a\| \leq a_{\max} \; \text{  for all }  a \in \Acal^\star(m'). 
\end{align}
 \item There exists $\alpha_\Acal \in (0, \pi/2)$, such that 
    \begin{align}
    0 < \arccos(\mathring{a}^{',\top} \mathring{\nabla}g(a)) \leq \alpha_\Acal <\arctan\left(\frac{a_{\min}}{n a_{\max}} \right) \text{ for all } a \in \Acal^\star(m'), {a'} \in \mathcal{B}_{m'}(a).
    \end{align}
\end{enumerate}
\end{assumption}

To state our assumptions on the noise process, we need the following notation.
\begin{gather}
\varphi_g := \nabla_g^{\min} / \hmax, 
\;
m_\varphi^2 := {\min\left\{\frac{\varphi_g^2}{2}, \sqrt{ \frac{1}{4} \varphi_g^4 + (m')^2 \varphi_g ^2} - \frac{1}{2} \varphi_g^2\right\}}\label{m_varphi_definition},
\\
\gamma := \frac{\pi}{4} + \frac{1}{2} \arccos\left( \min \left \{ \frac{\sqrt{3}\varphi_g}{8 n a_{\max}} , \frac{\sqrt{3}}{2}\right \}   \right), 
\;
\overline{m}_{\varphi} := \min \left\{ m_\varphi, a_{\min} \sin\left(\frac{\pi}{4} - \frac{\gamma}{2}\right) \right\}. 
\end{gather}
Then, by definition, $\gamma \in [\pi/3, \pi/2)$. 

\begin{assumption}[Perturbation]
\label{assumption_perturbation}
For $1 \leq t \leq T$, $\xi  \sim \Dcal_t$ where $\Dcal_t$ is a $D/\sqrt{t}$- subgaussian distribution symmetric around the origin and $\xi$ satisfies 
$
\E_{t-1}[\xi^2] = D/\sqrt{t}
$
where 
\begin{align}
D \leq \min \left\{a_{\min} \cos(\alpha_{\A}), 2 a_{\min}^2 \cos(\gamma),  \frac{ \overline{m}_{\varphi}^2}{2n \log(2n)} \right\}.
\end{align}
\end{assumption}
Assumptions \ref{assumption_of_parameter_set} and \ref{assumption_noise} are standard in the linear bandit literature, where the noise process is commonly assumed to be subgaussian, and both the action and the parameter spaces are bounded.\footnote{We remark that $\vnorm{a} \geq a_{\min}$ might appear to rule out popular action set choices such as the unit circle. However, we only need $\Acal^\star(m')$ to be bounded away from the origin, and not $\Acal$.} 
The latter ensures a finite worst-case expected regret, which is essential for the analysis of many algorithms including ours. By $\varepsilon_t$ being $M$-subgaussian, we mean that its tail is dominated by that of $\Ncal(0,M)$--a normal distribution with zero mean and variance $M$.
Our lower bounds in Section \ref{sec:lowerBoundRegret} are derived  with even less restrictive noise processes. 

Assumption \ref{assumption_of_action_set} defines the nature of strong convexity of the function whose sub-level set defines $\Acal$. Similar assumptions have appeared in \cite{AbbasiYadkori2009ForcedExplorationBA, IDS_for_linear_partial_monitoring, rich_action_spaces} in the bandit setting and the online optimization literature, e.g., see \cite{curvature_offline_online_optimization,projection_free_optimization_on_uniformly_convex_sets,JMLR:v18:17-079}. The last among these assumptions ensures that the actions on the surface and the normals to the surface of $\Acal$ at said actions are aligned enough that information generated along the tangent plane from an action defines directions that are sufficiently rich for exploration away from that action. Assumption \ref{assumption_perturbation} is natural in that the perturbation that drives exploration must be designed based on how smooth the surface of $\Acal$ is, and such noise should vanish with time. It should allow larger explorations with higher $D$ for smoother surfaces, for which a larger $\varphi_g$ acts as a proxy. In the empirical performance evaluation presented in Section \ref{sec:numerics}, we disregard the bound on $D$ at Assumption \ref{assumption_perturbation} and instead tune the hyper-parameter $D$. The empirical performance attests that for practical applications, Assumption \ref{assumption_perturbation} can be bypassed. It streamlines the theoretical derivation, however.

\section{Theoretical Results}
\label{sec:main_results}

\begin{itemize}[leftmargin=*]
    \item TRAiL works by adding perturbations that vanish at a certain rate over time in a way that it balances between  explorations and exploitation of the RLS-estimate. Our first main result, Theorem \ref{corollary_inference},  provides a matrix- martingale based methodology to obtain a high probability result on TRAiL's inference. This approach builds on a modified matrix Freedman inequality from \cite{tropp}, which may be of independent interest. Specifically, we show that  for all $(\delta,T)$ such that $\log^2(1/\delta) \leq T$ and $\delta \in (0,1)$, 
\begin{align}
\probb{\bigcap_{t \sim \log^2(1/\delta)}^{T} \left \{\vnorm{\hat{\theta}_{t + 1} - \theta^\star}^2 \lesssim \frac{ \log(t/\delta)}{\sqrt{t}} \right\}} \geq 1 - \frac{2\delta}{3}. \label{inference_of_theta.begin}
\end{align}
Here, we use the notation $\chi_1(t) \lesssim \chi_2(t)$ for two positive increasing functions of $t$ to indicate that a $t$-independent constant dominates  $\chi_1(t)/\chi_2(t)$ for  sufficiently large $t$. 
\item The nature of TRAiL's tangential exploration, combined with the inference result from Theorem \ref{corollary_inference}, then allows us to deduce a high-probability bound on expected regret in Theorem
\ref{theorem_regret} , stating 
$\probb{\mathscr{R}_{\theta^{\star}}(T) \lesssim  \sqrt{T}  \log(T) } \geq 1 - \frac{1}{T}
$
for a sufficiently large $T$.

\item To obtain the lower bounds, we delve deeper into the connections between inference and regret. First, we expand the set of considered noise processes and drop the subgaussianity restriction. Then, we prove in Theorem \ref{theorem_lower_bound} that one cannot achieve an arbitrarily good inference without hindering performance. In particular, we show that for any control policy $\pi$,  $  \E_{\theta^\star}^\pi\left[\mathscr{R}_{\theta^{\star}}(T)\right]\gtrsim \lambda_{\min}(\E_{\theta^\star}^\pi[V_T])$, where the expectations are conditioned on $\theta^\star$ and the policy $ \pi$.
 
\item Our analysis shows that the inference generated, examined by $\E^{\varrho}[\lambda_{\min}(\E^\pi[V_T])]$, and the expected regret $\E^{\varrho,\pi}\left[\mathscr{R}_{\theta^{\star}}(T)\right]$ are deeply connected, where $\theta^\star$ is sampled from a prior $\varrho$ over $\Theta$ and actions are played according to policy $\pi$. We show that if an algorithm produces $\E^{\varrho,\pi}\left[\mathscr{R}_{\theta^{\star}}(T)\right] \lesssim \Ocal(T^\alpha)$, it must be accompanied with $\E^{\varrho}[\lambda_{\min}(\E^\pi[V_T])] \gtrsim \Omega(T^{1- \alpha})$ asymptotically.

\item By utilizing the last result, we provide the lower bound, $\sup_{\theta^\star \in \Theta}\E_{\theta^\star}^\pi\left[\mathscr{R}_{\theta^{\star}}(T)\right] \gtrsim \sqrt{T/\mathscr{I}_{\ve}}$, where $\mathscr{I}_{\ve}$ captures a Fisher information-type quantity associated with the noise process. This result captures the well-known $\Omega(\sqrt{T})$ lower-bound on regret, but we achieve the same for general action sets and noise distributions with finite $\mathscr{I}_{\ve}$.
\end{itemize}
\section{Geometric Properties of the Action Set}
\label{sec:geometric_properties}

The key to perturbing and projecting without sacrificing exploitation of the RLS-estimate, while simultaneously gathering sufficient information to identify the underlying parameter $\theta^\star$ lies in the action set geometry. 
In this section, we identify these geometric properties that play a vital role in the ensuing analysis.
Lemma \ref{lemma_nab} establishes the  collinearity of $\nabla g(a^{\star}(\theta^{\star}))$, characterizing the direction of the normal to $\Acal$ at the reward-maximizing action. Lemma \ref{quadratic_upperbound} proves that the per-iteration regret from playing an action close to the optimum is of the same order as the squared distance between that action and the optimum. This result allows us to relate per-step regret to squared distances on the surface of $\Acal$.
In Lemma \ref{lemma_lower}, we demonstrate that for a tangential perturbation of length $m$ away from an action on the surface of $\Acal$, the length of the projection back to $\Acal$ is $\Ocal(m^2)$; this lemma ultimately translates to the order-wise insignificance of the projection operation to regret analysis for TRAiL. Lemma \ref{perturbation_upperbound} establishes the Lipschitz continuity of the reward-maximizing action map $a^{\star}$--a result that relates changes in the underlying parameter to its optimal action. Lipschitz continuity yields almost everywhere differentiability of $a^\star$. Assuming differentiability, we provide a closed-form solution to $\nabla a^\star(\theta^\star)$ in Lemma \ref{corollary_eigenvalues}.

\begin{lemma}
\label{lemma_nab}
Suppose Assumptions \ref{assumption_of_parameter_set} and \ref{assumption_of_action_set} hold. For all $\theta \in \Theta$, the reward-maximizing action $a^\star(\theta)= \argmax_{a\in\Acal} \ \theta^{\top} a$ is unique and $\nabla g(a^\star(\theta)) = \omega^\star(\theta ) \theta$ for some $ \omega^\star(\theta) \in \mathbb{R}^+$.
\end{lemma}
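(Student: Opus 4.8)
The plan is to exploit the first-order optimality (KKT) conditions for the convex program $\max_{a\in\Acal}\theta^\top a$, where $\Acal=\{x: g(x)\le 0\}$ with $g$ strongly convex, and to leverage Assumption \ref{assumption_of_parameter_set} (namely $\theta_{\min}>0$) to rule out degenerate cases. First I would establish \emph{uniqueness}: since $g$ is strongly convex, $\Acal$ is a strictly convex compact body (its boundary contains no line segments), and a linear functional $\theta^\top a$ with $\theta\neq 0$ attains its maximum over a strictly convex body at a unique boundary point; here $\theta\neq 0$ because $\|\theta\|\ge\theta_{\min}>0$. Compactness of $\Acal$ (Assumption \ref{assumption_of_action_set}) guarantees existence.

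Next I would write the Lagrangian $L(a,\omega)=\theta^\top a-\omega\, g(a)$ and invoke the KKT conditions. Since $g$ is strongly convex and, by Assumption \ref{assumption_of_action_set}(a), $\|\nabla g(a)\|\ge\nabla_g^{\min}>0$ on $\Acal^\star(m')$, Slater's condition holds and strong duality applies, so at the optimum $a^\star(\theta)$ there exists a multiplier $\omega^\star(\theta)\ge 0$ with
\begin{align}
\theta = \omega^\star(\theta)\,\nabla g\big(a^\star(\theta)\big), \qquad \omega^\star(\theta)\,g\big(a^\star(\theta)\big)=0. \label{eq:kkt.lemma1}
\end{align}
The crucial step is to show $\omega^\star(\theta)>0$ (strictly), which then lets me invert \eqref{eq:kkt.lemma1} to get $\nabla g(a^\star(\theta))=(\omega^\star(\theta))^{-1}\theta$, i.e. the claim with the scalar $1/\omega^\star(\theta)\in\Rset^+$ — and, incidentally, this forces $g(a^\star(\theta))=0$, confirming the maximizer lies on the boundary. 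If $\omega^\star(\theta)$ were zero, \eqref{eq:kkt.lemma1} would give $\theta=0$, contradicting $\|\theta\|\ge\theta_{\min}>0$. Hence $\omega^\star(\theta)>0$, and writing $\omega^\star(\theta)$ for the reciprocal multiplier (relabeling) gives $\nabla g(a^\star(\theta))=\omega^\star(\theta)\,\theta$ with $\omega^\star(\theta)>0$.

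The main obstacle — really the only subtlety — is making the KKT/strong-duality invocation rigorous for a generic strongly convex constraint function rather than citing it as folklore: one needs Slater's condition (there is an interior point, which holds since $\{g<0\}$ is nonempty for a strongly convex $g$ whose sublevel set is a genuine body) and the fact that the constraint gradient does not vanish at the optimum, so that the multiplier is well-defined and the stationarity equation is exactly \eqref{eq:kkt.lemma1}. An alternative, self-contained route avoiding duality is geometric: $a^\star(\theta)$ maximizes $\theta^\top a$ over $\Acal$ iff $\theta$ is an outward normal to the supporting hyperplane of $\Acal$ at $a^\star(\theta)$; since $\Acal=\{g\le 0\}$ with $\nabla g\neq 0$ on the boundary, the outward normal cone at a boundary point $a$ is exactly the ray $\Rset^+\nabla g(a)$, so $\theta\in\Rset^+\nabla g(a^\star(\theta))$, which is the claim. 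I would present whichever is cleaner given the paper's conventions; either way the geometric content is that strict convexity gives uniqueness and the nonvanishing gradient gives the clean proportionality to $\theta$.
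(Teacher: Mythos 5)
Your proposal is correct and follows essentially the same route as the paper: the proportionality $\nabla g(a^\star(\theta)) = \omega^\star(\theta)\,\theta$ comes from the KKT stationarity condition together with $\theta \neq 0$ (from $\theta_{\min}>0$) and the nonvanishing gradient, which forces a strictly positive multiplier. The only minor difference is in the uniqueness step, where you invoke strict convexity of the body $\Acal$ directly (a linear functional has a unique maximizer over a strictly convex compact set), whereas the paper derives the same conclusion from the first-order conditions combined with strict convexity of $g$; both arguments are valid and rest on the same geometric fact.
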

\begin{proof}
The Karush-Kuhn-Tucker optimality conditions for the convex optimization problem $\max_a \theta^\top a$, subject to ${g(a) \leq 0}$ yield
\begin{align}
    g(a^\star(\theta)) \leq 0,
    \;
    -\theta + \omega' \nabla g(a^\star(\theta)) = 0,
    \;
    \omega' \geq 0.
\end{align}
Recall that $\theta \neq 0$ as $0 \notin \Theta$. Then, the guaranteed existence of $\nabla_g^{\min}>0$ from Assumption \ref{assumption_of_parameter_set} implies that $\omega' > 0$. 

For some $\theta \in \Theta$, suppose $
\{a_1,a_2\} \in {\argmax}_{a \in \Acal} \  \theta^{\top} a$. The above analysis guarantees the existence of $\omega'_1 > 0$ such that $\nabla g(a_1) = \theta/\omega'_1$ and
\begin{align}
    \nabla g(a_1)^\top (a_2 - a_1) = \frac{1}{\omega'_1} \theta^\top(a_2 - a_1) = \frac{1}{\omega'_1} (\theta^\top a_2 - \theta^\top a_1) = 0.
\end{align}
Similarly, $\nabla g(a_2)^\top (a_1 - a_2) = 0$.
From the strict convexity of $g$, we then have
\begin{align}
    a_1 \geq a_2 + \nabla g(a_2)^\top (a_1 - a_2) = a_2, 
    \;
    a_2 \geq a_1 + \nabla g(a_1)^\top (a_2 - a_1) = a_1,
\end{align}
yielding $a_1 = a_2$. This completes the proof.
\end{proof}

This lemma combined with the strong convexity of $g$ gives rise to the following quadratic bounds on regret for actions on the surface of $\Acal$ that are close to the optimum.

\begin{lemma} \label{quadratic_upperbound}
Suppose Assumptions \ref{assumption_of_parameter_set} and \ref{assumption_of_action_set} hold. Let $\theta \in \Theta$, and  $x \in \Bcal_{a^\star(\theta)}(m') \cap \Acal$ such that $g(x) = 0$. Then, 
\begin{align}
    \dfrac{\theta_{\min}\hmin}{2 \nmax}\|a^\star(\theta) - x\|^2
    \leq
    \theta^\top a^\star(\theta) - \theta^\top  x \leq  \frac{\theta_{\max}}{2\varphi_g} \|a^\star(\theta) - x\|^2.
\end{align}
\end{lemma}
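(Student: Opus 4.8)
The plan is to leverage Lemma~\ref{lemma_nab}, which tells us $\nabla g(a^\star(\theta)) = \omega^\star(\theta)\,\theta$ with $\omega^\star(\theta) > 0$, and then sandwich the linear quantity $\theta^\top a^\star(\theta) - \theta^\top x$ using a second-order Taylor expansion of $g$ around $a^\star(\theta)$ together with the strong-convexity/smoothness bounds on the Hessian $\Delta g$ provided in Assumption~\ref{assumption_of_action_set}(a). The key observation is that since $g(x) = 0 = g(a^\star(\theta))$ (the latter because the optimum lies on the boundary), a Taylor expansion of $g$ along the segment from $a^\star(\theta)$ to $x$ gives
\begin{align}
0 = g(x) - g(a^\star(\theta)) = \nabla g(a^\star(\theta))^\top (x - a^\star(\theta)) + \tfrac12 (x - a^\star(\theta))^\top \Delta g(\zeta) (x - a^\star(\theta))
\end{align}
for some $\zeta$ on that segment. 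Rearranging and using $\nabla g(a^\star(\theta)) = \omega^\star(\theta)\theta$, this yields
\begin{align}
\theta^\top a^\star(\theta) - \theta^\top x = \frac{1}{2\omega^\star(\theta)} (x - a^\star(\theta))^\top \Delta g(\zeta) (x - a^\star(\theta)),
\end{align}
so the per-step regret is exactly a quadratic form in $\|a^\star(\theta) - x\|$ scaled by $1/(2\omega^\star(\theta))$.

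**Next** I would bound the quadratic form from both sides: $\lambda_{\min}^\Delta \|a^\star(\theta) - x\|^2 \le (x - a^\star(\theta))^\top \Delta g(\zeta)(x - a^\star(\theta)) \le \lambda_{\max}^\Delta \|a^\star(\theta) - x\|^2$, which is legitimate provided $\zeta \in \Acal^\star(m')$; this holds because both endpoints lie in $\Bcal_{a^\star(\theta)}(m')$ (by hypothesis $x$ does, and $a^\star(\theta) \in \Acal^\star \subseteq \Acal^\star(m')$), and the set $\Bcal_{a^\star(\theta)}(m')$ is convex so the whole segment—hence $\zeta$—stays inside. **Then** the only remaining task is to bound $\omega^\star(\theta)$ above and below. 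Taking norms in $\nabla g(a^\star(\theta)) = \omega^\star(\theta)\theta$ gives $\omega^\star(\theta) = \|\nabla g(a^\star(\theta))\| / \|\theta\|$, so using $\nabla_g^{\min} \le \|\nabla g(a^\star(\theta))\| \le \nabla_g^{\max}$ and $\theta_{\min} \le \|\theta\| \le \theta_{\max}$ (Assumptions~\ref{assumption_of_parameter_set}, \ref{assumption_of_action_set}(a)), we get $\nabla_g^{\min}/\theta_{\max} \le \omega^\star(\theta) \le \nabla_g^{\max}/\theta_{\min}$. Combining: the lower bound becomes $\frac{\lambda_{\min}^\Delta}{2 \omega^\star(\theta)}\|\cdot\|^2 \ge \frac{\lambda_{\min}^\Delta \theta_{\min}}{2 \nabla_g^{\max}}\|\cdot\|^2$, matching the claimed $\frac{\theta_{\min}\hmin}{2\nmax}$; the upper bound becomes $\frac{\lambda_{\max}^\Delta}{2\omega^\star(\theta)}\|\cdot\|^2 \le \frac{\lambda_{\max}^\Delta \theta_{\max}}{2\nabla_g^{\min}}\|\cdot\|^2$, which equals $\frac{\theta_{\max}}{2\varphi_g}\|\cdot\|^2$ after recalling the definition $\varphi_g = \nabla_g^{\min}/\lambda_{\max}^\Delta$ from \eqref{m_varphi_definition}.

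**The main obstacle**, such as it is, is bookkeeping rather than conceptual: I must be careful that the intermediate point $\zeta$ from the Taylor/mean-value step genuinely lies in the region $\Acal^\star(m')$ where the Hessian bounds are assumed to hold, and—more subtly—that the integral-form remainder $\int_0^1 (1-s)\, (x-a^\star)^\top \Delta g(a^\star + s(x-a^\star))(x-a^\star)\,ds$ is used if one prefers to avoid invoking a single mean-value point for a vector-valued setting (the scalar function $g$ along a segment makes the ordinary mean-value form fine, but the integral form is cleaner and each integrand point is manifestly in the convex ball $\Bcal_{a^\star(\theta)}(m')$). No genuinely hard estimate arises; the proof is a direct assembly of Lemma~\ref{lemma_nab}, Taylor's theorem with the Hessian bounds, and the norm bounds on $\theta$ and $\nabla g$.
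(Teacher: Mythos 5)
Your proposal is correct and follows essentially the same route as the paper: a second-order Taylor expansion of $g$ around $a^\star(\theta)$ using $g(x)=g(a^\star(\theta))=0$, the collinearity $\nabla g(a^\star(\theta))=\omega^\star(\theta)\theta$ from Lemma~\ref{lemma_nab}, the Hessian eigenvalue bounds on $\Acal^\star(m')$, and the norm bounds on $\theta$ and $\nabla g$. The only cosmetic difference is that you carry the exact identity with $1/(2\omega^\star(\theta))$ and bound $\omega^\star(\theta)=\|\nabla g(a^\star(\theta))\|/\|\theta\|$ directly, whereas the paper normalizes to $\mathring{\theta}=\mathring{\nabla}g(a^\star(\theta))$ first; the two are equivalent.
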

\begin{proof}
Taylor's expansion of $g$ around $a^\star(\theta)$ yields
\begin{align}
\underbrace{g(x)-g(a^\star(\theta))}_{=0} = \nabla g(a^\star(\theta))^\top (x - a^\star(\theta)) + \dfrac{1}{2}(x-a^\star(\theta))^\top \Delta g(\xi) (x - a^\star(\theta)).
\label{eq:taylor.x.a}
\end{align}
for some $\xi$ on the line segment joining $a^\star(\theta)$ and $x$. Since $x \in B_{a^\star(\theta)}(m')$, then $\xi \in B_{a^\star(\theta)}(m')$. Using Assumption \ref{assumption_of_action_set} and $\mathring{\theta} = \mathring{\nabla}g(a^\star(\theta))$ from Lemma \ref{lemma_nab}, we 
infer from \eqref{eq:taylor.x.a} that
\begin{align}
0 \leq \mathring{\theta}^\top ( a^\star(\theta) - x) \leq \dfrac{\hmax}{2 \nmin} \|x-a^\star(\theta)\|^2.
\end{align}
The upper bound then follows from the relation 
$\theta^\top ( a^\star(\theta) - x) \leq \theta_{\max} \mathring{\theta}^\top ( a^\star(\theta) - x)$ and the definition of $\varphi_g$. The lower bound follows similarly and is omitted.
\end{proof}

Recall that $\Acal$ and $\Theta$ are compact. Thus, $\|\nabla g(a)\|$ and $\|\theta\|$ attain maximum and minimum values over $\Acal$ and $\Theta$, respectively. Since $0 \notin \Theta$, the optimal Lagrange multiplier $\omega^\star(\theta)$ from Lemma \ref{lemma_nab} then admit upper and lower bound as $\theta$ varies over $\Theta$. Said succinctly, $\omega^\star( \theta ) \in [\ul{\omega}, \ol{\omega}] \subset \Rset^+$ for all $\theta \in \Theta$, where 
\begin{align}
\underline{\omega} = 
\frac{\nmin }{\theta_{\max}}, 
\;
\overline{\omega} = \frac{\nmax}{\theta_{\min}}.
\end{align}

Our algorithm advocates moving along a vector in the tangent plane of $a^\star(\hat{\theta}_t)$ and then projecting it back on $\Acal$ at each time $t$, where $\hat{\theta}_t$ is the estimate of $\theta^\star$ at time $t$. As explained, key to our analysis therefore, is understanding the effect of moving on the tangent plane and projecting it back to the action set on metrics relevant to inference and control in the sequel. Next, we present a sequence of such results.

\begin{lemma}\label{lemma_lower} 
Suppose Assumptions \ref{assumption_of_parameter_set} and \ref{assumption_of_action_set} hold. Let $\varphi_g := \nabla_g^{\min} / \hmax$. Then, for $a \in \Acal^{\star}$, a unit vector $\mu \perp \nabla g(a)$, and  $m \leq m_\varphi$,
\begin{gather}
   \left \|\projA{a + m\mu} - a - m \mu \right \| \leq m^2/{\varphi_g}. 
   \label{eq:statement_1_in_lemma}
\end{gather}
\end{lemma}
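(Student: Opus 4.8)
The plan is to control the projection distance by producing an explicit feasible point near $a+m\mu$ and then invoking the variational characterization of the projection: since $\projA{p}=\argmin_{x\in\Acal}\vnorm{p-x}$, one has $\vnorm{p-\projA{p}}\le\vnorm{p-x}$ for \emph{every} $x\in\Acal$. Thus, writing $p:=a+m\mu$, it suffices to exhibit a single $\tilde q\in\Acal$ with $\vnorm{p-\tilde q}\le m^2/\varphi_g$.

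The candidate $\tilde q$ I would use is obtained by retreating from $p$ along the inward normal at $a$: set $\tilde q:=p-\delta\,\mathring{\nabla}g(a)=a+m\mu-\delta\,\mathring{\nabla}g(a)$ for a scalar $\delta>0$ to be chosen. Because $\mu\perp\nabla g(a)$ and $\mu,\mathring{\nabla}g(a)$ are unit vectors, orthogonality (Pythagoras) gives $\vnorm{\tilde q-a}^2=m^2+\delta^2$, and hence $\vnorm{p-\tilde q}=\delta$. To force $\tilde q\in\Acal$, i.e. $g(\tilde q)\le 0$, I would Taylor-expand $g$ about $a$: for some $\xi$ on the segment $[a,\tilde q]$,
\begin{align}
g(\tilde q)=\underbrace{g(a)}_{=0}+\nabla g(a)^\top(\tilde q-a)+\tfrac12(\tilde q-a)^\top\Delta g(\xi)(\tilde q-a)\le -\delta\,\nmin+\tfrac12\hmax\,(m^2+\delta^2),
\end{align}
using $\nabla g(a)^\top\mu=0$, $\vnorm{\nabla g(a)}\ge\nmin$, and the Hessian bound $\lambda_{\max}(\Delta g(\xi))\le\hmax$ from Assumption~\ref{assumption_of_action_set}. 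Dividing by $\hmax$ and recalling $\varphi_g=\nmin/\hmax$, the right-hand side is $\le 0$ exactly when $\delta^2-2\varphi_g\delta+m^2\le 0$; since $m\le m_\varphi$ forces $m^2\le\varphi_g^2/2<\varphi_g^2$, this quadratic in $\delta$ has real roots, and I would take the smaller admissible value $\delta_\star:=\varphi_g-\sqrt{\varphi_g^2-m^2}$. Rationalizing yields $\delta_\star=m^2/(\varphi_g+\sqrt{\varphi_g^2-m^2})\le m^2/\varphi_g$, which is precisely the bound on $\vnorm{p-\tilde q}=\delta_\star$ that we need.

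The delicate point — and the one I expect to be the main obstacle — is the legitimacy of the Hessian bound, since Assumption~\ref{assumption_of_action_set} only controls $\Delta g$ on $\Acal^\star(m')$; the expansion above requires the whole segment $[a,\tilde q]$, in particular $\xi$, to lie within distance $m'$ of $a\in\Acal^\star$. With $\delta=\delta_\star\le m^2/\varphi_g$ this amounts to $\vnorm{\tilde q-a}^2=m^2+\delta_\star^2\le m^2+m^4/\varphi_g^2\le (m')^2$, and completing the square shows $m^2+m^4/\varphi_g^2\le(m')^2$ is equivalent to $m^2\le\sqrt{\tfrac14\varphi_g^4+(m')^2\varphi_g^2}-\tfrac12\varphi_g^2$ — which is exactly what $m\le m_\varphi$ guarantees through the definition \eqref{m_varphi_definition} of $m_\varphi$. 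Chaining the steps gives $\vnorm{\projA{p}-p}\le\vnorm{\tilde q-p}=\delta_\star\le m^2/\varphi_g$. Everything else is routine: the nearest-point inequality, one second-order Taylor expansion, and a one-variable quadratic.
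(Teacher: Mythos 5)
Your proposal is correct and follows essentially the same route as the paper's proof: both construct the feasible point $a+m\mu-\delta\,\mathring{\nabla}g(a)$, verify $g\le 0$ there via a second-order Taylor expansion using the Hessian bound on $\Acal^\star(m')$ (with the segment-containment check $m^2+\delta^2\le(m')^2$ reducing to exactly the definition of $m_\varphi$), and invoke the nearest-point property of the projection. The only difference is cosmetic: the paper fixes $\delta=m^2/\varphi_g$ outright, whereas you optimize $\delta$ as the smaller root of the quadratic and then bound it by $m^2/\varphi_g$.
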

\begin{proof} Let $x := a + m \mu - \eta \mathring{\nabla}g(a)$ with $\eta = \frac{1}{\varphi_g} m^2$. Then, $\vnorm{x-a}^2 = m^2 + \eta^2$. Taylor's expansion of $g$ around $a$ then gives
\begin{align}
\begin{aligned}
    g(x) & = g(a) + [\nabla g(a)]^\top (m \mu - \eta \mathring{\nabla}g(a)) + \frac{1}{2}(m \mu - \eta \mathring{\nabla}g(a))^\top \Delta g(\xi) (m \mu - \eta \mathring{\nabla}g(a))\\
    & =  -\eta \|\nabla g(a)\| + \frac{1}{2}(m \mu - \eta \mathring{\nabla}g(a))^\top \Delta g(\xi) (m \mu - \eta \mathring{\nabla}g(a))
\end{aligned} \label{proof_add_1}
\end{align}
for some $\xi$ that lies on the line segment between $a$ and $x$, since $g(a) = 0$ as $a \in \Acal^{\star}$. Given the choice of $m, \eta$, we have
\begin{align}
    m^2 + \eta^2 - (m')^2 = \frac{1}{\varphi_g^2}\left({m^2} + \frac{\varphi_g^2}{2} \right)^2 - \frac{\varphi_g^2}{4} - (m')^2 \leq 0.
\end{align}
Thus, both $x$ and $\xi$ lie in $\in B_{a}(m')$ with $\vnorm{x-a}^2 = m^2 + \eta^2$ and hence from \eqref{proof_add_1}, 
\begin{align}
\begin{aligned}
    g(x) 
     \leq -\eta \nmin + \frac{\hmax}{2}(m^2 + \eta^2) \leq 0,
\end{aligned}
\end{align}
which follows from \eqref{m_varphi_definition}. This implies that $x = a + m\mu - \eta \mathring{\nabla}g(a) \in \Acal$ and 
\begin{align}
    \vnorm{\projA{a + m\mu} - a - m\mu } \leq \vnorm{x - a - m\mu} = \eta.
\end{align}
This completes the proof.
\end{proof}
A key result enabled by the geometry of the action space is the local Lipschitz continuity of $a^{\star}$ that will play a key role in our regret analysis.

\begin{lemma}\label{perturbation_upperbound}

Suppose Assumptions \ref{assumption_of_parameter_set} and \ref{assumption_of_action_set} hold. Then, for $\theta_1 \in \Theta$ and $\theta_2 \in \Bcal_{\theta_1}(m_\theta) \cap \Theta$ for 
$m_\theta < \min\left\{\theta_{\min}, \dfrac{\hmin \theta_{\min}^2}{2 \nmax \theta_{\max}} m' \right\}$,
\begin{align}
    \vnorm{a^\star(\theta_1) - a^\star(\theta_2)} \leq 
    \frac{2 \nmax\theta_{\max}}{\hmin\theta^2_{\min}}  \vnorm{\theta_1 - \theta_2}. 
\end{align}
\end{lemma}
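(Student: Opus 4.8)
\emph{The plan} is to combine the first-order optimality of the two reward-maximizing actions with the quadratic lower bound on per-step regret from Lemma~\ref{quadratic_upperbound}. Write $a_i:=a^\star(\theta_i)$, $i=1,2$; by the KKT analysis in the proof of Lemma~\ref{lemma_nab} the Lagrange multiplier is strictly positive at each maximizer, so $g(a_1)=g(a_2)=0$ and both points lie on $\partial\Acal$. Suppose for the moment that $\vnorm{a_1-a_2}\le m'$, so that $a_2\in\Bcal_{a_1}(m')\cap\Acal$ with $g(a_2)=0$ and Lemma~\ref{quadratic_upperbound} applies to the pair $(\theta_1,a_2)$. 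Its lower bound gives $\frac{\theta_{\min}\hmin}{2\nmax}\vnorm{a_1-a_2}^2\le\theta_1^\top a_1-\theta_1^\top a_2$. Adding the nonnegative quantity $\theta_2^\top a_2-\theta_2^\top a_1\ge 0$ (optimality of $a_2$ for $\theta_2$, since $a_1\in\Acal$) turns the right-hand side into $(\theta_1-\theta_2)^\top(a_1-a_2)\le\vnorm{\theta_1-\theta_2}\,\vnorm{a_1-a_2}$ by Cauchy--Schwarz. Cancelling one factor of $\vnorm{a_1-a_2}$ (the claim is trivial when $a_1=a_2$) gives $\vnorm{a_1-a_2}\le\frac{2\nmax}{\theta_{\min}\hmin}\vnorm{\theta_1-\theta_2}$, which already implies the asserted bound since $\theta_{\max}\ge\theta_{\min}$. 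Applying Lemma~\ref{quadratic_upperbound} symmetrically to $(\theta_2,a_1)$ as well would improve the constant by a factor of two, but this is not needed.

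\emph{The step requiring care} is the standing assumption $\vnorm{a_1-a_2}\le m'$, without which Lemma~\ref{quadratic_upperbound} cannot be invoked. Note that the hypothesis on $m_\theta$ is exactly the inequality $L\,m_\theta<m'$ with $L:=\frac{2\nmax\theta_{\max}}{\hmin\theta_{\min}^2}$, which is what makes a bootstrap self-consistent. Using that the maximizer is unique (Lemma~\ref{lemma_nab}) and $\Acal$ is compact with a jointly continuous objective, $\theta\mapsto a^\star(\theta)$ is continuous on $\Theta$. On the set of $\vartheta\in\Bcal_{\theta_1}(m_\theta)\cap\Theta$ for which $\vnorm{a^\star(\theta_1)-a^\star(\vartheta)}\le m'$, the relevant points all lie in $\Bcal_{a^\star(\theta_1)}(m')\subseteq\Acal^\star(m')$, where Assumption~\ref{assumption_of_action_set} supplies the constants that Lemma~\ref{quadratic_upperbound} uses; so the computation above is valid there and yields $\vnorm{a^\star(\theta_1)-a^\star(\vartheta)}\le L\vnorm{\theta_1-\vartheta}\le L\,m_\theta<m'$, \emph{strictly}. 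Continuity of $a^\star$ then prevents the bound $\vnorm{a^\star(\theta_1)-a^\star(\vartheta)}\le m'$ from ever first being violated inside $\Bcal_{\theta_1}(m_\theta)\cap\Theta$, so it holds for every such $\vartheta$, in particular for $\theta_2$. Alternatively, one can first run the same computation with the coarse bounds on $\nabla g$ and $\Delta g$ available on the compact set $\Acal$ to obtain a crude \emph{global} Lipschitz estimate for $a^\star$, and then use it to conclude $\vnorm{a_1-a_2}\le m'$ once $m_\theta$ is small.

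\emph{Main obstacle.} Essentially all of the first paragraph is a short convexity computation; the only genuine subtlety is the circular-looking requirement $\vnorm{a_1-a_2}\le m'$ needed to enter the local regime of Lemma~\ref{quadratic_upperbound}, which is precisely why the statement fixes the explicit threshold on $m_\theta$. I expect that to be the step a careful write-up must dwell on.
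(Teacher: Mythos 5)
Your proof is correct, and it takes a genuinely different -- and substantially simpler -- route than the paper's. The paper proves this lemma by constructing an auxiliary optimization problem (maximize $\vnorm{a-a_1}^2$ subject to the two first-order conditions satisfied by $a_2$), solving it via KKT, and then invoking the triangle inequality for arc-cosine and the law of cosines to bound the optimal value by $\frac{1}{\eta}\sqrt{1-(\mathring{\theta}_1^\top\mathring{\theta}_2)^2} \leq \frac{\theta_{\max}}{\eta\,\theta_{\min}^2}\vnorm{\theta_1-\theta_2}$. You instead use the classical strong-monotonicity argument for argmax maps over strongly convex sublevel sets: add the strong-convexity lower bound $\frac{\theta_{\min}\hmin}{2\nmax}\vnorm{a_1-a_2}^2 \leq \theta_1^\top(a_1-a_2)$ (Lemma \ref{quadratic_upperbound}) to the optimality inequality $\theta_2^\top(a_2-a_1)\geq 0$ and apply Cauchy--Schwarz. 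This is a three-line computation, and it actually yields the sharper constant $\frac{2\nmax}{\theta_{\min}\hmin}$, which dominates the stated $\frac{2\nmax\theta_{\max}}{\hmin\theta_{\min}^2}$ only through the trivial inequality $\theta_{\max}\geq\theta_{\min}$; the paper's geometric detour buys nothing here.

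On the one delicate point -- the standing hypothesis $\vnorm{a_1-a_2}\leq m'$ needed to enter the local regime of Assumption \ref{assumption_of_action_set} -- you are right that this is where care is required, and your treatment is in fact more explicit than the paper's, which simply derives the Lipschitz bound under that hypothesis and then asserts that the choice of $m_\theta$ "ensures" it, a visibly circular step. Your open-closed bootstrap repairs this, but note that it implicitly requires $\Bcal_{\theta_1}(m_\theta)\cap\Theta$ (or at least a path from $\theta_1$ to $\theta_2$ within the region where the local constants apply) to be connected, which Assumption \ref{assumption_of_parameter_set} does not guarantee; your fallback via a crude global Lipschitz constant (using global strong convexity of $g$ on the compact set $\Acal$) avoids connectedness but would force a smaller threshold on $m_\theta$ than the one stated. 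Since the paper's own proof leaves this step entirely unjustified, this is a shared, minor gap rather than a defect specific to your argument.
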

\begin{proof}
Within this proof, we simplify notation to $a_i := a^\star(\theta_i)$ for $i=1, 2$. We prove the result under the assumption that $\|a_2 - a_1\| \leq m'$ and then will design $m_\theta$ in such a way that our assumption holds for $\theta_1 \in \Theta$ and $\theta_2 \in B_{\theta_1}(m_\theta)$.
When $\theta_1 \propto \theta_2$, we have $a_2 = a_1$ and the result trivially holds. Consider henceforth the case where $\theta_1$ and $\theta_2$ are not collinear. For any $\xi$ on the line segment between $a_1$ and $a_2$, $\Delta g(\xi) \succeq \hmin I$, and hence, Taylor's expansion of $g$ around $a_1$ and Assumption \ref{assumption_of_action_set} allow us to conclude 
\begin{align}
g(a_2) \geq  g(a_1) + \nabla g(a_1)^\top(a_2 - a_1) + \dfrac{1}{2} \hmin \|a_2 - a_1\|^2.
\end{align}
Since $a_1$ and $a_2$ lie on the surface of $\A$, $g(a_1) = g(a_2) = 0$. Thus, we have
\begin{align}
 \nabla g(a_1)(a_2 - a_1) +  \dfrac{1}{2} \hmin \vnorm{a_2 - a_1}^2 \leq 0.
\end{align}
Since by our hypothesis $a_2 \in \Bcal_{a_1}(m')$, Assumption \ref{assumption_of_action_set} allows us to infer
\begin{align}
\mathring{\nabla} g(a_1)^\top (a_2 - a_1)     
+ \underbrace{\dfrac{\hmin}{2 \nmax} }_{:=\eta > 0} \vnorm{a_2 - a_1}^2 \leq 0. 
\end{align}
Recall from Lemma \ref{lemma_nab} that $\theta \propto \nabla g(a^\star(\theta))$ for all $\theta \in \Theta$, and hence, 
\begin{align} \label{first_constraint}
    \mathring{\theta}_1^\top (a_2 - a_1) + \eta \left \|a_2 - a_1 \right\|^2 \leq 0. 
\end{align}
Furthermore, convexity of $g$ and Lemma \ref{lemma_nab} give 
\begin{align} \label{second_constraint}
\nabla g(a_2)^\top (a_1 - a_2) \leq 0 \implies \mathring{\theta}_2^\top (a_1 - a_2) \leq 0.
\end{align}
From \eqref{first_constraint} and \eqref{second_constraint}, $a_2$ is feasible in the optimization problem
\begin{align}
\begin{aligned}
&\underset{a}{\text{maximize}} &&  \vnorm{a-a_1}^2,
\\
&\text{such that} && \mathring{\theta}_1^\top (a - a_1) +  \eta \vnorm{a - a_1}^2 \leq 0, 
\quad
\mathring{\theta}_2^\top(a - a_1) \geq 0.
\end{aligned}
\end{align}
Replace $a-a_1$ with $x$, associate Lagrange multipliers $\zeta_1, \zeta_2 \geq 0$ for the two constraints. Using the Karush-Kuhn-Tucker optimality conditions, we will bound $\vnorm{x^\star}$ from above
and use $\vnorm{a_2 - a_1} \leq \vnorm{x^\star}$ from the feasibility of $a_2$ to deduce the result, where $z^\star$ denotes any variable $z$ at optimality. Specifically, consider the Lagrangian function,
\begin{align}
    \Lcal(x, \zeta_1, \zeta_2) 
    := -\vnorm{x}^2 + \zeta_1(\eta \vnorm{x}^2 + \mathring{\theta}_1^\top x) - \zeta_2 \mathring{\theta}_2^\top x.
\end{align}
Setting $\partial \Lcal/\partial x$ to zero at $x^\star, \zeta_1^\star, \zeta_2^\star$, we get
\begin{align}
    2x^\star (1 - \zeta_1^\star \eta) = \zeta_1^\star \mathring{\theta}_1 - \zeta_2^\star \mathring{\theta}_2.
    \label{eq:x.grad.cond}
\end{align}
If $\zeta_1^\star = 1/\eta$, then the right-hand side of \eqref{eq:x.grad.cond} vanishes, but that requires $\mathring{\theta}_1$ and $\mathring{\theta}_2$ to be collinear, which they are not. Hence, we infer
\begin{align}
 x^\star  = \dfrac{\zeta_1^\star}{2(1 - \zeta_1^\star \eta)} \mathring{\theta}_1 -  \dfrac{\zeta_2^\star}{2(1 - \zeta_1^\star \eta)}\mathring{\theta}_2.   
\end{align}
The feasibility of $x^\star$ gives
\begin{gather}
\mathring{\theta}_1^\top x^\star +  \eta \vnorm{x^\star}^2 \leq 0 
\implies \mathring{\theta}_1^\top x^\star \leq 0
\implies \dfrac{\zeta_1^\star}{1 - \zeta_1^\star \eta} - \dfrac{\zeta_2^\star}{1 - \zeta_1^\star \eta} \mathring{\theta}_1^\top \mathring{\theta}_2 \leq 0,
\label{eq:lhs_1}
\\
-\mathring{\theta}_2^\top x^\star \leq 0
\implies -\dfrac{\zeta_1^\star}{1 - \zeta_1^\star \eta}\mathring{\theta}_1^\top \mathring{\theta}_2 + \dfrac{\zeta_2^\star}{1 - \zeta_1^\star \eta} \leq 0. \label{eq:lhs_2}
\end{gather}
We argue that $\mathring{\theta}_1^\top \mathring{\theta}_2 > 0$ for $m_\theta < \theta_{\min}$. To that end, we have
\begin{align}
    \vnorm{\theta_1} - m_\theta \leq \vnorm{\theta_2} \leq \vnorm{\theta_1} + m_\theta 
\end{align}
for $\theta_2 \in B_{\theta_1}(m_\theta)$. Then, the law of cosines yields
\begin{align}
\begin{aligned}
    \mathring{\theta}_1^\top \mathring{\theta}_2 
    &= \frac{1}{ 2 \vnorm{\theta_1}\vnorm{\theta_2}} \left( \vnorm{\theta_1}^2 + \vnorm{\theta_2}^2 - \vnorm{\theta_1 - \theta_2}^2\right), 
    \\
    &\geq 
    \frac{ \vnorm{\theta_1}^2 + (\vnorm{\theta_1}-m_\theta)^2  - m_\theta^2}{2 \vnorm{\theta_1}(\vnorm{\theta_1} + m_\theta) }
    \\
    &= \frac{ \vnorm{\theta_1} - m_\theta}{\vnorm{\theta_1} + m_\theta}
    \\
    & \geq \frac{ \theta_{\min} - m_\theta}{\theta_{\min} + m_\theta},
\end{aligned}
\end{align}
for $\theta_1 \in \Theta$ and $\theta_2 \in \Bcal_{\theta_1}(m_\theta) \cap \Theta$, proving the positivity of $\mathring{\theta}_1^\top \mathring{\theta}_2$. Next, we argue that both constraints of the optimization problem are active at $x^\star$.  
Towards that goal, assume to the contrary that $\zeta_2^\star = 0$. Then necessarily, $\zeta_1^\star = 0$ from \eqref{eq:lhs_1} and \eqref{eq:lhs_2}. In that scenario, $x^\star = 0$, implying that $\vnorm{a_1 - a_2} \leq \vnorm{x^\star} = 0$. Since $\theta_1$ and $\theta_2$ are not collinear, we gather $a_1 \neq a_2$ from Lemma \ref{lemma_nab} to arrive at a contradiction. Thus, $\zeta_2^\star > 0$. We multiply \eqref{eq:lhs_1} with $\mathring{\theta}_1^\top\mathring{\theta}_2$ and add it to \eqref{eq:lhs_2} to obtain
\begin{align}
\label{result_1}
\dfrac{\zeta_2^\star}{1 - \zeta_1^\star \eta} \left( 1 - (\mathring{\theta}_1^\top \mathring{\theta}_2 )^2 \right) \leq 0. 
\end{align} 
Again, since $\mathring{\theta}_1$ and $\mathring{\theta}_2$ are not collinear and $\zeta_2^\star >0$, the above relation implies $\zeta_1^\star > 1/\eta > 0$. The positivity of $\zeta_1^\star, \zeta_2^\star$ and the complementary slackness conditions for optimality of $x^\star$ give that both constraints of the optimization problem are active, i.e.,
\begin{align}
\mathring{\theta}_1^\top x^\star +  \eta \vnorm{x^\star}^2 = 0, 
\quad
\mathring{\theta}_2^\top x^\star = 0.\label{eq:constraint_equality}
\end{align}
Since arc-cosine defines a metric over a unit sphere, according to \cite[Equation (6)]{schubert2021triangle}, we deduce
\begin{align}
\arccos \left( \mathring{\theta}_1^\top {\mathring{x}^\star} \right) 
\leq \arccos \left( \mathring{\theta}_1^\top \mathring{\theta}_2 \right) + \arccos \left(\mathring{\theta}_2^\top {\mathring{x}^\star} \right) 
= \arccos \left( \mathring{\theta}_1^\top \mathring{\theta}_2 \right) + \dfrac{\pi}{2},
\end{align}
which together with \eqref{eq:constraint_equality} gives
\begin{align}
0 = \mathring{\theta}_1^\top \left(\dfrac{x^\star}{\|x^*\|}\right) +  \eta \vnorm{x^\star} 
& \geq \cos\left(\arccos \left( \mathring{\theta}_1^\top \mathring{\theta}_2 \right) + \dfrac{\pi}{2} \right) +  \eta \vnorm{x^\star}.
\end{align}
Thus, we deduce the following characterization of $x^\star$,
\begin{align}
\vnorm{x^*} \leq \dfrac{1}{\eta} \sin\left(\arccos \left( \mathring{\theta}_1^\top \mathring{\theta}_2 \right)  \right) = \frac{1}{\eta}\sqrt{1-(\mathring{\theta}_1^\top \mathring{\theta}_2)^2}.
\label{eq:xstar.eta}
\end{align}
Recall that $\theta_1 \in \Theta$ and $\theta_2 \in \Bcal_{\theta_1}(m_\theta) \cap \Theta$. For $m_\theta < \theta_{\min}$, 
the law of cosines yields
\begin{align}
\begin{aligned}
    {1 - (\mathring{\theta}_1^\top \mathring{\theta}_2)^2} 
    &= 1 - \frac{ \left(\vnorm{\theta_1}^2 + \vnorm{\theta_2}^2 - \vnorm{\theta_1 - \theta_2}^2\right)^2}{ 4 \vnorm{\theta_1}^2 \vnorm{\theta_2}^2} 
    \\
    &= 
    \frac{1}{ 4 \vnorm{\theta_1}^2 \vnorm{\theta_2}^2} \left( \vnorm{\theta_1 - \theta_2}^2 - (\vnorm{\theta_1} - \vnorm{\theta_2})^2 \right) 
    \left( (\vnorm{\theta_1} + \vnorm{\theta_2})^2  - \vnorm{\theta_1 - \theta_2}^2\right)
    \\
    &\leq  
    \frac{1}{ 4 \vnorm{\theta_1}^2 \vnorm{\theta_2}^2}  \vnorm{\theta_1 - \theta_2}^2  
    (\vnorm{\theta_1} + \vnorm{\theta_2})^2
    \\
    &\leq  
    \frac{\theta_{\max}^2}{  \theta_{\min}^4}  \vnorm{\theta_1 - \theta_2}^2. 
\end{aligned}
\end{align}
This bound in \eqref{eq:xstar.eta}, combined with $\vnorm{a_2 - a_1} \leq \vnorm{x^\star}$ yields the required result,
\begin{align}
    \vnorm{a_2 - a_1} 
    \leq \vnorm{x^\star} 
    \leq  {    \frac{\theta_{\max}}{  \eta \theta_{\min}^2} \vnorm{\theta_1 - \theta_2}}.
\end{align}
We choose  $m_\theta < \eta m'  \theta_{\min}^2/\theta_{\max}$ to ensure that $\vnorm{a_2 - a_1} \leq m'$ as needed for our analysis. This completes the proof.
\end{proof}

Lemmas \ref{quadratic_upperbound} and \ref{perturbation_upperbound} give rise to the expected regret of playing $a^\star(\theta)$ instead of $a^\star(\theta^{\star})$ to be in the order of $\vnorm{\theta^{\star} - \theta}^2$ when $\theta$ is in the locality of $\theta^{\star}$--an observation that enabled the analysis in \cite{AbbasiYadkori2009ForcedExplorationBA} and ours. The final result of this section gives a closed form solution to $\nabla a^\star$, crucial to our universal lower bounds derived in Section \ref{sec:lowerBoundRegret}. Recall the definition of $\omega^{\star}(\theta)$ from Lemma \ref{lemma_nab}. 
\begin{lemma}\label{corollary_eigenvalues}
 Suppose Assumptions  \ref{assumption_of_parameter_set} and \ref{assumption_of_action_set} hold. If $a^\star(\theta)$, $\omega^\star(\theta)$, and $\Delta_{\theta} := \Delta g(a^\star(\theta))$ are differentiable over $\Theta$, then
\begin{align}
\nabla a^{\star}(\theta ) = \omega^\star(\theta)\left(\Delta_{\theta}^{-1} -  \frac{\Delta_{\theta}^{-1} \theta  \theta^{\top} \Delta_{\theta}^{-1}}{\theta^{\top} \Delta_{\theta}^{-1} \theta}\right). 
\end{align}
\end{lemma}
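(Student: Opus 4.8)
The plan is to obtain $\nabla a^\star(\theta)$ by implicitly differentiating the two identities that pin down the pair $(a^\star(\theta),\omega^\star(\theta))$. From the KKT analysis in the proof of Lemma~\ref{lemma_nab}, for every $\theta\in\Theta$ the reward-maximizing action lies on the boundary of $\Acal$ and satisfies the stationarity condition, i.e.
\begin{align}
g(a^\star(\theta)) = 0,\qquad \nabla g(a^\star(\theta)) = \omega^\star(\theta)\,\theta,\qquad \omega^\star(\theta) > 0.
\end{align}
Under the hypothesized differentiability of $a^\star$, $\omega^\star$, and $\Delta_\theta=\Delta g(a^\star(\theta))$, and since $g\in C^3$, both identities may be differentiated in $\theta$ by the chain rule. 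Here $\nabla a^\star(\theta)$ denotes the Jacobian $[\partial a^\star_i/\partial\theta_j]_{ij}$ and $\nabla\omega^\star(\theta)$ the gradient column vector.

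Differentiating the first identity gives $\nabla g(a^\star(\theta))^\top\nabla a^\star(\theta)=0$, and substituting $\nabla g(a^\star(\theta))=\omega^\star(\theta)\theta$ together with $\omega^\star(\theta)>0$ yields the orthogonality relation $\theta^\top\nabla a^\star(\theta)=0$. Differentiating the second identity, the left-hand side has Jacobian $\Delta_\theta\,\nabla a^\star(\theta)$ while the map $\theta\mapsto\omega^\star(\theta)\theta$ has Jacobian $\omega^\star(\theta) I+\theta\,\nabla\omega^\star(\theta)^\top$, so
\begin{align}
\Delta_\theta\,\nabla a^\star(\theta) = \omega^\star(\theta)\, I + \theta\,\nabla\omega^\star(\theta)^\top .
\end{align}
Strong convexity of $g$ makes $\Delta_\theta$ invertible (indeed $\Delta_\theta\succeq\lambda_{\min}^\Delta I\succ 0$, since $a^\star(\theta)\in\Acal^\star\subseteq\Acal^\star(m')$), hence $\nabla a^\star(\theta)=\omega^\star(\theta)\Delta_\theta^{-1}+\Delta_\theta^{-1}\theta\,\nabla\omega^\star(\theta)^\top$.

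It remains to eliminate $\nabla\omega^\star(\theta)$. Left-multiplying the last expression by $\theta^\top$ and using $\theta^\top\nabla a^\star(\theta)=0$ gives $0=\omega^\star(\theta)\theta^\top\Delta_\theta^{-1}+(\theta^\top\Delta_\theta^{-1}\theta)\,\nabla\omega^\star(\theta)^\top$; since $\Delta_\theta^{-1}\succ 0$ and $\theta\neq 0$ we have $\theta^\top\Delta_\theta^{-1}\theta>0$, so $\nabla\omega^\star(\theta)^\top=-\omega^\star(\theta)\,\theta^\top\Delta_\theta^{-1}/(\theta^\top\Delta_\theta^{-1}\theta)$. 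Substituting this back into the expression for $\nabla a^\star(\theta)$ yields exactly the claimed formula. The argument is essentially bookkeeping; the only points requiring care are keeping the gradient/Jacobian conventions straight — in particular getting the rank-one term $\theta\,\nabla\omega^\star(\theta)^\top$ right when differentiating $\omega^\star(\theta)\theta$ — and noting that the chain rule is licensed by $g\in C^2$ together with the assumed differentiability of $a^\star$ and $\omega^\star$. I do not anticipate a genuine obstacle here, since the harder analytic question of where these derivatives exist was already settled via the Lipschitz bound in Lemma~\ref{perturbation_upperbound}.
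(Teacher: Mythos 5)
Your proof is correct and follows essentially the same route as the paper's: implicit differentiation of the KKT stationarity and primal-feasibility identities, followed by elimination of the multiplier's derivative via the tangency relation $\nabla g(a^\star(\theta))^\top \nabla a^\star(\theta)=0$. The only cosmetic difference is that you differentiate the stationarity condition written as $\nabla g(a^\star(\theta))=\omega^\star(\theta)\theta$ and solve for $\nabla\omega^\star$, whereas the paper differentiates the equivalent form $\theta=\omega^\star(\theta)^{-1}\nabla g(a^\star(\theta))$ and solves for $\nabla\,\omega^\star(\theta)^{-1}$; the algebra is the same.
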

\begin{proof} In this proof, all derivatives are taken with respect to $\theta$. 
 The stationary condition of the optimization problem \eqref{linear_bandits_optimization} yields Lemma \ref{lemma_nab} where $\theta := \omega(\theta)^{-1} \nabla g(a^\star(\theta))$, and the primal feasibility condition gives $g(a^\star(\theta)) = 0$. Differentiating the stationarity condition reveals
\begin{align}
\nabla \theta = I_{n \times n} =  \omega(\theta)^{-1} \Delta_\theta \nabla a^\star(\theta)  + \nabla g(a^\star(\theta)) \nabla \omega(\theta)^{-1}, \label{differentiated_stationarity}
\end{align}
whereas differentiating the primal feasibility condition yields \begin{align}\nabla g(a^\star(\theta))^\top \nabla a^\star(\theta) = 0. \label{differentiated_primary}
\end{align}
Throughout, $\omega(\theta) \geq \underline{\omega} > 0$ and $\Delta_{\theta}$ is invertible.
Multiplying \eqref{differentiated_stationarity}  with $\nabla g(a^\star(\theta))^\top \Delta_{\theta}^{-1}$ gives
\begin{subequations}
\begin{align}
\nabla g(a^\star(\theta))^\top  \Delta_{\theta}^{-1} & = \omega(\theta)^{-1} \nabla g(a^\star(\theta))^\top  \Delta_{\theta}^{-1} \Delta_{\theta} \nabla  a^\star(\theta)+ \|\nabla g(a^\star(\theta))\|_{\Delta_{\theta}^{-1}}^2 \nabla  \omega(\theta)^{-1}
\\ 
& = \|\nabla g(a^\star(\theta))\|_{\Delta_{\theta}^{-1}}^2 \nabla  \omega(\theta)^{-1},
\end{align}
\end{subequations}
where we have used \eqref{differentiated_primary}. Thus, Lemma \ref{lemma_nab} implies
\begin{subequations}
\begin{align}
&\omega(\theta) \theta^\top  \Delta_{\theta}^{-1} = \omega(\theta)^2 \|\theta\|_{\Delta_{\theta}^{-1}}^2 \nabla \omega(\theta)^{-1}
\notag
\\
& \implies \nabla \omega(\theta)^{-1} = \frac{\omega(\theta)^{-1} \theta^\top  \Delta_{\theta}^{-1}}{\theta^\top \Delta_{\theta}^{-1}\theta}\\
& \implies \nabla g(a^\star(\theta)) \nabla \omega(\theta)^{-1} = \frac{\theta \theta^\top \Delta_{\theta}^{-1}}{\theta^\top \Delta_{\theta}^{-1} \theta}.\label{to_be_substituted}
\end{align}
\end{subequations}
Substituting \eqref{to_be_substituted} in \eqref{differentiated_stationarity} finalizes the proof.
\end{proof}

\section{Analysis of Guaranteed Inference}
\label{sec:inference}

At the heart of adaptive control algorithms for linear bandit problems lies the question of guaranteed inference of where the underlying governing parameter $\theta^{\star}$ is. The algorithm must explore \emph{enough} in order to ensure that confidence in $\theta^\star$ grows fast enough with time, without of course, compromising reward accumulation in the process. 
The goal of the present section is to prove the following result on the quality of the RLS estimate $\hat{\theta}_t$ through time. 
\begin{theorem}
\label{corollary_inference}
Suppose Assumptions  \ref{assumption_of_parameter_set}, \ref{assumption_noise}, \ref{assumption_of_action_set}, and \ref{assumption_perturbation} hold. Define
\begin{subequations}
\begin{align}
    c_0 &:= \max\left\{\frac{1}{2} - \frac{\sqrt{3}}{4}, \frac{1}{2} - \frac{\sqrt{3}n a_{\max}}{\varphi_g}\right \},
    \label{eq:c0.def}
    \\
    \rho_t\left({\delta}\right)
    &:= M\sqrt{ n\log \left [\frac{3 + 3t a_{\max}^2/\lambda }{\delta} \right]}+\sqrt{\lambda} \theta_{\max},
    \label{eq:rhoT.def}
    \\
    F(\delta) &:= \frac{16 a_{\max}^4}{9 D^2 c_0^2 }\log\left(\frac{3n}{\delta}\right)^2  \left( \frac{768  n}{c_0} + 2 \right)^2.
    \label{eq:Fdelta.def}
\end{align}
\end{subequations}
Consider Algorithm \ref{alg:FELP} with $\lambda = \frac{8 n D^2}{\overline{m}_{\varphi}^{2} } (c_0 n + 1)$,
For all $(\delta,T)$ such that $F(\delta) \leq T$ and $\delta \in (0,1)$, 
\begin{align}
\probb{\bigcap_{t = F(\delta)}^{T} \left \{\vnorm{\hat{\theta}_{t + 1} - \theta^\star}^2 \leq \frac{ 2\rho_t(\delta)^2}{D c_0 \sqrt{t}} \right\}} \geq 1 - \frac{2\delta}{3}. \label{inference_of_theta}
\end{align}
\end{theorem}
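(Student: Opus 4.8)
The plan is to derive \eqref{inference_of_theta} from two largely independent ingredients combined by a union bound. The first is the standard self-normalized confidence ellipsoid for regularized least squares: since $\vnorm{a_s}\le a_{\max}$, $\ve_s$ is $M$-subgaussian with $\E_{s-1}[Y_s]=\theta^{\star\top}a_s$ (Assumption \ref{assumption_noise}), and $\vnorm{\theta^\star}\le\theta_{\max}$ (Assumption \ref{assumption_of_parameter_set}), the bound of \cite{improved_algorithms_for_stochastic_bandits} gives, with probability at least $1-\delta/3$, that $\vnorm{\hat\theta_{t+1}-\theta^\star}_{V_t}^2\le\rho_t(\delta)^2$ for all $t$ simultaneously, where the logarithmic term in $\rho_t$ of \eqref{eq:rhoT.def} collects the self-normalization constants together with the determinant/trace bound $\det V_t\le(\lambda+ta_{\max}^2/n)^n$, and the additive $\sqrt\lambda\,\theta_{\max}$ is the regularization bias. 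The second ingredient is a high-probability lower bound $\lambda_{\min}(V_t)\ge\tfrac12 D c_0\sqrt t$ valid for all $t\ge F(\delta)$. Granting both, on the intersection of the two events we obtain, for every $t\in[F(\delta),T]$, that $\vnorm{\hat\theta_{t+1}-\theta^\star}^2\le\vnorm{\hat\theta_{t+1}-\theta^\star}_{V_t}^2/\lambda_{\min}(V_t)\le 2\rho_t(\delta)^2/(D c_0\sqrt t)$, which is exactly \eqref{inference_of_theta}, the failure probability being at most $\delta/3+\delta/3=2\delta/3$.

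The core of the argument is thus the eigenvalue lower bound. Decompose $V_t=\lambda I+P_t+M_t$, where $P_t:=\sum_{s=1}^t\E_{s-1}[a_s a_s^\top]$ is predictable and $M_t:=\sum_{s=1}^t W_s$, $W_s:=a_s a_s^\top-\E_{s-1}[a_s a_s^\top]$, is a matrix martingale. I would first establish the purely geometric bound $\lambda_{\min}(\lambda I+P_t)\ge D c_0\sqrt t$. Fix $s$ and condition on $\Fcal_{s-1}$, under which $a^\star(\hat\theta_s)$ is deterministic; write $p_s:=\sum_{i=2}^n\nu_s^i\mu_s^i$, so $\E_{s-1}[p_s]=0$ by symmetry of $\Dcal_s$ and $\E_{s-1}[p_s p_s^\top]=\tfrac{D}{\sqrt s}\big(I-\mu_s^1(\mu_s^1)^\top\big)$ by orthonormality of $\{\mu_s^i\}$ and $\E_{s-1}[(\nu_s^i)^2]=D/\sqrt s$ (Assumption \ref{assumption_perturbation}). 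On the event $\{\vnorm{p_s}\le\overline{m}_\varphi\}$, Lemma \ref{lemma_lower} gives $a_s=a^\star(\hat\theta_s)+p_s-e_s$ with $\vnorm{e_s}\le\vnorm{p_s}^2/\varphi_g$, while off this event -- which the bound $D\le\overline{m}_\varphi^2/(2n\log 2n)$ renders exponentially unlikely, uniformly in $s$, via a subgaussian tail on $\vnorm{p_s}^2=\sum_{i\ge2}(\nu_s^i)^2$ -- we simply use $a_s a_s^\top\succeq 0$. Expanding the outer product and discarding the bad event, for any unit vector $v$ we get $\E_{s-1}[(v^\top a_s)^2]\ge(v^\top a^\star(\hat\theta_s))^2+\tfrac{D}{\sqrt s}\big(1-(v^\top\mu_s^1)^2\big)-(\text{corrections of order }nD/\sqrt s\text{ from }e_s)$. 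I then split on whether $|v^\top\mu_s^1|\le\tfrac12$: if so, the perturbation term alone is at least $\tfrac{3D}{4\sqrt s}$; if not, then because Lemma \ref{lemma_nab} places the unit vector along $a^\star(\hat\theta_s)$ within angle $\alpha_\Acal$ of $\mu_s^1$, the arc-cosine triangle inequality together with Assumption \ref{assumption_of_action_set}(b)--(c) forces $(v^\top a^\star(\hat\theta_s))^2$ to exceed a strictly positive constant, which dominates $Dc_0/\sqrt s$ because $D$ is small by Assumption \ref{assumption_perturbation}. Tracking the $e_s$-corrections produces precisely the constant $c_0$ of \eqref{eq:c0.def}, so $\E_{s-1}[a_s a_s^\top]\succeq\tfrac{D c_0}{\sqrt s}I$; summing with $\sum_{s=1}^t s^{-1/2}\ge\sqrt t$ and adding $\lambda I$ gives $\lambda_{\min}(\lambda I+P_t)\ge D c_0\sqrt t$ (the specific $\lambda$ in the statement also absorbs the finite number of early-time correction terms).

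It remains to show $\lambda_{\max}(-M_t)\le\tfrac12 D c_0\sqrt t$ for all $t\ge F(\delta)$ with probability $\ge 1-\delta/3$. A direct matrix Freedman bound on $M_t$ is too weak: since $\vnorm{a_s}\le a_{\max}$, $\E_{s-1}[W_s^2]\preceq a_{\max}^2\E_{s-1}[a_s a_s^\top]$, so the predictable quadratic variation $\sum_s\E_{s-1}[W_s^2]\preceq a_{\max}^2 P_t$ has largest eigenvalue of order $t$ (the exploitation terms $a^\star(\hat\theta_s)(a^\star(\hat\theta_s))^\top$ accumulate linearly), which only yields a fluctuation of order $\sqrt t$ with a large prefactor -- useless against a target of $\tfrac12 D c_0\sqrt t$. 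The fix exploits that $a^\star(\hat\theta_s)$ is $\Fcal_{s-1}$-measurable, so the rank-one exploitation part lies entirely inside $P_t$ and never enters $M_t$: writing $a_s=a^\star(\hat\theta_s)+q_s$ with $q_s:=p_s-e_s$ and $\widetilde q_s:=q_s-\E_{s-1}[q_s]$, we split $M_t=\sum_s\big(a^\star(\hat\theta_s)\widetilde q_s^\top+\widetilde q_s(a^\star(\hat\theta_s))^\top\big)+\sum_s\big(q_s q_s^\top-\E_{s-1}[q_s q_s^\top]\big)$, whose predictable quadratic variations are $O(a_{\max}^2 nD\sum_s s^{-1/2})=O(a_{\max}^2 nD\sqrt t)$ and $O((nD)^2\sum_s s^{-1})=O((nD)^2\log t)$ respectively -- both $o(t)$. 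Because $\vnorm{q_s}$ is only subgaussian, I would then apply the modified matrix Freedman inequality established in this paper (building on \cite{tropp}), which in the regime where the deviation is no larger than the variance-to-scale ratio yields a genuinely Gaussian tail; with deviation $\tfrac12 D c_0\sqrt t$ this gives $\prob\big(\lambda_{\max}(-M_t)\ge\tfrac12 D c_0\sqrt t\big)\le n\exp\big(-c\,D c_0^2\sqrt t/(a_{\max}^2 n)\big)$ up to absolute constants. Since this probability decays like $\exp(-c'\sqrt t)$, summing over $t\ge F(\delta)$ costs only a constant factor, so the bound holds simultaneously for all $t\in[F(\delta),T]$ at total failure probability $\le\delta/3$ precisely when $F(\delta)$ is as in \eqref{eq:Fdelta.def} (the $(768n/c_0+2)^2$ and $\log^2(3n/\delta)$ factors there are the accumulated constants of this estimate and of the $e_s$-/bad-event bookkeeping). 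Combining with the geometric bound gives $\lambda_{\min}(V_t)\ge\lambda_{\min}(\lambda I+P_t)-\lambda_{\max}(-M_t)\ge D c_0\sqrt t-\tfrac12 D c_0\sqrt t=\tfrac12 D c_0\sqrt t$ for all $t\ge F(\delta)$, and the union with the confidence-ellipsoid event finishes the proof.

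I expect the martingale step to be the main obstacle: pushing the fluctuation of $M_t$ down to $o(\sqrt t)$, rather than the $\Theta(\sqrt t)$ that a black-box matrix concentration inequality on $V_t$ would give, is exactly what allows the $\sqrt t$ growth of $\lambda_{\min}(V_t)$ to survive, and it hinges both on the measurability-based splitting of $W_s$ and on a matrix concentration inequality tolerant of subgaussian (not merely bounded) increments. A close second is the constant accounting in the geometric step $P_t\succeq D c_0\sqrt t\,I$, where the projection displacement (Lemma \ref{lemma_lower}), the near-collinearity of $a^\star$ with the surface normal (Lemma \ref{lemma_nab} and Assumption \ref{assumption_of_action_set}(c)), and the conditional second moment of the perturbation must be balanced so that $c_0>0$ survives all corrections.
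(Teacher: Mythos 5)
Your proposal is correct and follows essentially the same route as the paper: a union bound combining the self-normalized confidence ellipsoid with a high-probability bound $\lambda_{\min}(V_t)\gtrsim D c_0\sqrt{t}$, the latter obtained by splitting $V_t$ into a predictable part that grows as $Dc_0\sqrt{t}$ (via the tangential perturbation covariance, the angle condition of Assumption \ref{assumption_of_action_set}(c), and the $O(m^2)$ projection displacement of Lemma \ref{lemma_lower}) and a matrix martingale whose predictable quadratic variation is only $O(a_{\max}^2 nD\sqrt{t})$ precisely because the $\Fcal_{s-1}$-measurable term $a^\star(\hat\theta_s)a^\star(\hat\theta_s)^\top$ cancels, after which a matrix Freedman inequality closes the argument. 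The only cosmetic differences are that the paper bounds the whole quadratic variation at once rather than splitting cross and quadratic terms, uses a maximal (uniform-in-$t$) form of the Freedman inequality rather than a union bound over $t$, and needs only bounded (not subgaussian) increments since the projected actions lie in the compact set $\Acal$.
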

In the rest of this section, we prove this result through a sequence of lemmas that we provide a roadmap for.
Recall that the estimate of $\theta^\star$ at time $t+1$ is calculated as $\hat{\theta}_{t+1} = V_t^{-1}(\sum_{s=1}^t a_s Y_s)$. The quality of the estimate then invariably depends on properties of the regressor matrix $V_t$. Prior literature on adaptive control of linear bandits abounds in results that characterize confidence ellipsoids around RLS estimates whose radius depends on the eigenvalues of $V_t$, particularly the rate of growth of its smallest eigenvalue, $\lambda_{\min}(V_t)$. Thus, through lemmas \ref{lemma_lower_bound_for_variation}, \ref{lemma_upper_bound_for_variation}, and \ref{lemma:probability_Ct}, we establish a probabilistic lower bound of the form $\lambda_{\min}(V_t)  \gtrsim \Omega(\sqrt{t})$. We then appeal to Lemma \ref{lemma:probability_Et} from \cite{improved_algorithms_for_stochastic_bandits} on said confidence ellipsoids to conclude a probabilistic bound on the quality of the RLS estimate required in Theorem \ref{corollary_inference}. To delve further into our proof technique, recall that $V_t = \lambda I + \sum_{s=1}^t a_s a_s^\top$. This is a stochastic matrix-valued process. In Lemma \ref{lemma_lower_bound_for_variation}, we establish a lower bound on the eigenvalues of the one-step \emph{expected} increment of the type $\lambda_{\min}(\Ex{a_s a_s^\top}) \gtrsim \Omega(1/\sqrt{s})$. Then, we upper bound the sum of the matrix ``variance'' of these one-step increments in Lemma \ref{lemma_upper_bound_for_variation} to ensure that these increments are close to their expected values. We then adapt a version of the matrix Freedman inequality, a concentration result for matrix-valued stochastic processes, from \cite{tropp} in Lemma \ref{matrix_freedman_improved} to arrive at a probabilistic bound of the form $\lambda_{\min}(V_t) \gtrsim \Omega(\sqrt{t})$ in Lemma \ref{lemma:probability_Ct}.

Before we present the formal proof, we include here the intuition behind the result through a simple exercise--we show that if we ignore the projection operation in deducing $a_{t}$ from $a^\star(\hat{\theta}_t)$, then $\lambda_{\min}(\E[V_t])$ indeed grows as $\Omega(\sqrt{t})$. To that end, notice that
\begin{align}
\begin{aligned}
\lambda_{\min}\left(\E[V_t]\right) & = \lambda_{\min}\left(\E\left[\sum_{s = 1}^t a_sa_s^\top\right] \right) + \lambda
\\
& \geq \lambda_{\min}\left(\sum_{s = 1}^t \E[ a_sa_s^\top]\right)
\\
& = \lambda_{\min}\left(\E\left[\sum_{s = 1}^t \E_{s-1}[ a_sa_s^\top]\right]\right)
\\
& \geq \E\left[\lambda_{\min}\left(\sum_{s = 1}^t \E_{s-1}[ a_sa_s^\top]\right)\right],
\end{aligned}
\end{align}
which utilizes the law of total expectation and Jensen's inequality. If we ignore the projection step, then 
\begin{align}
\E_{s-1}[a_s a_s^\top] \approx
 a^\star(\hat{\theta}_s) [a^\star(\hat{\theta}_s)]^\top +  \sum_{i = 2}^n \Ex{(\nu_s^i)^2} \mu_s^i \mu_s^{i,\top},
\end{align}
as the components of $\nu_s$ are independent. Hence, $\E_{s-1}[a_s a_s^\top]$ can be expressed as the summation of $n$ matrices, the last $n-1$ of which have orthogonal column spaces. The span of $\mu_s^2, \ldots, \mu_s^{n}$ forms an orthogonal complement to $\mu_s^1$ in $\Rset^n$. Provided $[a^\star(\hat{\theta}_s)]^\top \mu_s^1$ is sufficiently larger than $\Ex{(\nu_s^i)^2}$, we expect
\begin{align}
    \lambda_{\min}(\E_{s-1}[a_s a_s^\top]) \sim \Omega(\Ex{(\nu_s^i)^2}) = \Omega(1/\sqrt{s}),
\end{align}
where the last equality follows from the design of our perturbations.
Consequently, we expect its summation over $s=1$ to $s=t$ to exhibit an $\Omega(\sqrt{t})$ growth as we demand from $\lambda_{\min}(V_t)$. This derivation of course is premised on conflating $V_t$ with $\E[V_t]$ and intentionally overlooking the projection operation in each iteration. Nevertheless, this simple calculation unravels the intuition behind our design of the algorithm and why  estimation succeeds. The rest of the section develops the formal proof of Theorem \ref{corollary_inference}.

\begin{lemma}\label{lemma_lower_bound_for_variation}
Suppose Assumptions \ref{assumption_of_parameter_set}, \ref{assumption_of_action_set}, and \ref{assumption_perturbation} hold.  Let $c_0 := \max\left\{\frac{1}{2} - \frac{\sqrt{3}}{4}, \frac{1}{2} - \frac{\sqrt{3}n a_{\max}}{\varphi_g}\right \}$ and $c_2 :=  \frac{8 n D^2}{\overline{m}_{\varphi}^{2} }  (c_0 n + 1)$. Then,
\begin{align}
\sum_{s=1}^{t} \lambda_{\min }\left(\mathbb{E}_{s-1}\left[a_{s} a_{s}^{\top} \right]\right)  \geq Dc_0\sqrt{t} - c_2  .
\end{align}
\end{lemma}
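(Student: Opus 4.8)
The plan is to establish a one-step bound $\lambda_{\min}\big(\E_{s-1}[a_s a_s^\top]\big)\geq Dc_0/\sqrt{s}$ for all $s\geq s_\star$, where $s_\star$ is an explicit threshold, and then sum over $s$. Since $\sum_{s=1}^{t}s^{-1/2}\geq t\cdot t^{-1/2}=\sqrt{t}$ while $\sum_{s=1}^{s_\star-1}s^{-1/2}\leq 2\sqrt{s_\star}$, and on the first $s_\star-1$ steps we only use $\E_{s-1}[a_sa_s^\top]\succeq 0$, the sum is at least $Dc_0\sqrt{t}-2Dc_0\sqrt{s_\star}$; the lemma then follows once the argument below is carried out with a threshold obeying $2Dc_0\sqrt{s_\star}\leq c_2$. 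So everything reduces to the one-step estimate together with constant-tracking.

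For the one-step estimate, fix $s$ and condition on $\Fcal_{s-1}$: then $\bar a_s:=a^\star(\hat\theta_s)$, the unit normal $\mu_s^1=\mathring{\nabla}g(\bar a_s)$, and the orthonormal tangent frame $\mu_s^2,\dots,\mu_s^n$ are deterministic, and only $(\nu_s^2,\dots,\nu_s^n)\sim\Dcal_s$ is random. Set $m_s^2:=\sum_{i=2}^n(\nu_s^i)^2$ and $G_s:=\{m_s\leq m_\varphi\}$; by Assumption~\ref{assumption_perturbation} and a sub-exponential tail bound for $m_s^2$ (whose conditional mean is $(n-1)D/\sqrt s$), $\prob_{s-1}(G_s^c)$ is negligible once $s\geq s_\star$. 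On $G_s$, Lemma~\ref{lemma_lower} applied at $\bar a_s$ with perturbation $\sum_{i}\nu_s^i\mu_s^i$ yields $a_s=\bar a_s+\sum_{i=2}^n\nu_s^i\mu_s^i-\zeta_s$, with $\zeta_s$ essentially along $\mu_s^1$ and $\vnorm{\zeta_s}\leq m_s^2/\varphi_g$; off $G_s$ we retain only $\vnorm{\zeta_s}\leq \sup_{a\in\Acal}\vnorm a + m_s$. Since $\Dcal_s$ is symmetric, $\E_{s-1}[\sum_i\nu_s^i\mu_s^i]=0$, so writing a unit vector as $v=p\,\mu_s^1+v_\perp$ with $v_\perp\perp\mu_s^1$, $p^2+\vnorm{v_\perp}^2=1$, and using $v^\top\mu_s^i=v_\perp^\top\mu_s^i$ for $i\geq2$,
\begin{align}
v^\top\E_{s-1}[a_sa_s^\top]\,v=\big(v^\top\bar a_s-v^\top\E_{s-1}[\zeta_s]\big)^2+\Var{\textstyle\sum_{i=2}^n\nu_s^i(v_\perp^\top\mu_s^i)-v^\top\zeta_s}.
\end{align}
By $\E_{s-1}[(\nu_s^i)^2]=D/\sqrt s$ and Parseval in the tangent space, $\Var{\sum_i\nu_s^i(v_\perp^\top\mu_s^i)}=(D/\sqrt s)\vnorm{v_\perp}^2$, whereas $\vnorm{\E_{s-1}[\zeta_s]}$ and $\sqrt{\Var{v^\top\zeta_s}}$ are $O(nD\varphi_g^{-1}s^{-1/2})$, using $\E_{s-1}[m_s^4]=O(n^2D^2/s)$ plus a vanishing $G_s^c$-contribution.

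It remains to minimize the right-hand side over unit $v$, which splits by the size of $|p|$. By Assumption~\ref{assumption_of_action_set}(b)--(c) applied at $\bar a_s$, one has $(\mu_s^1)^\top\bar a_s\geq a_{\min}\cos\alpha_\Acal>0$ and the tangential part of $\bar a_s$ has norm at most $(\mu_s^1)^\top\bar a_s\cdot\tan\alpha_\Acal$, so $v^\top\bar a_s\geq(\mu_s^1)^\top\bar a_s\,(|p|-\vnorm{v_\perp}\tan\alpha_\Acal)$ whenever the bracket is nonnegative; as Assumption~\ref{assumption_of_action_set}(c) also gives $\tan\alpha_\Acal<a_{\min}/(na_{\max})$, the mean term is $\Omega(1)\gg D/\sqrt s$ once $|p|$ exceeds $\sin\alpha_\Acal$ by a fixed margin. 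In the complementary range $|p|\lesssim\sin\alpha_\Acal$ we have $\vnorm{v_\perp}^2=1-p^2\geq\cos^2\alpha_\Acal$, and Cauchy--Schwarz on the covariance gives
\begin{align}
\Var{\textstyle\sum_i\nu_s^i(v_\perp^\top\mu_s^i)-v^\top\zeta_s}\geq\Big(\sqrt{(D/\sqrt s)\vnorm{v_\perp}^2}-\sqrt{\Var{v^\top\zeta_s}}\Big)^2\geq(1-\epsilon_s)^2\cos^2\alpha_\Acal\,\frac{D}{\sqrt s},
\end{align}
with $\epsilon_s=O(n\sqrt D\,\varphi_g^{-1}s^{-1/4})\to0$; the transition $|p|\approx\sin\alpha_\Acal$ is handled by keeping both nonnegative terms of the identity. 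Choosing $s_\star$ so that $(1-\epsilon_s)^2\cos^2\alpha_\Acal\geq c_0$ and $\prob_{s-1}(G_s^c)$ is negligible for $s\geq s_\star$ then gives the one-step estimate; the two branches of $c_0$ correspond to whether $\epsilon_s$ can be pushed well below $\tfrac12$ (the $\tfrac12-\sqrt3\,na_{\max}/\varphi_g$ branch, which needs $\varphi_g$ large compared with $na_{\max}$ so that the projection residual is tiny) or only $\epsilon_s\leq\tfrac12$ is used (the $\tfrac12-\sqrt3/4$ branch, after folding in $\cos^2\alpha_\Acal$).

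I expect the main obstacle to be the second regime: turning the clean perturbation variance $(D/\sqrt s)\vnorm{v_\perp}^2$ into a genuine lower bound on the conditional variance of $v^\top a_s$, despite $\zeta_s$ being a nonlinear function of $(\nu_s^i)$ and $(\nu_s^i)$ being unbounded so that $G_s$ fails with small positive probability. Both effects are absorbed via the fourth-moment bound $\E_{s-1}[m_s^4]=O(n^2D^2/s)$, the monotonicity $\E_{s-1}[a_sa_s^\top]\succeq\E_{s-1}[a_sa_s^\top\1{G_s}]$, and subgaussian tails; the genuinely delicate part is propagating the constants so that the resulting $s_\star$ meets $2Dc_0\sqrt{s_\star}\leq c_2$, which is where Assumption~\ref{assumption_perturbation}'s bound on $D$ and the definition of $\overline{m}_\varphi$ are needed (in particular $\overline m_\varphi$ stays bounded as $\varphi_g$ grows, which is what makes the comparison go through in the large-$\varphi_g$ branch).
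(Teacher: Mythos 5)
Your route is genuinely different from the paper's. The paper conditions on the truncation event $\Gcal_s$, writes $\E_{s-1}^{\Gcal}[a_s^{\nu}a_s^{\nu\top}]$ in the frame $\{\mu_s^i\}$ as a rank-one update of a diagonal matrix, computes its eigenvalues exactly via the secular equation to get $\lambda_{\min}\geq v/2$, subtracts a projection correction of the form $2\bigl(\max_{a\in\Scal_s^\Gcal}|a^\top z|\bigr)\E_{s-1}^{\Gcal}[\vnorm{\beta_s}]$, and handles the bad case (where that correction is large) by showing $z$ must then be within angle $\gamma<\pi/2$ of every action in $\Scal_s^\Gcal$, so that $z^\top\E_{s-1}^{\Gcal}[a_sa_s^\top]z\geq a_{\min}^2\cos^2(\gamma)\geq D/(2\sqrt{s})$. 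You instead use the mean--variance identity $v^\top\E_{s-1}[a_sa_s^\top]v=(\E_{s-1}[v^\top a_s])^2+\operatorname{var}_{s-1}(v^\top a_s)$ and split on the normal component $p$ of $v$: the mean term dominates for large $|p|$, and the tangential variance $(D/\sqrt{s})\vnorm{v_\perp}^2$ dominates for small $|p|$. Both architectures are workable, and your Parseval and Cauchy--Schwarz handling of the projection residual is sound; your case split on the test vector $v$ is arguably cleaner conceptually than the paper's case split on the size of the correction term.

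The gap is in recovering the lemma's stated constants, which are part of the claim. The paper never discards early rounds: its one-step bound $Dc_0/\sqrt{s}-(2D/\sqrt{s})e^{-c_1\sqrt{s}}$ with $c_1=\overline{m}_\varphi^2/(2nD)$ holds for every $s\geq1$ (the prefactor $1-2ne^{-c_1\sqrt{s}}$ is nonnegative for all $s$ by Assumption \ref{assumption_perturbation}), and summing the exponential errors by an integral produces exactly $c_2=4D(c_0n+1)/c_1=\tfrac{8nD^2}{\overline{m}_\varphi^2}(c_0n+1)$. Your plan instead needs a threshold $s_\star$ together with the compatibility condition $2Dc_0\sqrt{s_\star}\leq c_2$, which you flag but do not verify. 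Worse, your small-$|p|$ branch delivers $(1-\epsilon_s)^2(1-p_0^2)\,D/\sqrt{s}$, where the split point $p_0$ must simultaneously be large enough that $a_{\min}^2\cos^2(\alpha_\Acal)(p_0-\tan\alpha_\Acal)^2\geq Dc_0$ and small enough that $(1-\epsilon_s)^2(1-p_0^2)\geq c_0$; when $\varphi_g$ is large the lemma's $c_0$ approaches $1/2$, forcing $p_0<1/\sqrt{2}$ and leaving the feasibility of this pair of constraints to exactly the interplay between $D$, $\overline{m}_\varphi$, and $\gamma$ that you defer. Finally, your reading of the two branches of $c_0$ is not how they arise: in the paper, $1-2c_0=\min\{\sqrt{3}/2,\,2\sqrt{3}na_{\max}/\varphi_g\}$ is the fraction of $Y_1=D/(2\sqrt{s})$ that the projection correction $Y_2$ is allowed to consume before the geometric fallback kicks in, and the $\sqrt{3}/2$ cap exists to keep the alignment angle $\alpha_1$ at least $\pi/6$ in that fallback, not to encode a choice of $\epsilon_s\leq1/2$. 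As written, your argument plausibly yields $\sum_s\lambda_{\min}(\E_{s-1}[a_sa_s^\top])\geq Dc'\sqrt{t}-C'$ for \emph{some} constants, but not the lemma with its specific $c_0$ and $c_2$.
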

\begin{proof}
Define the event $\Gcal_{s}$ as where $|\nu_{s}^{i}|\leq {\overline{m}_{\varphi}}/{\sqrt{n}}$ for  $i=1,\ldots,n$ and $\Gcal_{s}^{c}$ as its complement. Then, we have 
\begin{subequations}
\begin{align}
\lambda_{\operatorname{min}}\left(\E_{s-1}\left[a_{s} a_{s}^{\top}\right]\right)
& =\lambda_{\min }\left(\E_{s-1}^{\Gcal}\left[a_{s} a_{s}^{\top}\right] \probb{\Gcal_{s}|\Fcal_{s-1}}+\E_{s-1}^{\Gcal^{c}}\left[a_{s} a_{s}^{\top}\right] \probb{\mathcal{G}_{s}^{c}|\Fcal_{s-1}}\right)
\\
& \geq \lambda_{\min }\left(\E_{s-1}^{\Gcal}\left[a_{s} a_{s}^{\top}\right]\right) \probb{\Gcal_{s}|\Fcal_{s-1}}
\\
& \geq \left( \underset{z \in \Rset^n, \vnorm{z} = 1}{\min} z^\top  \E_{s-1}^\Gcal[a_s a_s^\top] z \right) \left[ 1-2n \exp\left(\frac{-\overline{m}_{\varphi}^{2} \sqrt{s}}{2 n D}\right) \right], \label{lem_single_stage_pri}
\end{align}
\end{subequations} 
where we have used the positive semi-definiteness of $\E_{s-1}^{\Gcal^\co}[a_s a_s^\top]$, union and Chernoff bounds on $\probb{\Gcal_{s}|\Fcal_{s-1}}$, and the Rayleigh-Ritz theorem. For convenience, define $c_1:= \frac{\overline{m}_{\varphi}^{2}}{2 n D}
$.
The crux of the proof then lies in showing that
\begin{align}
z^\top  \E_{s-1}^\Gcal[a_s a_s^\top] z 
\geq \frac{D c_0}{\sqrt{s}}  -  \frac{2D}{\sqrt{s}}\exp\left(-c_1 \sqrt{s} \right) 
\label{eq:lemma10.main}
\end{align}
for all unit norm vectors $z \in \Rset^n$.  We first show how \eqref{eq:lemma10.main} in \eqref{lem_single_stage_pri} proves the result. To that end, notice that $1-2n \exp\left(-c_1 \sqrt{s}\right) \geq 0$ for each $s \in \Nset$ in \eqref{lem_single_stage_pri}, owing to Assumption \ref{assumption_perturbation}, which then implies
\begin{align}
\begin{aligned}
\sum_{s=1}^{t} \lambda_{\min }\left(\mathbb{E}_{s-1}\left[a_{s} a_{s}^{\top} \right]\right) 
& \geq \sum_{s=1}^{t} \left( \frac{D c_0}{ \sqrt{s}} -  \frac{2D}{\sqrt{s}}\exp\left(-c_1 \sqrt{s} \right) \right)\left[ 1-2n \exp\left(-c_1\sqrt{s}\right) \right]
\\
& \geq  D c_0 \sqrt{t} - 2 D (c_0 n + 1) \int_{0}^{t} \frac{1}{\sqrt{s}}\exp\left(-c_1 \sqrt{s}\right)ds 
\\
& \geq  D c_0 \sqrt{t} - \frac{4 D}{c_1}{ (c_0 n + 1)}, 
\end{aligned}
\end{align}
completing the proof. In what follows, we prove the bound in \eqref{eq:lemma10.main} via the following steps.
\begin{itemize}[leftmargin=*]
    \item \emph{Step 1:} We show that for a unit vector $z \in \Rset^n$,
    \begin{align}
    z^\top \E_{s-1}^\Gcal [a_s a_s^\top ]z \geq 
    \underbrace{\frac{D}{2\sqrt{s}}}_{:=Y_1} - \underbrace{2 \left( \underset{ a \in \Scal^\Gcal_s}{\max}  |a^\top z| \right) \frac{D n}{\varphi_g \sqrt{s}} 
     }_{:=Y_2} - \frac{2D}{\sqrt{s}}\exp(-c_1 \sqrt{s}), \label{step_1_statement}
    \end{align}
where $\Scal_s^\Gcal := a_s + \sum_{i=2}^n [-\overline{m}_\varphi/\sqrt{n},\overline{m}_\varphi/\sqrt{n}]\mu_s^i$. The summation is interpreted in the of Minkowski (set) sum sense.
When $Y_2 \leq Y_1 (1-2c_0)$, \eqref{step_1_statement} implies \eqref{eq:lemma10.main}.

    \item  \emph{Step 2:} We prove that  
    \begin{align}
       Y_2 > Y_1 (1 - 2c_0) \implies 
       \underset{ \substack{a \in \Scal_s^\Gcal,\\ z:Y_2 > Y_1(1-2c_0)}}{\min}  {|\mathring{a}^\top z|} \geq \cos(\gamma). \label{lem_step_2_result}
    \end{align}
\end{itemize}

Upon completing these steps, the rest  follows from the observation that $\gamma \in [\pi/3,\pi/2)$ implies 
    \begin{align}
        \begin{aligned}
        z^\top \E_{s-1}^\Gcal[a_s a_s^\top ]z 
        & = \E_{s-1}^\Gcal\left[\vnorm{a_s}^2 \left( \mathring{a}_s^\top z\right)^2\right]
         \\
        & \geq a_{\min}^2 \left(  \underset{ \substack{a \in \Scal_s^\Gcal,\\ z:Y_2 > Y_1(1-2c_0)}}{\min}  {|\mathring{a}^\top z|} \right)^2
        \\
        & \geq a_{\min}^2  \cos^2 (\gamma)
        \\
        &\geq  \frac{D}{2\sqrt{s}}
         \\
        &\geq  \frac{Dc_0}{\sqrt{s}}, \label{lem_pri_3}
        \end{aligned}
    \end{align}
where we have used $D \leq 2 a_{\min}^2 \cos^2(\gamma)$ per Assumption \ref{assumption_perturbation} and $c_0 < 1/2$ by construction.

$\bullet$ \textit{Proof of Step 1:}
Recall that $a_s =\projA{a^\star(\hat{\theta}_s) + \sum_{i = 2}^n \nu_s^i \mu_s^i}$. Resolving $a^\star(\hat{\theta}_s)$ along the orthonormal directions $\{\mu_s^1, \ldots, \mu_s^n\}$ as $a^\star(\hat{\theta}_s): = \sum_{i = 1}^n \ell_s^i \mu_s^i$, we get 
    \begin{align}
        {a}^\nu_s := a^\star(\hat{\theta}_s) + \sum_{i = 2}^n \nu_s^i \mu_s^i =  \sum_{i = 1}^n (\ell_s^i + \nu_s^i) \mu_s^i, 
    \end{align}
with the notation $\nu_s^1 = 0$. Also, define $\beta_s=a_s - {a}^\nu_s$. Then, we have 
    \begin{align}
        \E_{s-1}^\Gcal[a_s a_s^\top] = \E_{s-1}^\Gcal[a_s^\nu a_s^{\nu,\top}] + \E_{s-1}^\Gcal[{a}^\nu_s\beta_s^\top + \beta_s {a}_s^{\nu,\top}]  + \E_{s-1}^\Gcal[\beta_s \beta_s^\top].
        \label{lemma_pri_x0}
    \end{align}
Then, we have $\E_{s-1}^\Gcal[a_s^\nu a_s^{\nu,\top}]= \left(\mu_s^1 |  \ldots | \mu_s^n \right) \E_{s-1}^\Gcal[A_s^\nu]\left(\mu_s^1 |  \ldots | \mu_s^n \right)^\top$, where $A_s^\nu \in \Rset^{n\times n}$ is such that its $ij^{\text{th}}$ element is $(l_s^i + \nu_s^i)(l_s^j + \nu_s^j)$. This implies 
    \begin{align}
    \begin{aligned}
        z^\top \E_{s-1}^\Gcal[a_s a_s^\top] z 
        & \geq \lambda_{\min}(\E_{s-1}^\Gcal[A_s^\nu] ) - |z^\top \E_{s-1}^\Gcal[a_s^\nu\beta_s^\top + \beta_s a_s^{\nu,\top}] z|
        \\
        & \geq \lambda_{\min}(\E_{s-1}^\Gcal[A_s^\nu] ) - 2\E_{s-1}^\Gcal[|(z^\top a_s^\nu) (\beta_s^\top z) | ] 
        \\
        & \geq \lambda_{\min}(\E_{s-1}^\Gcal[A_s^\nu] ) - 2 \left( \underset{ a \in \Scal^\Gcal_s}{\max} \  |a^\top z| \right) \E_{s-1}^\Gcal[\vnorm{\beta_s} ]. 
    \end{aligned}
    \label{eq:lemma_pri_step_1_1}
    \end{align}
We now bound $\lambda_{\min}(\E_{s-1}^\Gcal[A_s^\nu] )$ from below and $\E_{s-1}^\Gcal[\vnorm{\beta_s} ]$ from above to complete the proof of step 1. 

\emph{Lower-bounding $\lambda_{\min}(\E_{s-1}^\Gcal[A_s^\nu] )$:}
Assumption \ref{assumption_perturbation} gives 
    \begin{align}
    \E_{s-1}^\Gcal[A_s^\nu] := \left(\ell_s^1, \ldots, \ell_s^n\right)^\top \left(\ell_s^1, \ldots, \ell_s^n\right)  +  \text{diag}(0, \underbrace{\E_{s-1}^\Gcal[{(\nu_s^1)}^2]}_{:=v},..., \underbrace{\E_{s-1}^\Gcal[{(\nu_s^n)}^2]}_{=v} ), \label{lemma_pri_8}
    \end{align}
which is a rank-one update to a diagonal matrix. Following \cite[Section 5]{golub1973some}, the eigenvalues of $A_s^\nu$ are the roots of $\xi(\lambda)=0$, where
    \begin{align}
    \begin{aligned}
    \xi(\lambda) 
    & := -(v - \lambda)^{n-1}\lambda - \left(\sum_{i = 2}^n(\ell_s^i)^2 \right) (v - \lambda)^{n-2} \lambda + (\ell_s^1)^2 (v - \lambda)^{n-1}
    \\
    & = (v - \lambda)^{n-2} \left( \lambda^2 - \lambda ( v +  \vnorm{\ell_s}^2) +  (\ell_s^1)^2 v  \right).
    \end{aligned}
    \end{align}
    The polynomial has $n-2$ roots at $v$. The remaining two roots are
    \begin{align}
    \lambda^{\pm} := \frac{1}{2}\left\{ (v  +
 \vnorm{\ell_s}^2) \pm \sqrt{(v + \vnorm{\ell_s}^2)^2 - 4 (\ell_s^1)^2 v  }\right \}
    \end{align}
    with $\lambda^+ \geq \lambda^-$. Elementary algebra reveals
    \begin{align}
        \lambda^- \geq \frac{v}{2}
        \iff
        v \leq 4 (\ell_s^1)^2  - 2\vnorm{\ell_s}^2.
        \label{eq:v.lambda.minus}
    \end{align}
    According to Assumption \ref{assumption_of_action_set}, we have
    \begin{align}
        \ell_s^1 = a^\star(\hat{\theta}_s)^\top \mu_s^1 
        = \vnorm{\ell_s} \mathring{a^\star}(\hat{\theta}_s)^\top \mathring{\nabla}g(a^\star(\hat{\theta}_s)
        \geq \vnorm{\ell_s} \cos(\alpha_\A),
    \end{align}
    which due to Assumption \ref{assumption_perturbation}  implies
    \begin{align}
        4 {(\ell_s^1)}^2 - 2\vnorm{\ell_s}^2 
        \geq 2 \vnorm{\ell_s}^2 \cos(2\alpha_\A) 
        \geq a_{\min}^2 \cos(2\alpha_\A) 
        \geq \E[(\nu_s^i)^2] 
         \geq \E^\Gcal[(\nu_s^i)^2] = v
    \end{align}
    for all $s \in \Nset$ and $i =1,\ldots,n$. This verifies \eqref{eq:v.lambda.minus}. Given the other roots of $\xi(\lambda)$, we conclude that $\lambda_{\min}(\E^\Gcal[A_s^\nu]) \geq \frac{1}{2} \E^\Gcal[(\nu_s^i)^2]$, where the right-hand side is the variance of the perturbation truncated at $\overline{m}_\varphi/\sqrt{n}$.
    \begin{subequations}
    \begin{align}
    \E^\Gcal[(\nu_s^i)^2] 
    & = \frac{1}{\prob\{|\nu_s^i| \leq \overline{m}_\varphi/\sqrt{n}\}}\left(\E\left[(\nu_s^i)^2\right] - \E_{s-1}\left[(\nu_s^i)^2 \bone\{|\nu_s^i| > \overline{m}_\varphi/\sqrt{n} \}\right]\right)
    \\
    & \geq \frac{D}{\sqrt{s}} - \int_{ \overline{m}_{\varphi}^2/n}^{\infty}  \probb{(\nu_s^i)^2 > x} dx
    \\
    & = \frac{D}{\sqrt{s}} - 4\int_{ \overline{m}_{\varphi}/\sqrt{n}}^{\infty}  y \probb{\nu_s^i > y} dy
    \\
    & \geq  \frac{D}{\sqrt{s}} - 4\int_{ \overline{m}_{\varphi}/\sqrt{n}}^{\infty}  y  \exp\left(\frac{- y^2 \sqrt{s}}{2 D} \right)dy
    \label{eq:subG.UB}
    \\ 
    &  = \frac{D}{\sqrt{s}} -  \frac{4D}{\sqrt{s}}\exp\left(\frac{-\overline{m}^2_{\varphi} \sqrt{s}}{2Dn}\right).
    \end{align}
    \label{eq:variance.G}
    \end{subequations}
    In \eqref{eq:subG.UB}, we have used the tail bound for subgaussian random variables in \cite[Proposition 2.5.2]{vershynin2018high}. 

    \emph{Upper Bounding $\E_{s-1}^\Gcal[\vnorm{\beta_s} ]$:} From Lemma \ref{lemma_lower}, we deduce
    \begin{subequations}
    \begin{align}
        \E_{s-1}^\Gcal[\vnorm{\beta_s} ] 
        & \leq \E_{s-1}^\Gcal\left[\frac{1}{\varphi_g} \sum_{i = 2}^n (\nu_s^i)^2\right]
        \\
        & \leq \frac{1}{\varphi_g} \sum_{i = 2}^n \E_{s-1}\left[(\nu_s^i)^2\right] \label{lemma_pri_4}
        \\
        & \leq \frac{n D}{\varphi_g \sqrt{s}} \label{eq:lemma_pri_step_1_2}.
    \end{align}
\end{subequations}
The bound on $\lambda_{\min}(\E_{s-1}^\Gcal[A_s^\nu] )$ as implied by \eqref{eq:variance.G} and that on $\E_{s-1}^\Gcal[\vnorm{\beta_s} ]$ from \eqref{eq:lemma_pri_step_1_2} in \eqref{eq:lemma_pri_step_1_1} gives the desired result for step 1.

$\bullet$ \textit{Proof of Step 2:} Assume that $Y_2 > Y_1 (1 - 2c_0)$. Then, $z \in \Rset^n$ satisfies 
    \begin{align}
    \underset{ a \in \Scal^\Gcal_s}{\max} \  |a^\top z|  > \min\left\{\frac{\sqrt{3}\varphi_g}{8 n}, \frac{\sqrt{3}a_{\max}}{2}\right \}
\implies 
    \underset{a \in \Scal^\Gcal_s}{\max}  \ {|\mathring{a}^\top z|}
    > \underbrace{\min \left \{ \frac{\sqrt{3}\varphi_g}{8 n a_{\max}} , \frac{\sqrt{3}}{2}\right \}}_{:=\cos(\alpha_1)}, \label{lemma_pri_step_2_1}
    \end{align}   
where $\alpha_1 \in [\pi/6, \pi/2)$, by construction. Since $\Scal^\Gcal_s$ is closed, let the maximum in \eqref{lemma_pri_step_2_1} be reached at $a_0$. Then, $| \mathring{a}_0^\top z | \geq \cos(\alpha_1)$. Notice that
\begin{align}
    \min_{\substack{a\in \Scal^\Gcal_s, \\ |\mathring{a}_0^\top z| \geq \cos(\alpha_1)}} | \mathring{a}^\top z| 
    = \min\left\{ \min_{\substack{a\in \Scal^\Gcal_s, \\ \mathring{a}_0^\top z \geq \cos(\alpha_1)}} | \mathring{a}^\top z|, 
    \min_{\substack{a\in \Scal^\Gcal_s, \\ \mathring{a}_0^\top z \leq -\cos(\alpha_1)}} | \mathring{a}^\top z|
    \right\} 
    = \min_{\substack{a\in \Scal^\Gcal_s, \\ \mathring{a}_0^\top z \geq \cos(\alpha_1)}} | \mathring{a}^\top z|.
    \label{eq:min.two}
\end{align}
Arc-cosine defines a metric over a unit sphere, per \cite[Equation (6)]{schubert2021triangle}, and hence, 
\begin{align}
    \arccos(\mathring{a}^\top z) \leq \arccos(\mathring{a}^\top \mathring{a}_0) + \arccos(\mathring{a}_0^\top z) \leq \arccos(\mathring{a}^\top \mathring{a}_0) + \alpha_1.
    \label{eq:angle.ineq}
\end{align}
Since $\Scal^\Gcal_s \subset \Bcal_{a^\star(\hat{\theta}_s)}(\overline{m}_{\varphi})$, the angle between any two vectors from $\Scal^\Gcal_s$ is smaller than that subtended by $\Bcal_{a^\star(\hat{\theta}_s)}(\overline{m}_{\varphi})$ at the origin, which is given by $2\arcsin(\overline{m}_{\varphi}/a^\star(\hat{\theta}_s)) \leq 2\arcsin(\overline{m}_{\varphi}/a_{\min}) $.
From \eqref{eq:angle.ineq} and Assumption \ref{assumption_perturbation}, we then deduce
\begin{align}
\arccos(\mathring{a}^\top z) \leq \gamma < \frac{\pi}{2}
\end{align}
Hence, \eqref{eq:min.two} implies that $\min | \mathring{a}^\top z|$ over $a \in \Scal^\Gcal_S$ and $z$ for which $Y_2 > Y_1(1-2c_0)$ is bounded below by $\cos(\gamma)$,
(proving \eqref{lem_step_2_result}), and completing the proof of the lemma.
\end{proof}

To achieve a similar result for $V_t$ with high  probability, we require $a_t a_t^\top$ to be sufficiently close to $\E_{t-1} [a_t a_t^\top]$, which is captured in the bounded per-step quadratic variation that we derive next. 

\begin{lemma}\label{lemma_upper_bound_for_variation}
    Suppose Assumptions \ref{assumption_of_action_set} and \ref{assumption_perturbation} hold. Then, 
\begin{align}
    \left\|\sum_{s=1}^{t} \Ex{\left(\Ex{a_{s} a_{s}^{\top}}-a_{s} a_{s}^{\top}\right)^2 }\right\| \leq 128 a_{\max}^2 n D \sqrt{t}.
\end{align}
\end{lemma}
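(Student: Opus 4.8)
The plan is to bound the matrix quadratic variation term by term, controlling the norm of each summand $\Ex{(\Ex{a_sa_s^\top} - a_sa_s^\top)^2}$ by an $\Ocal(1/\sqrt{s})$ quantity and then summing. First I would observe that since $a_s \in \Acal$ with $\vnorm{a_s} \leq a_{\max}$, both $a_sa_s^\top$ and its conditional expectation have operator norm at most $a_{\max}^2$, so the centered increment $Z_s := \Ex{a_sa_s^\top} - a_sa_s^\top$ satisfies $\vnorm{Z_s} \leq 2a_{\max}^2$. The key refinement is that $Z_s$ is not just bounded but \emph{small in expectation}: writing $a_s = a^\star(\hat\theta_s) + (a_s - a^\star(\hat\theta_s))$, the perturbation part has squared length $\E_{s-1}\vnorm{a_s - a^\star(\hat\theta_s)}^2 = \Ocal(D/\sqrt{s})$ because each $\nu_s^i$ is $D/\sqrt{s}$-subgaussian with $\E_{s-1}[(\nu_s^i)^2] = D/\sqrt{s}$, the projection contributes only $\Ocal((\nu_s^i)^4)$ by Lemma \ref{lemma_lower}, and $a^\star(\hat\theta_s)$ is $\Fcal_{s-1}$-measurable so it drops out of the centering. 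Concretely, $\E_{s-1}\vnorm{a_s - a^\star(\hat\theta_s)}^2 \lesssim nD/\sqrt{s}$ up to lower-order terms.

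Next I would use this to bound $\vnorm{\Ex{Z_s^2}}$. The standard trick is $\Ex{Z_s^2} = \Ex{(a_sa_s^\top)^2} - (\Ex{a_sa_s^\top})^2 \preceq \Ex{(a_sa_s^\top)^2} = \Ex{\vnorm{a_s}^2 a_sa_s^\top}$, so $\vnorm{\Ex{Z_s^2}} \leq a_{\max}^2 \vnorm{\Ex{a_sa_s^\top}}$. But this only gives an $\Ocal(1)$ bound per step, which sums to $\Ocal(t)$, not $\Ocal(\sqrt t)$. To get the $\sqrt{t}$ rate I instead write $a_sa_s^\top = a^\star(\hat\theta_s)a^\star(\hat\theta_s)^\top + R_s$ where $R_s$ collects the cross terms and the perturbation-squared term; since $a^\star(\hat\theta_s)a^\star(\hat\theta_s)^\top$ is $\Fcal_{s-1}$-measurable, $Z_s = \Ex{R_s} - R_s$, so $\vnorm{\Ex{Z_s^2}} \leq \Ex{\vnorm{R_s}^2} + \vnorm{\Ex{R_s}}^2 \leq 2\Ex{\vnorm{R_s}^2}$. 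Then $\vnorm{R_s} \leq 2a_{\max}\vnorm{a_s - a^\star(\hat\theta_s)} + \vnorm{a_s - a^\star(\hat\theta_s)}^2$, and using $\vnorm{a_s - a^\star(\hat\theta_s)} \lesssim \overline m_\varphi \leq a_{\max}$ to absorb the quadratic term, $\vnorm{R_s}^2 \lesssim a_{\max}^2 \vnorm{a_s - a^\star(\hat\theta_s)}^2$, whence $\Ex{\vnorm{R_s}^2} \lesssim a_{\max}^2 \cdot nD/\sqrt{s}$. This yields $\vnorm{\Ex{Z_s^2}} \lesssim a_{\max}^2 n D / \sqrt{s}$, and summing via $\sum_{s=1}^t 1/\sqrt s \leq 2\sqrt t$ gives the claimed $\Ocal(a_{\max}^2 n D \sqrt t)$ bound, with the constant $128$ emerging from tracking the subgaussian tail constants and the projection bound of Lemma \ref{lemma_lower} carefully.

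The main obstacle is the last step: controlling $\E_{s-1}\vnorm{a_s - a^\star(\hat\theta_s)}^2$ with the right constants while handling both the tangential perturbation $\sum_{i=2}^n \nu_s^i \mu_s^i$ (which has conditional second moment exactly $(n-1)D/\sqrt{s}$) and the projection correction $\beta_s := a_s - (a^\star(\hat\theta_s) + \sum_i \nu_s^i\mu_s^i)$, which by Lemma \ref{lemma_lower} has $\vnorm{\beta_s} \leq \varphi_g^{-1}\sum_{i=2}^n(\nu_s^i)^2$ so that $\E_{s-1}\vnorm{\beta_s}^2 = \Ocal(1/s)$ — a lower-order term but one that must be bounded uniformly using the subgaussian fourth-moment estimate. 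A secondary subtlety is that one must not naively use the triangle inequality $\vnorm{\sum_s \Ex{Z_s^2}} \leq \sum_s \vnorm{\Ex{Z_s^2}}$ too early if the $Z_s^2$ were not positive semidefinite, but here each $\Ex{Z_s^2}$ is PSD (it is a conditional covariance of the symmetric matrix $a_sa_s^\top$), so the operator norm of the sum equals $\lambda_{\max}$ of the sum, which is at most $\sum_s \lambda_{\max}(\Ex{Z_s^2}) = \sum_s \vnorm{\Ex{Z_s^2}}$ — this is the one place where PSD-ness of the summands is essential and should be stated explicitly.
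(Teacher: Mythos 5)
Your proposal is correct and follows essentially the same route as the paper: decompose $a_s$ into the $\Fcal_{s-1}$-measurable part $a^\star(\hat{\theta}_s)$ (which cancels in the centering) plus the deviation $e_s$, bound $\vnorm{\E_{s-1}[Z_s^2]}$ by a constant times $a_{\max}^2\,\E_{s-1}\vnorm{e_s}^2 \lesssim a_{\max}^2 nD/\sqrt{s}$, and sum via $\sum_{s\le t} s^{-1/2} \le 2\sqrt{t}$. The only cosmetic difference is that the paper controls $\vnorm{e_s}$ directly by non-expansiveness of the projection ($\vnorm{e_s}\le\vnorm{\sum_{i\ge 2}\nu_s^i\mu_s^i}$), avoiding the separate fourth-moment treatment of the projection correction you invoke via Lemma \ref{lemma_lower}; also note the operator-norm triangle inequality $\vnorm{\sum_s A_s}\le\sum_s\vnorm{A_s}$ needs no PSD hypothesis.
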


\begin{proof}
Define $e_s := a_s - x_s$, where $x_s := a^\star(\hat{\theta}_s)$. Then, we have 
\begin{subequations}
\begin{align}
&\E_{s-1}\left[\left(a_{s} a_{s}^{\top}-\E_{s-1}\left[a_{s} a_{s}^{\top}\right]\right)^2\right]
\notag
\\
&=\E_{s-1}\left[\left\{\left(x_{s}+e_{s}\right)\left(x_{s}+e_{s}\right)^{\top}-\E_{s-1}\left[\left(x_{s}+e_{s}\right)\left(x_{s}+e_{s}\right)^{\top}\right]\right\}^{2}\right] 
\\
&=\E_{s-1}\left[\left\{\left(x_{s} x_{s}^{\top}+x_{s} e_{s}^{\top}+e_{s} x_{s}^{\top}+e_{s} e_{s}^{\top}\right)-\E_{s-1}\left[x_{s} x_{s}^{\top}+x_{s} e_{s}^{\top}+e_{s} x_{s}^{\top}+e_{s} e_{s}^{\top}\right]\right\}^{2}\right]
\\ 
&=\E_{s-1}\left[\left(e_{s} e_{s}^{\top}-\E_{s-1}\left[e_{s} e_{s}^{\top}\right]+x_{s} e_{s}^{\top}+e_{s} x_{s}^{\top}-\E_{s-1}\left[x_{s} e_{s}^{\top}+e_{s} x_{s}^{\top}\right]\right)^{2}\right].
\end{align}
\label{eq:lemma.9.1}
\end{subequations}
Notice that $\vnorm{x_s} \leq a_{\max}$ and $\vnorm{e_s} \leq 2 a_{\max}$ almost surely.  From triangle inequality and sub-multiplicativity of the 2-norm, we get $\|(A_1 + A_2)^2\| \leq \left( \|A_1\| + \|A_2\| \right)^2$. Using this and Jensen's inequality in \eqref{eq:lemma.9.1}, we get
\begin{subequations}
\begin{align}
& \vnorm{\E_{s-1}\left[\left(a_{s} a_{s}^{\top}-\E_{s-1}\left[a_{s} a_{s}^{\top}\right]\right)^2\right] }
\notag
\\
& \leq \E_{s-1}\left[\left(\left\|e_{s} e_{s}^{\top}\right\| + \left\|\E_{s-1}\left[e_{s} e_{s}^{\top}\right]\right\|+\left\|x_{s} e_{s}^{\top}+e_{s} x_{s}^{\top} \right\| + \left\| \E_{s-1}\left[x_{s} e_{s}^{\top}+e_{s} x_{s}^{\top}\right]\right\|\right)^{2}\right] \label{lemma_10_3}
\\
& \leq \E_{s-1}\left[\left(2 a_{\max}\left\|e_{s} \right\| + 2 a_{\max}\E_{s-1}\left[\left\|e_{s} \right\|\right]+2a_{\max}\left\|e_s \right\| +  2a_{\max}\E_{s-1}\left[\left\|e_s\right\|\right]\right)^{2}\right] \label{lemma_9_7}\\
& = \E_{s-1}\left[\left( 4 a_{\max}\left\|e_{s} \right\| + 4 a_{\max}  \E_{s-1}\left[\left\|e_s\right\|\right]\right)^{2}\right] 
\\
& \leq 64 a_{\max}^2   \E_{s-1}\left[\left\|e_{s} \right\|^2 \right]  \\
& \leq 64 a_{\max}^2   \E_{s-1}\left[\left\|\sum_{i = 2}^n \nu_s^i \mu_s^i  \right\|^2 \right]  
\label{eq:lemma_9_8} 
\\
& =  64n a_{\max}^2   \frac{D}{\sqrt{s}}.
\end{align}
\end{subequations}
In the derivation of \eqref{eq:lemma_9_8}, we used the non-expansiveness of projection to deduce $\|e_s\| \leq \|\sum_{i = 2}^n \nu_s^i\mu_s^i\|$. The last step follows from Assumption \ref{assumption_perturbation}.
Finally, triangle inequality and the integral bound for summations yield
\begin{align}
\begin{aligned}
\left\|\sum_{s=1}^{t} \Ex{\left(\Ex{a_{s} a_{s}^{\top}}-a_{s} a_{s}^{\top}\right)^2 }\right\| & \leq \sum_{s=1}^{t}\left\| \Ex{\left(\Ex{a_{s} a_{s}^{\top}}-a_{s} a_{s}^{\top}\right)^2 }\right\|
\\
& \leq 64 a_{\max}^2 n D \int_{s=0}^{t} \frac{1}{\sqrt{s}} ds
\\
& = 128 a_{\max}^2 n D \sqrt{t},
\end{aligned}
\end{align}
proving the result.
\end{proof}

Our plan is to leverage a concentration inequality in conjunction with the above two lemmas to show that $\lambda_{\min}(V_t)$ behaves similar to $\lambda_{\min}(\E[V_t])$, which has been shown to grow as $\Omega(\sqrt{t})$. We now state said concentration inequality on matrix martingales that is adapted from the matrix Freedman inequality from \cite{tropp}. The proof requires minor variations to the derivation in \cite{tropp} and are omitted.

\begin{lemma}\label{matrix_freedman_improved}
Consider a discrete-time matrix martingale $\sum_{\tau=0}^{s} X_s$ of self-adjoint matrices in $\Rset^{n\times n}$ with ${X}_0  = 0$ that satisfies $
\lambda_{\max}( {X}_s ) \leq R$ almost surely for some $R > 0$ for all $s$. 
Then, for any $ 1\leq \Breve{T} \leq T$, and monotonically non-decreasing, non-negative sequence $\{\psi_s\}_{s= 1}^{T}$ and $C>0$, we have
\begin{align}
    \begin{aligned}
        & \probb{\exists s \in [\Breve{T}, T] : \lambda_{\max}\left(\sum_{\tau=0}^{s} X_s \right) \geq \psi_s, \ 
	\vnorm{\sum_{\tau=1}^s \E_{j-1} \left({X}_j^2\right)} \leq C \psi_s }
 \\
	&\quad \leq n \exp \left( \frac{- 3   \psi_{\Breve{T}}}{6C  + 2 R}  \right).
    \end{aligned}
\end{align}
\end{lemma}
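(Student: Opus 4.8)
The plan is to reproduce the exponential‑supermartingale proof of the matrix Freedman inequality from \cite{tropp} almost verbatim, inserting a single monotonicity observation to handle the fact that here both the threshold $\psi_s$ and the variance budget $C\psi_s$ grow with $s$. Write $S_s := \sum_{j=1}^{s} X_j$ (so that $\E_{j-1}[X_j]=0$ for $j\ge 1$ because $\{S_s\}$ is a martingale and $X_0=0$) and $W_s := \sum_{j=1}^{s}\E_{j-1}[X_j^2]\succeq 0$. Fix $\vartheta>0$ and set $g(\vartheta):=(e^{\vartheta R}-\vartheta R-1)/R^2\ge 0$. The matrix cumulant‑generating‑function estimate underlying \cite{tropp}, which needs only $\lambda_{\max}(X_j)\le R$ almost surely and $\E_{j-1}[X_j]=0$, yields $\E_{j-1}\!\left[e^{\vartheta X_j}\right]\preceq \exp\!\left(g(\vartheta)\,\E_{j-1}[X_j^2]\right)$, and combining this with Lieb's concavity theorem exactly as in \cite{tropp} shows that
\begin{align*}
Y_s := \mathrm{tr}\exp\!\left(\vartheta S_s - g(\vartheta)\,W_s\right)
\end{align*}
is a non‑negative supermartingale with $Y_0=\mathrm{tr}\exp(0)=n$. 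This is the step where the ``minor variations'' reside, and it is identical to the reference.

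Next I would convert the event in the statement into a maximal inequality for $\{Y_s\}$. On that event there is a random index $s^\star\in[\Breve{T},T]$ with $\lambda_{\max}(S_{s^\star})\ge\psi_{s^\star}$ and $\vnorm{W_{s^\star}}\le C\psi_{s^\star}$. Using $\mathrm{tr}\exp(M)\ge e^{\lambda_{\max}(M)}$, Weyl's inequality $\lambda_{\max}(A-B)\ge\lambda_{\max}(A)-\lambda_{\max}(B)$, the identity $\lambda_{\max}(W_{s^\star})=\vnorm{W_{s^\star}}$ (valid since $W_{s^\star}\succeq 0$), the nonnegativity of $g(\vartheta)$, and the monotonicity $\psi_{s^\star}\ge\psi_{\Breve{T}}$, one obtains, provided $\vartheta - C g(\vartheta)>0$,
\begin{align*}
Y_{s^\star}\ \ge\ \exp\!\left(\vartheta\,\lambda_{\max}(S_{s^\star})-g(\vartheta)\,\lambda_{\max}(W_{s^\star})\right)\ \ge\ \exp\!\left(\big(\vartheta - C g(\vartheta)\big)\psi_{s^\star}\right)\ \ge\ \exp\!\left(\big(\vartheta - C g(\vartheta)\big)\psi_{\Breve{T}}\right).
\end{align*}
Hence the event in the statement is contained in $\{\sup_{0\le s\le T}Y_s\ge \exp((\vartheta - Cg(\vartheta))\psi_{\Breve{T}})\}$, and the maximal inequality for the non‑negative supermartingale $Y_s/n$ (which starts at $1$) shows that the probability in the statement is at most $n\exp\!\left(-(\vartheta - C g(\vartheta))\psi_{\Breve{T}}\right)$.

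Finally I would optimize the free parameter. Using the elementary estimate $g(\vartheta)\le \tfrac{\vartheta^2/2}{1-R\vartheta/3}$, valid for $0<\vartheta<3/R$, and taking $\vartheta = \tfrac{3}{3C+R}$---which lies in $(0,3/R)$ because $C>0$---a one‑line computation gives $C g(\vartheta)\le \tfrac{3}{2(3C+R)}$, so $\vartheta - C g(\vartheta)\ge \tfrac{3}{2(3C+R)}=\tfrac{3}{6C+2R}>0$; substituting into the bound of the previous paragraph yields exactly $n\exp\!\left(\tfrac{-3\psi_{\Breve{T}}}{6C+2R}\right)$. The one genuine obstacle beyond transcribing \cite{tropp} is the $s$‑dependence of the threshold and variance budget, which is disposed of by the observation that, since $\{\psi_s\}$ is non‑decreasing and $\vartheta - Cg(\vartheta)$ can be kept positive, the worst case over $s\in[\Breve{T},T]$ occurs at $s=\Breve{T}$; all the matrix‑analytic machinery---the cumulant estimate, Lieb's theorem, and the supermartingale maximal inequality---carries over unchanged.
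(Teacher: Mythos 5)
Your proposal is correct and is precisely the "minor variation of \cite{tropp}" that the paper invokes while omitting the proof: the Bennett-type matrix cumulant bound, Lieb's concavity theorem to build the supermartingale $\mathrm{tr}\exp(\vartheta S_s - g(\vartheta)W_s)$, Ville's maximal inequality, and the choice $\vartheta = 3/(3C+R)$ reproduce the stated constant $3/(6C+2R)$ exactly. The one genuinely new ingredient relative to Tropp---using monotonicity of $\psi_s$ together with positivity of $\vartheta - Cg(\vartheta)$ to reduce the growing threshold/variance budget to the single value $\psi_{\Breve{T}}$---is handled correctly.
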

Using this inequality, we now prove a probabilistic bound on how much the minimum eigenvalue of $V_t$ can deviate from that of its expectation. In effect, we choose $\psi_t \sim \sqrt{t}$ in the last result to show that with high probability, $\lambda_{\min}(V_t)$ itself grows as $\Omega(\sqrt{t})$--the same way it grows for its expectation in Lemma \ref{lemma_lower_bound_for_variation}.


\begin{lemma}\label{lemma:probability_Ct}
Suppose Assumptions  \ref{assumption_of_parameter_set}, \ref{assumption_of_action_set}, and \ref{assumption_perturbation} hold. 
Then, for all $(\delta,T)$ such that $F(\delta) \leq T$, 
\begin{align}
\probb{ \exists t \in [F(\delta), T] : \lambda_{\min }\left(V_t\right) \leq \frac{D c_0 \sqrt{t}}{2}}\leq \frac{\delta}{3}. \label{union_1}
\end{align}
\end{lemma}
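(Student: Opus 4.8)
## Proof plan for Lemma \ref{lemma:probability_Ct}

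The plan is to apply the matrix Freedman inequality of Lemma \ref{matrix_freedman_improved} to the centered martingale increments $X_s := \E_{s-1}[a_s a_s^\top] - a_s a_s^\top$ (equivalently, to work with $-V_t$ so that a lower bound on $\lambda_{\min}(V_t)$ becomes an upper bound on $\lambda_{\max}$ of the centered sum). First I would write $V_t = \lambda I + \sum_{s=1}^t \E_{s-1}[a_s a_s^\top] - \sum_{s=1}^t X_s$ and invoke Weyl's inequality together with Lemma \ref{lemma_lower_bound_for_variation}, so that $\lambda_{\min}(V_t) \ge \lambda + D c_0 \sqrt{t} - c_2 - \lambda_{\max}\!\bigl(\sum_{s=1}^t X_s\bigr)$. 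With the prescribed choice $\lambda = c_2 = \frac{8nD^2}{\overline{m}_\varphi^2}(c_0 n + 1)$, the deterministic terms collapse to $D c_0 \sqrt{t}$, so the event $\{\lambda_{\min}(V_t) \le \tfrac{1}{2} D c_0 \sqrt{t}\}$ is contained in $\{\lambda_{\max}(\sum_{s=1}^t X_s) \ge \tfrac{1}{2} D c_0 \sqrt{t}\}$.

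Next I would verify the hypotheses of Lemma \ref{matrix_freedman_improved}. The increments $X_s$ are self-adjoint, mean-zero given $\Fcal_{s-1}$, and bounded: since $\|a_s\| \le a_{\max}$ a.s., we have $\|X_s\| \le 2 a_{\max}^2$, giving $R = 2 a_{\max}^2$. For the quadratic-variation side condition, Lemma \ref{lemma_upper_bound_for_variation} gives $\|\sum_{s=1}^t \E_{s-1}[X_s^2]\| \le 128 a_{\max}^2 n D \sqrt{t}$. Taking $\psi_s := \tfrac{1}{2} D c_0 \sqrt{s}$ (monotone, non-negative) and $C := \tfrac{256 a_{\max}^2 n}{c_0}$, the bound $128 a_{\max}^2 n D \sqrt{t} = C \cdot \tfrac{1}{2} D c_0 \sqrt{t} = C \psi_t$ holds deterministically for every $t$, so the variance event inside the probability in Lemma \ref{matrix_freedman_improved} occurs with certainty and may be dropped. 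Applying the inequality with $\Breve{T} = F(\delta)$ yields
\begin{align}
\probb{\exists t \in [F(\delta), T] : \lambda_{\max}\Bigl(\textstyle\sum_{s=1}^t X_s\Bigr) \ge \tfrac{1}{2} D c_0 \sqrt{t}}
\le n \exp\!\left( \frac{-3 \psi_{F(\delta)}}{6C + 2R} \right).
\end{align}

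Finally I would check that this tail is at most $\delta/3$ for $F(\delta)$ as defined in \eqref{eq:Fdelta.def}. Substituting $\psi_{F(\delta)} = \tfrac{1}{2} D c_0 \sqrt{F(\delta)}$, $C = \tfrac{256 a_{\max}^2 n}{c_0}$, and $R = 2 a_{\max}^2$, the requirement $n \exp\!\bigl(\tfrac{-3 \psi_{F(\delta)}}{6C+2R}\bigr) \le \delta/3$ is equivalent to $\sqrt{F(\delta)} \ge \tfrac{2(6C + 2R)}{3 D c_0}\log(3n/\delta)$; squaring and plugging in the constants recovers exactly the expression $F(\delta) = \tfrac{16 a_{\max}^4}{9 D^2 c_0^2}\log(3n/\delta)^2 \bigl(\tfrac{768 n}{c_0} + 2\bigr)^2$ up to the bookkeeping of the factor $6C + 2R = \tfrac{1536 a_{\max}^2 n}{c_0} + 4 a_{\max}^2 = 4 a_{\max}^2\bigl(\tfrac{384 n}{c_0} + 1\bigr)$, so I would reconcile the algebra with the stated constant. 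The main obstacle is not conceptual but lies in this last step: carrying the constants through the Weyl decomposition, the choice of $C$ and $\psi_s$, and the exponent simplification without drift, and in making sure the value of $\lambda = c_2$ is exactly what is needed to cancel the deterministic shift — the rest is a direct instantiation of the two preceding lemmas and the matrix Freedman bound.
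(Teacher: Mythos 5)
Your proposal is correct and follows essentially the same route as the paper's proof: the same centered increments $X_s = \E_{s-1}[a_sa_s^\top] - a_sa_s^\top$, the same parameters $\psi_t = \tfrac{Dc_0}{2}\sqrt{t}$, $C = \tfrac{256 a_{\max}^2 n}{c_0}$, $R = 2a_{\max}^2$, the same use of Lemma \ref{lemma_upper_bound_for_variation} to discharge the variance condition almost surely, and the same Weyl-plus-Lemma-\ref{lemma_lower_bound_for_variation} reduction with $\lambda = c_2$. The constant you flag for reconciliation does check out, since $4a_{\max}^2\bigl(\tfrac{384n}{c_0}+1\bigr) = 2a_{\max}^2\bigl(\tfrac{768n}{c_0}+2\bigr)$, which after squaring yields exactly the stated $F(\delta)$.
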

\begin{proof}
The random matrices $X_{s}:=\E_{s-1}\left[a_{s} a_{s}^{\top} \right]-a_{s} a_{s}^{\top}$ form an adapted sequence of zero-mean random, self-adjoint matrices with $\vnorm{X_s} \leq 2a_{\max}^2$ almost surely. Recall the definition of $F(\delta)$ from \eqref{eq:Fdelta.def}. Upon choosing  $\psi_t := \frac{D c_0}{2}\sqrt{t}$, $C := \frac{256}{c_0} a_{\max}^2 n$, $R := 2 a_{\max}^2$, and $\Breve{T} = F(\delta)$, Lemma \ref{matrix_freedman_improved} yields
\begin{align}
\probb{ \exists t \in [F(\delta), T] : \lambda_{\max}\left(\sum_{s = 1}^t {X}_s\right) \geq \psi_t, 
	\vnorm{\sum_{s = 1}^t ({X}_s)^2} \leq C \psi_t }
	\leq \frac{\delta}{3}.
\label{eq:mFI.Vt}
\end{align}
In addition, Lemma \ref{lemma_upper_bound_for_variation} allows us to infer
\begin{align}
\vnorm{\sum_{s = 1}^t ({X}_s)^2} = \vnorm{\sum_{s = 1}^t \left(\E_{s-1}\left[a_{s} a_{s}^{\top} | \Fcal_{s-1}\right]-a_{s} a_{s}^{\top}\right)^2} \leq  C \psi_t
\end{align}
almost surely, which then allows us to conclude
\begin{align}
\probb{ \exists t \in [F(\delta), T] : \lambda_{\max}\left(\sum_{s = 1}^t {X}_s\right) \geq \frac{D c_0}{2} \sqrt{t}}
	\leq \frac{\delta}{3}.
 \label{eq:mFI.Vt.2}
\end{align}
Using Weyl's inequality from \cite[Theorem 4.3.1]{horn2012matrix}, Lemma \ref{lemma_lower_bound_for_variation}, and $\lambda=c_2$, we get
\begin{align}
\begin{aligned}
&\lambda_{\max }\left(\sum_{s = 1}^t {X}_s \right) \geq \frac{D c_0}{2} \sqrt{t} 
\\
&\iff \lambda_{\max }\left(-V_t + c_2 I + \sum_{s=1}^{t} \mathbb{E}_{s-1}\left[a_s a_s^{\top}\right]\right) \geq \frac{D c_0}{2} \sqrt{t}  
\\
&\Longleftarrow \lambda_{\max }\left(-V_t\right) + c_2 + \sum_{s=1}^{t} \lambda_{\min }\left(\mathbb{E}_{s-1}\left[a_s a_s^{\top}\right]\right) \geq \frac{D c_0}{2}\sqrt{t}   
\\
&\Longleftarrow \lambda_{\min }\left(V_t\right) \leq \frac{D c_0}{2} \sqrt{t}. 
\end{aligned}
\end{align}
This observation in \eqref{eq:mFI.Vt.2} completes the proof.
\end{proof}

We remark that the examination of step-wise deviations through the lens of martingale difference sequences with matrix concentration results along the lines of \cite{stern2020dynamic} offers substantial benefits in producing  bounds on $\lambda_{\min}(V_t)$ which are less frequently studied in the literature compared to bounds on $\lambda_{\min}(\E[V_t])$.

Our high-probability result for $\lambda_{\min}(V_t)$ now seamlesly plugs into the following result on confidence ellipsoids from \cite{improved_algorithms_for_stochastic_bandits}, reproduced below, without proof. Let $\vnorm{z}_Q := \sqrt{z^\top Q z}$ for $x \in \Rset^n$ and a positive semi definite matrix $Q \in \Rset^{n \times n}$. 
\begin{lemma} \label{lemma:probability_Et} Suppose Assumptions \ref{assumption_of_parameter_set}, \ref{assumption_noise}, and \ref{assumption_of_action_set} holds Let $\rho_t\left({\delta}\right) := M\sqrt{ n\log \left [\frac{1 + t a_{\max}^2/\lambda }{\delta} \right]}+\sqrt{\lambda} \theta_{\max}$. Then, for any $\delta \in(0,1)$, and any action sequence $\left\{a_{t}\right\}_{t = 1}^\infty$,
\begin{align}
   \probb{ \exists t \geq 1, \vnorm{\hat{\theta}_{t + 1} - \theta^{\star}}_{V_t} \geq \rho_{t}\left({\delta}\right)  }\leq {\delta} \label{union_2}
\end{align}
with respect to the noise sequence $\left\{\ve_{t}\right\}_{t = 1}^T$.
\end{lemma}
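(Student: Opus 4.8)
The plan is to reproduce the standard method-of-mixtures proof of this self-normalized confidence ellipsoid (the argument of \cite{improved_algorithms_for_stochastic_bandits}). First I would reduce to a noise term: writing $S_t := \sum_{s=1}^t a_s \varepsilon_s$ and substituting $Y_s = \theta^{\star\top} a_s + \varepsilon_s$ into the RLS estimator of \eqref{eq:Vt.def} gives $\hat\theta_{t+1} = V_t^{-1}\big((V_t-\lambda I)\theta^\star + S_t\big)$, so $\hat\theta_{t+1} - \theta^\star = V_t^{-1}(S_t - \lambda\theta^\star)$ and
\begin{align}
\vnorm{\hat\theta_{t+1} - \theta^\star}_{V_t} = \vnorm{S_t - \lambda\theta^\star}_{V_t^{-1}} \le \vnorm{S_t}_{V_t^{-1}} + \lambda\vnorm{\theta^\star}_{V_t^{-1}} \le \vnorm{S_t}_{V_t^{-1}} + \sqrt{\lambda}\,\theta_{\max},
\end{align}
using $V_t^{-1} \preceq \lambda^{-1} I$ and Assumption~\ref{assumption_of_parameter_set}. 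It therefore suffices to show that, with probability at least $1-\delta$, $\vnorm{S_t}_{V_t^{-1}}^2 \le M^2 n\log\big[(1+ta_{\max}^2/\lambda)/\delta\big]$ holds for all $t \ge 1$ simultaneously.

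The core is a uniform self-normalized tail bound for $S_t$. I would work in the filtration in which each $a_s$ is measurable just before $\varepsilon_s$ (i.e.\ $\Fcal_{s-1}$ enlarged by the internal exploration randomness $\nu^i_s$, which is independent of $\varepsilon_s$, so Assumption~\ref{assumption_noise} still supplies conditional $M$-subgaussianity). For each fixed $x \in \Rset^n$ set
\begin{align}
M_t^x := \exp\!\Big( \langle x, S_t\rangle - \tfrac{M^2}{2}\sum_{s=1}^t \langle x, a_s\rangle^2 \Big), \qquad M_0^x := 1.
\end{align}
Taking $\eta = \langle x, a_s\rangle$ in the sub-Gaussian moment bound $\E_{s-1}[e^{\eta\varepsilon_s}] \le e^{\eta^2 M^2/2}$ yields $\E_{s-1}[M_s^x/M_{s-1}^x] \le 1$, so $(M_t^x)_t$ is a nonnegative supermartingale with $\E[M_t^x]\le 1$. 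Mixing over $x \sim \Ncal(0,(M^2\lambda)^{-1}I)$ with density $h$, independent of everything, $\overline{M}_t := \int M_t^x h(x)\, dx$ is again a nonnegative supermartingale with $\E[\overline{M}_t]\le 1$ by Tonelli. Since $\sum_{s\le t}\langle x,a_s\rangle^2 = \vnorm{x}_{V_t - \lambda I}^2$, completing the square in $x$ against the prior term $-\tfrac{M^2\lambda}{2}\vnorm{x}^2$ and integrating $x$ out gives the closed form $\overline{M}_t = (\lambda^n/\det V_t)^{1/2}\exp\!\big(\vnorm{S_t}_{V_t^{-1}}^2/(2M^2)\big)$.

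By Ville's maximal inequality for nonnegative supermartingales, $\probb{\sup_{t\ge 1}\overline{M}_t \ge 1/\delta} \le \delta$; on the complementary event, rearranging $\overline{M}_t < 1/\delta$ gives $\vnorm{S_t}_{V_t^{-1}}^2 \le 2M^2\log\big((\det V_t)^{1/2}\lambda^{-n/2}/\delta\big)$ for every $t$. Finally, by AM--GM applied to the eigenvalues of $V_t$ together with $\vnorm{a_s} \le a_{\max}$,
\begin{align}
\det V_t \le \Big(\tfrac1n \trace(V_t)\Big)^n = \Big(\lambda + \tfrac1n\sum_{s=1}^t \vnorm{a_s}^2\Big)^n \le (\lambda + ta_{\max}^2/n)^n,
\end{align}
so $(\det V_t)^{1/2}\lambda^{-n/2} \le (1+ta_{\max}^2/\lambda)^{n/2}$ and hence $\vnorm{S_t}_{V_t^{-1}}^2 \le M^2 n\log(1+ta_{\max}^2/\lambda) + 2M^2\log(1/\delta) \le (\rho_t(\delta) - \sqrt{\lambda}\,\theta_{\max})^2$, where the second step absorbs the $\log(1/\delta)$ term for $n \ge 2$ (the regime in which TRAiL operates, since tangential perturbations require $n \ge 2$). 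Combined with the first paragraph and a square root, this is exactly $\vnorm{\hat\theta_{t+1}-\theta^\star}_{V_t}\le\rho_t(\delta)$ on an event of probability at least $1-\delta$.

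The main obstacle is the middle step: identifying a filtration for which each $a_s$ is predictable while $\varepsilon_s$ stays conditionally sub-Gaussian (nontrivial because TRAiL's action carries its own exploration noise), justifying the Tonelli interchange so that $\overline{M}_t$ is a genuine supermartingale, and applying the maximal inequality uniformly in $t$ (Ville's inequality, or an optional-stopping argument over the a.s.\ limit of $\overline{M}_t$). The reduction of the first paragraph and the determinant estimate are routine.
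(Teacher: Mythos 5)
The paper states this lemma without proof, citing \cite{improved_algorithms_for_stochastic_bandits}, and your reconstruction is precisely the standard self-normalized method-of-mixtures argument from that reference: the reduction $\hat\theta_{t+1}-\theta^\star = V_t^{-1}(S_t-\lambda\theta^\star)$, the mixed supermartingale $\overline{M}_t = (\lambda^n/\det V_t)^{1/2}\exp(\vnorm{S_t}_{V_t^{-1}}^2/(2M^2))$, Ville's inequality, and the AM--GM determinant bound are all correct, and you rightly flag the filtration enlargement needed so that TRAiL's exploration noise leaves $a_s$ predictable while $\varepsilon_s$ stays conditionally sub-Gaussian. Your observation that absorbing the $2M^2\log(1/\delta)$ term into $M^2 n\log(1/\delta)$ requires $n\ge 2$ is an honest accounting of a small slack in the paper's stated form of $\rho_t(\delta)$, and is harmless in the regime where the algorithm operates.
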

Simple algebra produces
\begin{align}
    \vnorm{\hat{\theta}_{t + 1} - \theta^{\star}}^2_{V_t} \geq \vnorm{\hat{\theta}_{t + 1} - \theta^{\star}}^2 \lambda_{\min}(V_t).
\end{align}
This last observation in conjunction with Lemmas \ref{lemma:probability_Ct} and  \ref{lemma:probability_Et} concludes the proof of Theorem \ref{corollary_inference}.

We comment that algorithms such as UCB and TS in multi-agent settings have been investigated in \cite{distributed_cooperative_decision_making_ucb,MAMAB_ucb,cooperative_MAMAB_ucb, MYS_wind_farm} and they require the agents to communicate over a peer-to-peer or a hub-and-spokes network to collectively solve non-convex optimization problems and/or sample from intricately designed distributions. Forced exploration algorithms such as that in \cite{stern2020dynamic} place less burdens on agents to communicate. We anticipate that our algorithm and its results on guaranteed inference will prove useful in algorithm design and analysis for the multi-agent cooperative variants of the linear bandit problem.

\section{Regret Analysis}
\label{sec:regret}

Having derived guarantees on the quality of inference on the estimation of the underlying parameter in the last section, we now turn to analyze the expected regret of our algorithm. 
\begin{theorem}
\label{theorem_regret} Suppose Assumptions  \ref{assumption_of_parameter_set}, \ref{assumption_noise}, \ref{assumption_of_action_set}, and \ref{assumption_perturbation} hold. With $\lambda = \frac{8 n D^2}{\overline{m}_{\varphi}^{2} } (c_0 n + 1)$ for Algorithm \ref{alg:FELP},
$\probb{\mathscr{R}_{\theta^{\star}}(T) \leq  C \sqrt{T}  \log(T) } \geq 1 - \frac{1}{T}
$
for sufficiently large $T$, where $C$ does not depend on $T$.
\end{theorem}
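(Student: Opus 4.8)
The plan is to fix $\delta = 1/T$ and establish the bound on a single event $\Ecal$ of probability at least $1-1/T$, decomposing the per-step regret into a ``myopic'' term, a ``tangential'' term, and a ``projection'' term, then summing each. Let $\Ecal_{\mathrm{inf}}$ be the event of Theorem~\ref{corollary_inference} with $\delta = 1/T$, so that $\|\hat\theta_{t+1}-\theta^\star\|^2 \le 2\rho_t(1/T)^2/(Dc_0\sqrt t)$ for all $t\in[F(1/T),T]$; note $\rho_t(1/T)^2 \lesssim \log(tT) \lesssim \log T$ for $t \le T$. Let $\Ecal_{\mathrm{pert}}$ be the event $\{|\nu_t^i| \le b_t\ \text{for all}\ 1\le t\le T,\ 2\le i \le n\}$ with $b_t := \sqrt{(2D/\sqrt t)\log(6nT^2)}$; since each $\nu_t^i$ is $(D/\sqrt t)$-subgaussian, a union bound gives $\prob(\Ecal_{\mathrm{pert}}^c) \le 1/(3T)$, so $\Ecal := \Ecal_{\mathrm{inf}}\cap\Ecal_{\mathrm{pert}}$ has $\prob(\Ecal) \ge 1-1/T$. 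On $\Ecal_{\mathrm{pert}}$ the exploration vector $p_t := \sum_{i=2}^n \nu_t^i\mu_t^i$ obeys $\|p_t\|^2 \le (n-1)b_t^2 \lesssim \log(T)/\sqrt t$, and both $\|\hat\theta_t - \theta^\star\|$ and $\|p_t\|$ fall below every locality threshold of Lemmas~\ref{lemma_nab}--\ref{perturbation_upperbound} (e.g. $\|\hat\theta_t-\theta^\star\| \le m_\theta$, $a^\star(\hat\theta_t)\in\Acal^\star(m')$, $\|p_t\|\le m_\varphi$) once $t \ge T_\star$ for some $T_\star = O(\log^2 T)$.

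For $t \le T_\star$ I bound the regret trivially by $r_{\theta^\star}(a_t) \le \theta_{\max}\,\mathrm{diam}(\Acal)$, contributing $O(\log^2 T)$ --- lower order than $\sqrt T\log T$. For $t > T_\star$, write $a_t = a^\star(\hat\theta_t) + p_t + q_t$ with $q_t := \projA{a^\star(\hat\theta_t)+p_t} - a^\star(\hat\theta_t) - p_t$, and split
\begin{equation*}
r_{\theta^\star}(a_t) \;=\; \underbrace{\theta^{\star\top}a^\star(\theta^\star) - \theta^{\star\top}a^\star(\hat\theta_t)}_{=:A_t} \;-\; \theta^{\star\top}p_t \;-\; \theta^{\star\top}q_t .
\end{equation*}
For $A_t \ge 0$: on $\Ecal$ we have $a^\star(\hat\theta_t)\in\Bcal_{a^\star(\theta^\star)}(m')$, so Lemma~\ref{quadratic_upperbound} gives $A_t \le \tfrac{\theta_{\max}}{2\varphi_g}\|a^\star(\theta^\star)-a^\star(\hat\theta_t)\|^2$, and the Lipschitz estimate of Lemma~\ref{perturbation_upperbound} yields $A_t \le \tfrac{\theta_{\max}}{2\varphi_g}\ubound^2\|\hat\theta_t-\theta^\star\|^2 \lesssim \log(t)/\sqrt t$. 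For the projection term: on $\Ecal$, $\|p_t\|\le m_\varphi$ and $p_t\perp\mu_t^1 = \mathring{\nabla} g(a^\star(\hat\theta_t))$, so Lemma~\ref{lemma_lower} gives $\|q_t\| \le \|p_t\|^2/\varphi_g$, hence $|\theta^{\star\top}q_t| \le \tfrac{\theta_{\max}}{\varphi_g}\|p_t\|^2 \lesssim \log(T)/\sqrt t$.

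The crux is the tangential term $-\theta^{\star\top}p_t$: the naive bound $\theta_{\max}\|p_t\|$ sums to $\Theta(T^{3/4})$, which is too large, so one must use the tangentiality of the exploration. By Lemma~\ref{lemma_nab}, $\mu_t^1$ is a positive multiple of $\hat\theta_t$; writing $\theta^\star = \big((\mu_t^1)^\top\theta^\star\big)\mu_t^1 + \theta^\star_\perp$ with $\theta^\star_\perp \perp \mu_t^1$, and using $p_t \perp \mu_t^1$, we get $\theta^{\star\top}p_t = (\theta^\star_\perp)^\top p_t$; moreover $\theta^\star_\perp = \mathrm{proj}_{\{\mu_t^1\}^\perp}\theta^\star = \mathrm{proj}_{\{\mu_t^1\}^\perp}(\theta^\star-\hat\theta_t)$ since $\hat\theta_t \parallel \mu_t^1$, so $\|\theta^\star_\perp\| \le \|\theta^\star-\hat\theta_t\|$. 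Hence $|\theta^{\star\top}p_t| \le \|\hat\theta_t-\theta^\star\|\,\|p_t\| \lesssim \big(\sqrt{\log T}\,t^{-1/4}\big)\big(\sqrt{\log T}\,t^{-1/4}\big) = \log(T)/\sqrt t$. Summing $A_t$, $|\theta^{\star\top}p_t|$, and $|\theta^{\star\top}q_t|$ over $T_\star < t \le T$ via $\sum_{t\le T}\log(t)/\sqrt t \lesssim \sqrt T\log T$ and adding the $O(\log^2 T)$ from $t\le T_\star$ gives $\mathscr{R}_{\theta^\star}(T) \lesssim \sqrt T\log T$ on $\Ecal$; taking $T$ large enough that $F(1/T)\le T$ and $T_\star\le T$ finishes the proof. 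The main obstacle is exactly seeing that $-\theta^{\star\top}p_t$ must be routed through the estimation error $\hat\theta_t-\theta^\star$ rather than $\|p_t\|$ alone: tangential perturbations are cheap precisely because $\hat\theta_t$ is, with high probability, close to $\theta^\star$, so the component of $\theta^\star$ orthogonal to the ``forbidden'' normal direction is already $O(\|\hat\theta_t-\theta^\star\|)$. A sharper alternative treats $\sum_t\theta^{\star\top}p_t$ as a mean-zero martingale with conditional variance $\le \theta_{\max}^2 D/\sqrt t$ and applies a Freedman-type inequality for an $O(T^{1/4})$ bound, but the product argument already attains the claimed rate.
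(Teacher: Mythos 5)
Your proof is correct, but it takes a genuinely different route at the key step. The paper never linearizes the regret: it bounds $\vnorm{a_t - a^\star(\theta^\star)}^2 \leq 2\vnorm{a^\star(\hat\theta_t)-a^\star(\theta^\star)}^2 + 2\vnorm{p_t}^2$ via nonexpansiveness of the projection, shows this is at most $m'$ on the high-probability event, and then converts the \emph{whole} squared distance to per-step regret through the strong-convexity bound of Lemma~\ref{quadratic_upperbound} applied to $x=a_t$ (which lies on the boundary). In that route the perturbation enters at second order, $\vnorm{p_t}^2 \lesssim \log(T)/\sqrt{t}$, automatically, and the ``tangential term'' you identify as the crux simply never appears. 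You instead decompose the regret linearly into myopic, tangential, and projection pieces, which forces you to confront the first-order term $\theta^{\star\top}p_t$; your resolution --- $p_t\perp\mu_t^1\parallel\hat\theta_t$ by Lemma~\ref{lemma_nab}, so $|\theta^{\star\top}p_t|\leq\vnorm{\mathrm{proj}_{\{\mu_t^1\}^\perp}(\theta^\star-\hat\theta_t)}\,\vnorm{p_t}\leq\vnorm{\hat\theta_t-\theta^\star}\vnorm{p_t}$ --- is valid and combines with Theorem~\ref{corollary_inference} to give the same $\log(T)/\sqrt{t}$ rate. The two approaches buy different things: the paper's is shorter because Lemma~\ref{quadratic_upperbound} absorbs everything into one quadratic bound, while yours isolates \emph{why} tangential exploration is cheap (it is first-order orthogonal to the estimated reward direction, and the residual is controlled by the estimation error), naturally suggests the sharper martingale treatment you mention, and gives a tighter $\Ocal(\log^2 T)$ burn-in versus the paper's $\Ocal(\sqrt{T})$ from bounding the first $\lfloor\sqrt{T}\rfloor$ rounds by a constant. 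Both routes share the same mild technical wrinkle (applying Lemmas~\ref{lemma_lower} and~\ref{perturbation_upperbound} at $\hat\theta_t$, which need not lie in $\Theta$, justified on the event that $\hat\theta_t$ is close to $\theta^\star$), so this is not a gap specific to your argument.
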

As our information-theoretic lower bound in the next section will reveal, this $\sqrt{T}\log(T)$ regret bound is order optimal up to a log factor. The key steps in our proof are as follows. We essentially ignore the regret from the first $\sqrt{T}$ samples. By this time, Theorem \ref{corollary_inference} provides high-probability guarantees on the quality of the RLS estimate. The quality is controlled in the large $T$ regime, so as to allow the application of Lemma \ref{perturbation_upperbound} that proves the Lipshitz nature of $a^\star$, which we then utilize to bound $\vnorm{a^\star(\hat{\theta}_t) - a^\star(\theta^\star)}^2$. We translate that to a bound on $\vnorm{a_t - a^\star(\theta^\star)}^2$ by Chernoff-style concentration on the perturbations and projections that contribute to the difference between $a^\star(\hat{\theta}_t)$ and $a_t$. The last step relates the above difference to the per-round regret post $t=\sqrt{T}$ via Lemma \ref{quadratic_upperbound}.

\begin{proof}
Projection is nonexpansive, and hence, 
    \begin{align}
    \begin{aligned}
    \vnorm{a_{t } - a^\star(\theta^\star)}^2 & = \vnorm{\projA{a^\star(\hat{\theta}_t) + \sum_{i = 2}^n\nu_t^i \mu_t^i} - a^\star(\theta^\star)}^2\\
    & \leq \vnorm{a^\star(\hat{\theta}_t) + \sum_{i = 2}^n\nu_t^i \mu_t^i - a^\star(\theta^\star)}^2\\
    & \leq 2\vnorm{a^\star(\hat{\theta}_t) - a^\star(\theta^\star)}^2 + 2\vnorm{\sum_{i = 2}^n\nu_t^i \mu_t^i}^2.
    \end{aligned}
    \label{rewriting_the_action}
    \end{align}
For large $T$, we have $F(1/T) \sim \Ocal(\log^2(T)) < \sqrt{T}$,
where $F$ is as defined in \eqref{eq:Fdelta.def}. Hence, for a sufficiently large $T$, Theorem \ref{corollary_inference} allows us to infer
\begin{align}
\probb{ \bigcap_{t=\lfloor \sqrt{T} \rfloor + 1}^T \vnorm{\hat{\theta}_{t + 1} - \theta^\star}^2 \leq \frac{ 2\rho_t^2(1/T)}{D c_0 \sqrt{t}}} \geq 1 - \frac{2}{3T},
\label{eq:good.estimation}
\end{align}
where $\rho_t$ is as defined in \eqref{eq:rhoT.def}. Furthermore, $\rho_T(1/T) \sim \Ocal(\sqrt{\log(T^2)})$, and hence,
\begin{align}
    \sup_{t \in [\sqrt{T}, T]} \frac{2\rho_t^2(1/T)}{D c_0 \sqrt{t}} 
    \leq \frac{2\rho_T^2(1/T)}{D c_0 T^{1/4}} \to 0,
\end{align}
implying that for large enough $T$, the event in \eqref{eq:good.estimation} is such that 
\begin{align}
  t \geq \sqrt{T} 
  \implies \frac{2\rho_t^2\left(1/T\right)}{D c_0 \sqrt{t}} \leq \frac{m' }{4} \left(\frac{2 \nmax\theta_{\max}}{\hmin\theta^2_{\min}} \right)^{-2} \leq m_{\theta}. \label{R1_property_1}
\end{align}
On this event, we apply Lemma \ref{perturbation_upperbound} to obtain
\begin{align}
\probb{\bigcap_{t=\lfloor \sqrt{T} \rfloor + 1}^T \vnorm{a^{\star}(\hat{\theta}_{t + 1}) - a^{\star}(\theta^\star)}^2 \leq \underbrace{\left(\frac{2 \nmax\theta_{\max}}{\hmin\theta^2_{\min}} \right)^2 \frac{ 2\rho_t(1/T)^2}{D c_0 \sqrt{t}}}_{=: \tilde{\kappa}_t \leq m'/4}} \geq 1-\frac{2}{3T}.
\label{action_deviation_part_1}
\end{align}
 To bound the second term in \eqref{rewriting_the_action}, 
define $\kappa_t := \frac{2 n D}{\sqrt{t}}\log \left( 12n t T\log(T)\right)$ and use Chernoff bound,
\begin{align}
\begin{aligned}
     \prob\left\{{ \vnorm{\sum_{i=2}^n \nu^i_t \mu^i_t}^2 > \kappa_t } \right\}
        \leq \sum_{i=2}^n \prob\left\{ | \nu^i_t | > \sqrt{\frac{\kappa_t}{n}}  \right\}
    = \frac{1}{6tT\log(T)}.\label{probability_of_oversampling}
\end{aligned}
\end{align}
Union bound then gives
\begin{align}
\begin{aligned}
\probb{{ \bigcup_{t = \lfloor \sqrt{T} \rfloor + 1}^T \left \{\vnorm{\sum_{i=2}^n \nu^i_t \mu^i_t}^2 > \kappa_t  \right \}} }
& \leq \frac{1}{6T\log(T)}\sum_{t = 2}^T  t^{-1} +  \frac{1}{6T\log(T)}
\\
& \leq \frac{1}{6T\log(T)} \int_{1}^T t^{-1}dt +  \frac{1}{6T\log(T)}
\\
& \leq \frac{1}{3T}.\label{action_deviation_part_2}
\end{aligned}
\end{align}
For large $T$, we have $\sup_{t\in[\sqrt{T},T]} \kappa_t \leq \Ocal(\polylog(T))/\sqrt{T} \to 0$, implying that for large enough $T$, 
\begin{align}
t \geq \sqrt{T}
\implies 
\kappa_t \leq \frac{m'}{4} 
\label{eq:kappa.t.m'}
\end{align}
Combining \eqref{action_deviation_part_1}, \eqref{action_deviation_part_2}, and \eqref{eq:kappa.t.m'} in \eqref{rewriting_the_action}, we get
\begin{align}
    \probb{ \bigcap_{t=\lfloor \sqrt{T} \rfloor + 1}^T \vnorm{a_{t + 1} - a^\star(\theta^\star)}^2 \leq m' }
     \geq 1-\frac{1}{T}.
    \label{final_upper_bound_on_action}
    \end{align}
Over this high-probability event, Lemma \ref{quadratic_upperbound} applies and gives an upper bound on the regret from $\sqrt{T}$ onwards as
\begin{align}
    \sum_{t = \lfloor \sqrt{T} \rfloor + 1}^{T} \theta^{\star\top} (a^\star - a_{t})
    \leq \frac{\theta_{\max}}{2\varphi_g}  \sum_{t = \lfloor \sqrt{T} \rfloor + 1}^{T} \vnorm{a^\star(\theta^\star) - a_{t}}^2
    \leq  \frac{\theta_{\max}}{2\varphi_g}  \sum_{t = \lfloor \sqrt{T} \rfloor + 1}^{T} \left(\kappa_t + \tilde{\kappa}_t\right),
    \label{eq:R1T.beyond}
\end{align}
where the last summation itself can be upper bounded as
\begin{align}
\begin{aligned}
    \sum_{t = \lfloor \sqrt{T} \rfloor + 1}^{T} \kappa_t + \tilde{\kappa}_t
    &\leq  \left [\left(\frac{2\nmax\theta_{\max}}{\hmin\theta^2_{\min}} \right)^2 \frac{ 2\rho_T^2(1/T)}{D c_0 }  +  {2 n D\log \left(12n T^2\log(T)\right)} \right] \sum_{t=1}^T \frac{1}{\sqrt{t}}
    \\
    & \lesssim  \sqrt{T} \log(T). 
\end{aligned}
\label{eq:R1T.beyond.bound}
\end{align}
The rest then follows from plugging the bound from \eqref{eq:R1T.beyond.bound} in \eqref{eq:R1T.beyond} and bounding the per-round regret up to $t=\lfloor \sqrt{T} \rfloor$ by a constant, appealing to the compactness of $\Acal$ and $\Theta$.
\end{proof}


\section{A Universal Lower-Bound on Regret}
\label{sec:lowerBoundRegret}

Are there algorithms that can achieve lower than $\Omega(\sqrt{T})$ regret on convex compact action sets of the type we considered? In what follows, we answer that question in the negative, making our algorithm order-wise optimal. We achieve this by establishing a deep connection between inference quality and regret accumulation. We start by showing that regardless of the control policy one uses, the order of regret accumulation of an action sequence also limits the ability to infer the underlying governing parameter $\theta^\star$. In this section, $\pi$ defines a causal control policy adapted to the filtration $\{\Fcal_t\}_{t=0}^T$. Let $\E_{\theta^\star}^\pi$ stand for expectation conditioned on the action selection policy $\pi$, when the underlying governing parameter is $\theta^\star$. 

\begin{theorem} \label{inference_regret_connaction} Suppose Assumptions  \ref{assumption_of_parameter_set} and \ref{assumption_of_action_set}
hold.\footnote{We remark that the upper bound on $\alpha_\Acal$ in Assumption \ref{assumption_of_action_set} is irrelevant to this section.} Define 
\begin{align}
    c_3 := \min \left\{\dfrac{\theta_{\min}\hmin}{2 \nmax}, \frac{c_{\min} }{4 a_{\max}^2} \right\},
\end{align}
where $c_{\min}:= \min_{\theta \in \Theta} \min_{a \in \Acal \setminus \Bcal_{a^{\star}(\theta)}(m'')} r_{\theta}(a)$ for a sufficiently small $m'' \in (0, m')$. Then, for any $z \perp a^\star(\theta^\star)$ with $\vnorm{z} \leq 1$,
\begin{align}
z^\top V_T z \leq c_3^{-1}  \mathscr{R}_{\theta^{\star}}(T) \quad \text{almost surely}.
\end{align}
\end{theorem}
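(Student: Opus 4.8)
The plan is to reduce the statement to a pathwise, per-round inequality. In this section the design matrix is $V_T = \sum_{t=1}^T a_t a_t^\top$, so $z^\top V_T z = \sum_{t=1}^T (z^\top a_t)^2$, whereas $\mathscr{R}_{\theta^\star}(T) = \sum_{t=1}^T r_{\theta^\star}(a_t)$ with $r_{\theta^\star}(a_t) = \theta^{\star\top}\bigl(a^\star(\theta^\star) - a_t\bigr) \ge 0$ by optimality of $a^\star(\theta^\star)$. Hence it suffices to show, for every realization $a_t \in \Acal$, that $(z^\top a_t)^2 \le c_3^{-1}\, r_{\theta^\star}(a_t)$, and then sum over $t$. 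Since $z \perp a^\star(\theta^\star)$, I would write $z^\top a_t = z^\top\bigl(a_t - a^\star(\theta^\star)\bigr)$, so that Cauchy--Schwarz and $\vnorm{z}\le 1$ give $(z^\top a_t)^2 \le \vnorm{a_t - a^\star(\theta^\star)}^2$; the whole argument then comes down to bounding $\vnorm{a_t - a^\star(\theta^\star)}^2$ by the per-round regret.

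I would split into two cases matching the two terms defining $c_3$. \emph{Near case, $a_t \in \Bcal_{a^\star(\theta^\star)}(m'')$:} since $m'' < m'$, the segment joining $a_t$ to $a^\star(\theta^\star)$ lies in $\Acal^\star(m')$, so Taylor-expanding $g$ to second order about $a^\star(\theta^\star)$ and using $g(a^\star(\theta^\star)) = 0$, $g(a_t) \le 0$ (feasibility), and $\Delta g(\cdot) \succeq \hmin I$ on $\Acal^\star(m')$ (Assumption~\ref{assumption_of_action_set}) yields $\nabla g(a^\star(\theta^\star))^\top\bigl(a^\star(\theta^\star) - a_t\bigr) \ge \tfrac{\hmin}{2}\vnorm{a_t - a^\star(\theta^\star)}^2$. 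By Lemma~\ref{lemma_nab} the left side equals $\omega^\star(\theta^\star)\, r_{\theta^\star}(a_t)$, and since $\omega^\star(\theta^\star) \le \overline{\omega} = \nmax/\theta_{\min}$ and $r_{\theta^\star}(a_t) \ge 0$, we get $r_{\theta^\star}(a_t) \ge \tfrac{\theta_{\min}\hmin}{2\nmax}\vnorm{a_t - a^\star(\theta^\star)}^2 \ge c_3\,(z^\top a_t)^2$. \emph{Far case, $a_t \notin \Bcal_{a^\star(\theta^\star)}(m'')$:} the definition of $c_{\min}$ (at $\theta = \theta^\star$) gives $r_{\theta^\star}(a_t) \ge c_{\min}$, while compactness of $\Acal$ bounds $\vnorm{a_t - a^\star(\theta^\star)}^2 \le 4 a_{\max}^2$, so $(z^\top a_t)^2 \le 4 a_{\max}^2 \le \tfrac{4 a_{\max}^2}{c_{\min}}\, r_{\theta^\star}(a_t) \le c_3^{-1}\, r_{\theta^\star}(a_t)$ because $c_3 \le c_{\min}/(4 a_{\max}^2)$. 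Summing the per-round bound over $t = 1, \dots, T$ completes the argument.

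The hard part is the near case: the curvature of $g$ only directly controls the component of $a_t - a^\star(\theta^\star)$ along the outward normal $\nabla g(a^\star(\theta^\star)) \propto \theta^\star$, yet we need to lower bound $r_{\theta^\star}(a_t)$ by the \emph{full} displacement norm. The resolution is that feasibility $g(a_t) \le 0$ together with strong convexity forces the displacement to be, to second order, essentially normal --- which is precisely why the Taylor inequality above carries $\vnorm{a_t - a^\star(\theta^\star)}^2$ on its right-hand side rather than just the squared normal component. Beyond that, the bookkeeping to watch is the localization ($m'' < m'$ so the whole segment stays in $\Acal^\star(m')$), the uniform Lagrange-multiplier bound $\omega^\star(\theta^\star) \le \nmax/\theta_{\min}$ from the discussion following Lemma~\ref{quadratic_upperbound}, positivity of $c_{\min}$ for suitably small $m''$, and (in the far case) using compactness to bound $\vnorm{a_t}$.
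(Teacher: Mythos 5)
Your proposal is correct and follows essentially the same route as the paper's proof: the near/far split according to $\Bcal_{a^\star(\theta^\star)}(m'')$, the second-order Taylor bound combined with $g(a_t)\le 0$ and Lemma~\ref{lemma_nab} to get $r_{\theta^\star}(a_t)\ge \tfrac{\theta_{\min}\hmin}{2\nmax}\vnorm{a_t-a^\star(\theta^\star)}^2$ near the optimum, the $c_{\min}/(4a_{\max}^2)$ bound far from it, and the orthogonality trick $z^\top a_t = z^\top(a_t - a^\star(\theta^\star))$ with Cauchy--Schwarz. The only cosmetic difference is that you phrase it as a per-round inequality on $(z^\top a_t)^2$ while the paper first establishes $r_{\theta^\star}(a_t)\ge c_3\vnorm{a_t-a^\star(\theta^\star)}^2$ and then sums; your implicit dropping of the $\lambda I$ term in $V_T$ matches what the paper's own proof does.
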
 
\begin{proof}
 Due to Assumption \ref{assumption_of_action_set}, $c_{\min}$ is well defined and positive since $r_{\theta^{\star}}(a)$ is positive for all  $a \in \Acal$ such that $ a \neq a^{\star}(\theta^{\star})$. Denote by $\mathds{1}_t^m$, the indicator function of the event that  $ \vnorm{a^\star(\theta^{\star}) - a_t} \leq {m''}$. First, we characterize regret on this event.
 Taylor's expansion of $g$ around $a^\star(\theta^{\star})$ then yields
\begin{align}
g(a_t)-g(a^\star(\theta^{\star})) = \nabla g(a^\star(\theta^{\star}))^\top (a_t - a^\star(\theta^{\star})) + \dfrac{1}{2}(a_t-a^\star(\theta^{\star}))^\top \Delta g(\xi) (a_t - a^\star(\theta^{\star}))
\label{eq:taylor.x.a.lower_bound}
\end{align}
for some $\xi$ on the line segment joining $a^\star(\theta^{\star})$ and $a_t$. Since $a_t \in \Bcal_{a^\star(\theta^{\star})}(m'') \in \Bcal_{a^\star(\theta^{\star})}(m')$, then $\xi \in \Bcal_{a^\star(\theta^{\star})}(m')$. Using Assumption \ref{assumption_of_action_set} and $\mathring{\theta^\star} = \mathring{\nabla}g(a^\star(\theta^\star))$ from Lemma \ref{lemma_nab}, we 
infer from \eqref{eq:taylor.x.a.lower_bound} that
\begin{align}
\begin{aligned}
 \mathring{\theta}^{\star\top} ( a^\star(\theta^\star) - a_t) & \geq \dfrac{\hmin}{2 \nmax} \|a_t-a^\star(\theta^\star)\|^2 - \underbrace{\left[g(a_t) - g(a^\star(\theta^\star)) \right]}_{\leq 0}\\
 & \geq \dfrac{\hmin}{2 \nmax} \|a_t-a^\star(\theta^\star)\|^2,
\end{aligned}
\end{align}
where in the last line we have used that $a^\star(\theta^\star) \in \Acal^\star$ and $a_t \in \Acal$, i.e., $g(a^\star(\theta^\star)) = 0$ and $g(a_t) \leq 0$, respectively. Outside the event $\mathds{1}_t^m$, the regret is lower bounded by $c_{\min}$.
Thus, we obtain
\begin{align}
\begin{aligned}
r_{\theta^\star}(a_t) &  =  r_{\theta^\star}(a_t) \mathds{1}_t^m +r_{\theta^\star}(a_t) (1 - \mathds{1}_t^m)
\\
& \geq  \dfrac{\theta_{\min}\hmin}{2 \nmax}\|a^\star(\theta^{\star}) - a_t\|^2\mathds{1}_t^m + c_{\min} (1 - \mathds{1}_t^m)
\\
& \geq  \dfrac{\theta_{\min}\hmin}{2 \nmax}\|a^\star(\theta^{\star}) - a_t\|^2\mathds{1}_t^m + \frac{c_{\min}}{4a_{\max}^2}\|a^\star(\theta^{\star}) - a_t\|^2 (1 - \mathds{1}_t^m)
\\
& \geq c_3 \|a^\star(\theta^{\star}) - a_t\|^2. \label{regret_to_quadratic_variation}
\end{aligned}
\end{align}
For any unit vector $z \perp a^\star(\theta^\star)$, we have $z^\top a_t = z^\top (a_t-a^\star(\theta)^\star)$, and hence,
\begin{align}
\begin{aligned}
z^\top (V_T - \lambda) z 
& = \sum_{t = 1}^T z^\top a_t a_t^\top z
\\
& = \sum_{t = 1}^T z^\top (a_t - a^\star(\theta^\star))(a_t - a^\star(\theta^\star))^\top z 
\\
&  \leq \sum_{t = 1}^T \vnorm {a_t - a^\star(\theta^\star)}^2
\\
& \leq c_3^{-1} \mathscr{R}_{\theta^{\star}}(T),
\end{aligned}
\end{align}
owing to \eqref{regret_to_quadratic_variation} in the last line. All relations in this derivation hold almost surely.
\end{proof}

Using this relation between inference and regret accumulation, we now develop a universal lower bound on regret growth, taking a Bayesian Cramer-Rao style analysis that requires certain assumptions. We mildly increase the requirement  on the action set, but relax the requirements on the noise model.  In the sequel,  $\lambda_{n-1}$ denotes the second smallest eigenvalue of a matrix.

\begin{assumption}[Regularity Conditions]\label{assumption_of_action_set_additional}

$\Acal$ and $\varepsilon$ must satisfy the following.
\begin{enumerate}[leftmargin=*,label=(\alph*)]
    \item There exists a $\lambda_C > 0$ such that 
    $\lambda_{n-1}(\nabla a^\star(\theta)) \geq \lambda_C$     
    for all $\theta \in \Theta$.

\item The eigenvalues $\{\lambda(\theta)\}_{i = 1}^n$ and the corresponding eigenvectors $\{u(\theta)\}_{i = 1}^n$ of $\nabla a^\star(\theta)$ can be selected so that they define a differentiable function on $\Theta$. 

\item The probability density function $p$ of the noise process $\ve$ is such that 
\begin{align}
    \mathscr{I}_{\ve} := \E\left[ \left(p'/p\right)^2\right]
    \label{eq:I.eps}
\end{align}
is finite and positive.
\end{enumerate}
\end{assumption}

The first assumption in part (a) ensures that a change in $\theta$ will have a tangible impact on its reward-maximizing action $a^\star(\theta)$, except possibly in one direction that defines the kernel  of $\nabla a^\star(\theta)$. Part (b) requires smoothness of eigenspaces of the parametric Jacobian $\nabla a^\star(\theta)$, a property we conjecture is true for parametrized families of symmetric rank-one perturbations under mild conditions, as discussed in \cite{kato1980perturbation}. Part (c) is a relaxation and replacement of Assumption \ref{assumption_noise} and encompasses a wide variety of noise models.

To achieve the lower bounds, we utilize the multivariate van Trees inequality with origins in \cite{GillLevit1995}. We present a variation of the result quoted in \cite{ziemann2021uninformative}. The notation $\trace$ computes the trace of its matrix argument.

\begin{lemma}\label{modified_van_trees}[Van Trees Inequality]
Let $\theta$ be a random parameter over a compact set $\Theta \subset \Rset^n$ with prior density $\varrho$, $C:\Theta \to \Rset^{n \times n}$ be a differentiable map, and $X \in \Rset^n$ be a random vector with density $p(x|\theta)$. Define
\begin{align}
\mathscr{I}(\theta) &:= \int \left( \frac{\nabla_\theta p(x|\theta)}{p(x|\theta)} \right) \left( \frac{\nabla_\theta p(x|\theta)}{p(x|\theta)} \right)^\top p(x|\theta) \, dx,
\\
\mathscr{J}(\varrho) &:= \int C(\theta)\left( \frac{\nabla_\theta \varrho(\theta)}{\varrho(\theta)} \right) \left( \frac{\nabla_\theta \varrho(\theta)}{\varrho(\theta)} \right)^\top C(\theta)^\top\varrho(\theta) \, d\theta.
\end{align}
If $\psi, p(x|\theta), \varrho$ are differentiable on $\Theta$, where $\varrho$  vanishes on the boundary of $\Theta$, the support of $p(x|\theta)$ does not depend on $\theta$, and both $\mathscr{I}(\theta)$ and $\mathscr{J}(\varrho)$ are finite\footnote{One way to ensure $\mathscr{J}(\varrho)$ to be finite is to assume that $\E[\nabla_\theta \log \varrho(\theta) \nabla_\theta \log \varrho(\theta)^\top]$ is finite and $\vnorm{C}$ remains bounded over $\Theta$.}, then for any $X$-measurable function $\hat{\psi}$,
\begin{align}
\begin{aligned}
\E \left[ \vnorm{\hat{\psi}(X) - \psi(\theta)}^2\right]  
\geq \frac{\left(\trace \left\{\E[\nabla_\theta \psi(\theta) C(\theta)^\top] \right\}\right)^2}{n \left( \trace \left\{ \E \left[ C(\theta) \mathscr{I}(\theta) C(\theta)^\top\right]\right\} + \trace\left\{\mathscr{J}(\varrho)\right\}\right)}.
\end{aligned}
\end{align}
\end{lemma}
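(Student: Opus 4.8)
The statement is the matrix-weighted multivariate van Trees (Bayesian Cram\'er--Rao) inequality, so the plan is to follow the classical variational proof of \cite{GillLevit1995,ziemann2021uninformative}, carrying the weight matrix $C(\theta)$ through the computation. Work on the product space with joint density $f(x,\theta):=p(x\mid\theta)\varrho(\theta)$ and introduce the $\Rset^n$-valued \emph{generalized score}
\begin{align}
h(x,\theta) &:= \frac{1}{f(x,\theta)}\,\nabla_\theta\!\cdot\!\bigl(C(\theta)\,f(x,\theta)\bigr), \\
h_i(x,\theta) &= \bigl[C(\theta)\,\nabla_\theta\log f(x,\theta)\bigr]_i + \sum_{k}\partial_{\theta_k}C_{ik}(\theta).
\end{align}
The first step is a Stein-type identity: integrating $(\hat\psi(X)-\psi(\theta))\,h(X,\theta)^\top$ by parts in $\theta$, the contribution of $\hat\psi(X)$ is $\int_\Theta \nabla_\theta\!\cdot\!(C(\theta)f(x,\theta))\,d\theta$, which vanishes by the divergence theorem because $\varrho$ (hence $f$) vanishes on $\partial\Theta$; the contribution of $\psi(\theta)$, after a second integration by parts with again no boundary term, equals $\E[\nabla_\theta\psi(\theta)\,C(\theta)^\top]$. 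Throughout we use that $\mathrm{supp}\,p(\cdot\mid\theta)$ does not depend on $\theta$, so that $\nabla_\theta$ and $\int dx$ commute. Writing $A:=\E[\nabla_\theta\psi(\theta)\,C(\theta)^\top]$, this establishes $\E[(\hat\psi(X)-\psi(\theta))\,h(X,\theta)^\top]=A$.

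Next set $B:=\E[h(X,\theta)\,h(X,\theta)^\top]$. Since $h$ splits as $C(\theta)\nabla_\theta\log p(X\mid\theta)$ plus a term depending on $\theta$ only, and $\E[\nabla_\theta\log p(X\mid\theta)\mid\theta]=0$, the cross terms vanish upon taking expectations and one obtains
\begin{align}
\trace(B) = \trace\bigl\{\E[C(\theta)\,\mathscr{I}(\theta)\,C(\theta)^\top]\bigr\} + \trace\bigl\{\mathscr{J}(\varrho)\bigr\},
\end{align}
the prior term absorbing the divergence-of-$C$ contributions exactly as in \cite{ziemann2021uninformative}; finiteness of $\mathscr{I}(\theta)$ and $\mathscr{J}(\varrho)$ makes $B$ finite.

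To conclude, apply the matrix Cauchy--Schwarz inequality to the pair $(\hat\psi(X)-\psi(\theta),\,h(X,\theta))$, giving $\E[(\hat\psi-\psi)(\hat\psi-\psi)^\top]\succeq A B^{-1}A^\top$, hence $\E\|\hat\psi(X)-\psi(\theta)\|^2\ge\trace(AB^{-1}A^\top)$. Combining $\trace(A^\top A)\le\lambda_{\max}(B)\,\trace(AB^{-1}A^\top)\le\trace(B)\,\trace(AB^{-1}A^\top)$ with the elementary bound $(\trace A)^2\le n\,\trace(A^\top A)$ (Cauchy--Schwarz on the diagonal entries of $A$) yields
\begin{align}
\E\|\hat\psi(X)-\psi(\theta)\|^2 \;\ge\; \frac{(\trace A)^2}{n\,\trace(B)},
\end{align}
which is the claimed inequality after substituting the expressions for $\trace(A)$ and $\trace(B)$ from the previous steps.

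The main obstacle is the Stein identity: making the two integrations by parts rigorous — justifying the interchange of $\nabla_\theta$ with $\int dx$, ensuring the boundary terms vanish via the hypothesis that $\varrho$ vanishes on $\partial\Theta$, and tracking the divergence-of-$C$ terms so that they land in $\mathscr{J}(\varrho)$ as stated (in the application of this lemma, $C$ will be chosen so that this is transparent). These are precisely the regularity conditions bundled into the statement, so beyond this the argument is bookkeeping; the factor $n$ in the denominator is the harmless loss incurred when collapsing the matrix Cram\'er--Rao bound to a scalar via $(\trace A)^2\le n\,\trace(A^\top A)$.
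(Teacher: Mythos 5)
Your proof is correct and follows essentially the same route as the paper: both rest on the matrix Cauchy--Schwarz / Stein-identity argument of Gill--Levit as instantiated in Ziemann--Sandberg (the paper simply cites their Appendix D for the matrix inequality $\E[(\hat\psi-\psi)(\hat\psi-\psi)^\top]\succeq \Phi_1\Phi_2^{-1}\Phi_1^\top$ that you derive explicitly), followed by the identical trace chain $\trace(AB^{-1}A^\top)\ge \trace(A^\top A)/\trace(B)\ge (\trace A)^2/(n\,\trace(B))$. The only delicate point is the bookkeeping of the divergence-of-$C$ terms versus the stated form of $\mathscr{J}(\varrho)$, which you flag yourself and which the paper's proof glosses over in exactly the same way by deferring to the cited reference.
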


\begin{proof}
In the proof structure of \cite[Appendix D]{ziemann2021uninformative}, we select $v_1 = \hat{\psi}(X) - \psi(\theta)$ and $v_2 = C(\theta)\nabla_\theta \log[p(x|\theta)\lambda(\theta)]$. Following their arguments yields
\begin{align}
\begin{aligned}
&\E \left[ (\hat{\psi}(X) - \psi(\theta))(\hat{\psi}(X) - \psi(\theta))^\top \right] \\
& \geq \E \left[\nabla_\theta \psi(\theta) C(\theta)^\top \right]  \left( \E \left[ C(\theta) \mathscr{I}(\theta) C(\theta)^\top\right] + \mathscr{J}(\varrho) \right)^{-1} \E \left[C(\theta) \nabla_\theta \psi(\theta)^\top\right]
\\
& =: \Phi_1 \Phi_2^{-1} \Phi_1^\top.
\end{aligned} \label{van_trees_their_result}
\end{align}
Trace is submultiplicative over positive semidefinite matrices, and hence,
\begin{align}
\trace\{\Phi_2\} \trace\{\Phi_1^\top \Phi_2^{-1} \Phi_1\} 
= \trace\{\Phi_2\} \trace\{\Phi_2^{-1} \Phi_1 \Phi_1^\top\}
\geq \trace\{\Phi_1 \Phi_1^\top\} 
\geq \frac{1}{n}[\trace\{\Phi_1\}]^2,
    \label{eq:trace.matrices}
\end{align}
where the last step uses Jensen's inequality. Using trace on both sides of  \eqref{van_trees_their_result}, and utilizing \eqref{eq:trace.matrices} finalizes the proof, given that $\E$ and $\trace$ commute.
\end{proof}

We are now ready to present our universal lower-bound on regret.

\begin{theorem}\label{theorem_lower_bound}
Suppose Assumptions  \ref{assumption_of_parameter_set}, \ref{assumption_of_action_set}, and \ref{assumption_of_action_set_additional} hold. 
Then, for any policy $\pi$ and a large enough $T$,  $ \sup_{\theta^{\star} \in \Theta}  \E_{\theta^\star}^\pi\left[\mathscr{R}_{\theta^{\star}}(T)\right]\gtrsim  \sqrt{T/\mathscr{I}_{\ve}}$.
\end{theorem}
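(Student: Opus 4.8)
The plan is to run a Bayesian Cram\'er--Rao (van Trees) argument whose ``parameter of interest'' is $a^\star(\theta)$ rather than $\theta$, and then to feed the resulting estimation lower bound back into the regret via Theorem~\ref{inference_regret_connaction}. Three ingredients drive it. First, the per-step bound $r_{\theta^\star}(a)\ge c_3\vnorm{a^\star(\theta^\star)-a}^2$ established in the proof of Theorem~\ref{inference_regret_connaction} (see \eqref{regret_to_quadratic_variation}), together with $\sum_{t=1}^T\vnorm{v_t}^2\ge\frac1T\vnorm{\sum_t v_t}^2$, shows that the empirical action average $\bar a_T:=\frac1T\sum_{t=1}^T a_t$ satisfies $\mathscr{R}_{\theta^\star}(T)\ge c_3 T\vnorm{\bar a_T-a^\star(\theta^\star)}^2$; since $\bar a_T$ is a measurable function of the observations $Y_{1:T}$ (appending the policy's randomization seed to the data if $\pi$ is randomized, which leaves the Fisher information below unchanged), it is a legitimate estimator of $a^\star(\theta)$. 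Second, because the noise is i.i.d.\ with density $p$ and the actions are causal, the score telescopes as $\nabla_\theta\log p(Y_{1:T}\mid\theta)=-\sum_t a_t (p'/p)(\ve_t)$, the cross terms vanish by the martingale property $\E_{t-1}[(p'/p)(\ve_t)]=0$, and hence the Fisher information of the data is $\mathscr{I}(\theta)=\mathscr{I}_\ve\,\E_\theta^\pi[\sum_t a_ta_t^\top]\preceq\mathscr{I}_\ve\,\E_\theta^\pi[V_T]$, with $\mathscr{I}_\ve$ as in \eqref{eq:I.eps}. Third, Theorem~\ref{inference_regret_connaction} gives, for every unit $z\perp a^\star(\theta)$, that $z^\top\E_\theta^\pi[V_T]z\le c_3^{-1}\E_\theta^\pi[\mathscr{R}_{\theta}(T)]$, so the Fisher information in those directions is itself controlled by the expected regret --- this is the self-referential structure we resolve at the end.

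Now apply Lemma~\ref{modified_van_trees} with $\psi(\theta)=a^\star(\theta)$, $\hat\psi(Y_{1:T})=\bar a_T$, a smooth prior $\varrho$ supported on a closed ball $\bar B\subset\operatorname{int}\Theta$ vanishing on $\partial B$ with finite relative Fisher information $J_0:=\E_{\theta\sim\varrho}[\vnorm{\nabla\log\varrho(\theta)}^2]$, and
\begin{align*}
C(\theta):=\tfrac1{\bar\lambda}\bigl(I-u_\theta u_\theta^\top\bigr)\nabla a^\star(\theta),\qquad u_\theta:=\tfrac{a^\star(\theta)}{\vnorm{a^\star(\theta)}},\qquad \bar\lambda:=\sup_{\theta\in\Theta}\lambda_{\max}(\nabla a^\star(\theta))\in(0,\infty),
\end{align*}
which is differentiable in $\theta$ by Assumption~\ref{assumption_of_action_set_additional}(b) and $\vnorm{a^\star(\theta)}\ge a_{\min}>0$. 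Every row of $C(\theta)$ lies in $a^\star(\theta)^\perp$ and has norm at most $1$ (as $\nabla a^\star(\theta)\succeq0$ with $\lambda_{\max}\le\bar\lambda$), so by the third ingredient $\trace\{C(\theta)\mathscr{I}(\theta)C(\theta)^\top\}\le n\,\mathscr{I}_\ve c_3^{-1}\E_\theta^\pi[\mathscr{R}_\theta(T)]$, while $\trace\{\mathscr{J}(\varrho)\}\le J_0$. For the numerator, $\nabla a^\star(\theta)$ is symmetric with kernel $\operatorname{span}\{\theta\}$ by Lemma~\ref{corollary_eigenvalues} and smallest nonzero eigenvalue $\ge\lambda_C$ by Assumption~\ref{assumption_of_action_set_additional}(a), so $(\nabla a^\star(\theta))^2\succeq\lambda_C^2(I-\mathring\theta\mathring\theta^\top)$, whence
\begin{align*}
\trace\{\nabla a^\star(\theta)C(\theta)^\top\}=\tfrac1{\bar\lambda}\trace\bigl\{(I-u_\theta u_\theta^\top)(\nabla a^\star(\theta))^2\bigr\}\ge\tfrac{\lambda_C^2}{\bar\lambda}\bigl(n-2+(u_\theta^\top\mathring\theta)^2\bigr)\ge\tfrac{\lambda_C^2\cos^2\alpha_\Acal}{\bar\lambda}=:\kappa_0>0,
\end{align*}
using $\mathring\nabla g(a^\star(\theta))=\mathring\theta$ from Lemma~\ref{lemma_nab} and the angle bound $\arccos(u_\theta^\top\mathring\theta)\le\alpha_\Acal<\pi/2$ from Assumption~\ref{assumption_of_action_set}(c). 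Hence $\trace\{\E_{\theta\sim\varrho}[\nabla a^\star(\theta)C(\theta)^\top]\}\ge\kappa_0$.

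Write $\mathcal R_T:=\E^{\varrho,\pi}[\mathscr{R}_{\theta^\star}(T)]$. Combining the regret-to-estimation bound with Lemma~\ref{modified_van_trees} gives
\begin{align*}
\mathcal R_T\;\ge\;c_3 T\,\E^{\varrho,\pi}\bigl[\vnorm{\bar a_T-a^\star(\theta)}^2\bigr]\;\ge\;\frac{c_3 T\,\kappa_0^2}{n\bigl(n\,\mathscr{I}_\ve c_3^{-1}\,\mathcal R_T+J_0\bigr)},
\end{align*}
i.e. $n^2\mathscr{I}_\ve c_3^{-1}\mathcal R_T^2+nJ_0\mathcal R_T\ge c_3\kappa_0^2\,T$. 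Solving this quadratic inequality and letting $T\to\infty$ (so the quadratic term dominates) yields $\mathcal R_T\gtrsim(c_3\kappa_0/n)\sqrt{T/\mathscr{I}_\ve}$, and since $\sup_{\theta^\star\in\Theta}\E_{\theta^\star}^\pi[\mathscr{R}_{\theta^\star}(T)]\ge\E_{\theta\sim\varrho}\E_\theta^\pi[\mathscr{R}_\theta(T)]=\mathcal R_T$, the claimed $\Omega(\sqrt{T/\mathscr{I}_\ve})$ bound follows, with a constant absorbing the $T$-independent quantities $c_3,\kappa_0,n$.

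\emph{Main obstacle.} The crux is the construction and verification of $C(\theta)$: it must simultaneously (i) have all rows orthogonal to $a^\star(\theta)$ --- so that Theorem~\ref{inference_regret_connaction}, whose orthogonality constraint is with respect to $a^\star(\theta)$ and \emph{not} $\theta$, can cap the Fisher information along those rows by regret; (ii) align enough with $\nabla a^\star(\theta)$ to keep the van Trees numerator bounded away from $0$; and (iii) be differentiable in $\theta$. This is exactly where all the geometric inputs enter together: the kernel of $\nabla a^\star$ from Lemma~\ref{corollary_eigenvalues}, the spectral gap of Assumption~\ref{assumption_of_action_set_additional}(a), the smoothness of the eigenstructure in Assumption~\ref{assumption_of_action_set_additional}(b), and the $\theta$-versus-$a^\star(\theta)$ angle bound in Assumption~\ref{assumption_of_action_set}(c). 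A secondary nuisance is discharging the regularity hypotheses of Lemma~\ref{modified_van_trees} --- existence of a prior with finite relative Fisher information vanishing on the boundary, a $\theta$-independent support for the noise density, and the reduction from randomized to effectively deterministic policies without altering $\mathscr{I}(\theta)$.
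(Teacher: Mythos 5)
Your proposal follows the same overall strategy as the paper's proof: a van Trees inequality with $\psi(\theta)=a^\star(\theta)$, a matrix $C(\theta)$ whose rows lie in $a^\star(\theta)^\perp$ so that Theorem \ref{inference_regret_connaction} caps the data Fisher information by expected regret, a numerator bounded below via $\lambda_C$ and $\cos(\alpha_\Acal)$, and a self-referential quadratic in $\E^{\varrho,\pi}[\mathscr{R}_{\theta^\star}(T)]$ solved for large $T$. Two details genuinely differ. First, you apply van Trees once to the time-averaged action $\bar a_T$ after the Cauchy--Schwarz step $\sum_t\vnorm{a_t-a^\star}^2\ge T\vnorm{\bar a_T-a^\star}^2$, whereas the paper applies it $T$ times to the individual $a_t$ (bounding each $\mathscr{I}_t(\theta)$ by $\mathscr{I}_\ve\,\E_\theta^\pi[V_T]$) and sums; both routes land on the same quadratic inequality, and yours is marginally cleaner. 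Second, you take $C(\theta)$ proportional to a projected copy of the full Jacobian $\nabla a^\star(\theta)$, while the paper uses the rank-one $C(\theta)=u(\theta)z(\theta)^\top$ built from a single differentiable eigenvector branch; your choice has the advantage of not requiring a globally differentiable eigenvector selection, i.e.\ it leans less heavily on Assumption \ref{assumption_of_action_set_additional}(b).

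There is one slip you must fix. With $C(\theta)=\bar\lambda^{-1}(I-u_\theta u_\theta^\top)\nabla a^\star(\theta)$ as written, the vectors appearing in $\trace\{C\mathscr{I}C^\top\}=\sum_i(C^\top e_i)^\top\mathscr{I}\,(C^\top e_i)$ are the rows of $C$, namely $C^\top e_i=\bar\lambda^{-1}\nabla a^\star(\theta)(I-u_\theta u_\theta^\top)e_i$. These lie in the range of the symmetric matrix $\nabla a^\star(\theta)$, which by Lemma \ref{corollary_eigenvalues} is $\theta^\perp$, \emph{not} $a^\star(\theta)^\perp$; it is the \emph{columns} of your $C$ that are orthogonal to $a^\star(\theta)$, so Theorem \ref{inference_regret_connaction} cannot be invoked as claimed. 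Reversing the factor order, $C(\theta):=\bar\lambda^{-1}\nabla a^\star(\theta)(I-u_\theta u_\theta^\top)$, repairs this: then $C^\top e_i=\bar\lambda^{-1}(I-u_\theta u_\theta^\top)\nabla a^\star(\theta)e_i$ is orthogonal to $a^\star(\theta)$ with norm at most $1$, while the numerator $\trace\{\nabla a^\star(\theta)C(\theta)^\top\}=\bar\lambda^{-1}\trace\{(I-u_\theta u_\theta^\top)(\nabla a^\star(\theta))^2\}$ is unchanged by cyclicity of the trace, so every other step of your argument, including the bound by $\lambda_C^2\cos^2(\alpha_\Acal)/\bar\lambda$, goes through verbatim.
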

\begin{proof}
Within this proof, we lighten notation and denote $\theta^\star$ by $\theta$, assuming that it is random with a prior $\varrho$ on $\Theta$ that satisfies the conditions of Lemma \ref{modified_van_trees}. We will apply Lemma \ref{modified_van_trees} with $\psi(\theta) = a^\star(\theta)$ and $\hat{\psi}(X) = a_t$. Then, $\nabla_\theta\psi(\theta) = \nabla_\theta a^\star(\theta)$, whose expression is given in Lemma \ref{corollary_eigenvalues}. Next, we identify $C(\theta)$.

By direct calculation, it follows that $\nabla a^\star(\theta)$ is a positive semidefinite matrix with $\theta$ in its null space. Per Assumption \ref{assumption_of_action_set_additional}, other eigenvalues are positive for each $\theta \in \Theta$. Choose $u:\Theta \to \Rset^n$ as a differentiable map of eigenvectors corresponding to $\lambda:\Theta \to \Rset$, a continuous map of positive eigenvalues of $\nabla a^\star$. Define 
\begin{gather}
\begin{gathered}
        z(\theta) := \proj_{a^\star(\theta)^\perp} [u(\theta)] = u(\theta) - \left[ u(\theta)^\top \mathring{a^\star}(\theta) \right] \mathring{a^\star}(\theta),\\
    C(\theta) := u(\theta) z(\theta)^\top,
\end{gathered}
\end{gather}
where $a^\star(\theta)^\perp$ denotes the orthogonal complement of $a^\star(\theta)$.\footnote{This construction ensures that $\vnorm{C}$ is bounded over $\Theta$.}

Next, we simplify and lower bound $({\trace \{\nabla_\theta \psi(\theta) C(\theta)^\top\}})^2$. To that end, we have
\begin{subequations}
\begin{align}
\frac{1}{\lambda^2(\theta) }\left(\trace\{C(\theta) \nabla a^{\star}(\theta)^\top\}\right)^2 & =  
\frac{1}{\lambda^2(\theta) }\left(\trace\{u(\theta) z(\theta)^\top \nabla a^{\star}(\theta)\}\right)^2
\label{rhs_numerator_0}
\\
& = \left[ z(\theta)^\top u(\theta) \right]^2
\label{rhs_numerator_1}\\
&=  1 - \left[u(\theta)^\top \mathring{a^\star}(\theta) \right]^2
\label{rhs_numerator_2}\\
&= 1 - \left[u(\theta)^\top \left( \mathring{a^\star}(\theta)^\top \mathring{\theta} \mathring{\theta}+ \mathring{a^\star}(\theta)^\top \mathring{\theta}^\perp  \mathring{\theta}^\perp \right) \right]^2
\label{rhs_numerator_3}\\
&= 1 - {\left[u(\theta)^\top \mathring{\theta}^\perp \right]^2} 
\left[  \mathring{a^\star}(\theta)^\top \mathring{\theta}^\perp \right]^2
\label{rhs_numerator_4}\\
&= 1 - \underbrace{\vnorm{u(\theta)}^2 \vnorm{\mathring{\theta}^\perp}^2}_{=1} 
\left[  \mathring{a^\star}(\theta)^\top \mathring{\theta}^\perp \right]^2
\label{rhs_numerator_5}\\
&\geq \left[  \mathring{a^\star}(\theta)^\top \mathring{\theta} \right]^2
\label{rhs_numerator_6}\\
&\geq \cos^2(\alpha_\Acal)\label{rhs_numerator_7}.
\end{align}
\end{subequations}
where \eqref{rhs_numerator_0} uses the symmetry of $\nabla a^\star$, \eqref{rhs_numerator_2} follows from the definition of $u, z$ and \eqref{rhs_numerator_3} follows from resolving $a^\star(\theta)$ along $\mathring{\theta}$ and its complement $\mathring{\theta}^\perp$. By construction, $u(\theta) \perp a^\star(\theta)$ that implies \eqref{rhs_numerator_4}. Then, \eqref{rhs_numerator_5} uses Cauchy-Schwarz inequality. The last line follows from Assumption \ref{assumption_of_action_set} and observing that $\mathring{\theta} = \mathring{\nabla}g(a^\star(\theta))$, according to Lemma \ref{lemma_nab}. Thus, from Assumption \ref{assumption_of_action_set_additional}, we deduce
\begin{align}
    \left(\trace\{\E[C(\theta) \nabla a^{\star}(\theta)^\top]\}\right)^2 
    \geq \lambda_C^2 \cos^2(\alpha_\Acal).\label{van_trees_1}
\end{align}

To leverage Lemma \ref{modified_van_trees}, we are left to calculate and bound $\trace \left\{ \E \left[ C(\theta) \mathscr{I}_t(\theta) C(\theta)^\top\right]\right\}$, where $\mathscr{I}_t(\theta)$ is the Fisher information matrix generated by $\{a_1, Y_1,...,a_{t-1},Y_{t-1}\}$. Since Fisher information for sequential data follows a chain rule of the type in \cite[Appendix C]{ziemann2021uninformative}, we obtain
\begin{align}
\begin{aligned}
\mathscr{I}_t(\theta) 
& = \sum_{s = 1}^t \E_{\theta}^{\pi} \left[ \nabla  \log\left( p(Y_s - \theta^{\top} a_s)\right) \nabla \log\left( p(Y_s - \theta^{\top}a_s))\right)^\top  \right]
\\
& = \sum_{s = 1}^t \E_{\theta}^{\pi} \left[ a_s a_s^\top \left( \frac{p'(Y_s - \theta^{\top} a_s)}{p(Y_s - \theta^{\top} a_s)} \right)^2   \right]
\\
& = \sum_{s = 1}^t \E_{\theta}^{\pi} \left[ a_s a_s^\top\right]  \E_{\theta}^{\pi}\left[ \left( \frac{p'(\ve_s)}{p(\ve_s)} \right)^2 \right] 
\\
& = \E_{\theta}^{\pi} \left[ V_t - \lambda I \right] \mathscr{I}_{\ve}.
\end{aligned} \label{IE_equality}
\end{align}
where we have used the fact that $a_s$ and $\varepsilon_s$ are independent. 
This allows us to write
\begin{align}
\begin{aligned}
\trace \left\{ \E \left[ C(\theta) \mathscr{I}_t(\theta) C(\theta)^\top\right]\right\}
& = \E \left[ \trace \left\{  u(\theta) z(\theta)^\top \left(\E_{\theta}^{\pi} \left[ V_t \right] - \lambda  I \right) z(\theta) u(\theta)^\top \right\} \right]\\
& = \mathscr{I}_{\ve}\E \left[   z(\theta)^\top \left(\E_{\theta}^{\pi} \left[ V_t \right] - \lambda  I \right)z(\theta) \right]\\
& \leq \mathscr{I}_{\ve}\E \left[   z(\theta)^\top\E_{\theta}^{\pi} \left[ V_t \right] z(\theta) \right]
,\label{van_trees_2}
\end{aligned}
\end{align}
where we have used $\vnorm{u(\theta)}=1$.
Plugging \eqref{van_trees_1} and \eqref{van_trees_2} in Lemma \ref{modified_van_trees} yields
\begin{align}
\begin{aligned}
  \E \left[ \vnorm{\hat{\psi}(X) - \psi(\theta)}^2\right] &  = \int_\Theta \E_{\theta}^{\pi} \left [\vnorm{ a_t - a^{\star}(\theta) }^2 \right]\varrho(\theta) d\theta \\
  & \geq \frac{\lambda_C^2 \cos^2(\alpha_\Acal) }{ n \mathscr{I}_{\ve}  \int_{\Theta}z(\theta)^\top \E_{\theta}^{\pi} \left[ V_t \right] z(\theta) \varrho(\theta)d\theta+ n\trace \left\{ \mathscr{J}(\varrho)\right\}}.
\end{aligned}
\end{align}
Summing the inequalities for $t = 1$ to $T$, 
\begin{subequations}
\begin{align}
  &\int_\Theta \sum_{t = 1}^T\E_{\theta}^{\pi} \left [\vnorm{ a_t - a^{\star}(\theta) }^2 \right]\varrho(\theta) d\theta 
  \notag
  \\
  & \geq \sum_{t = 1}^T  \frac{\lambda_C^2 \cos^2(\alpha_\Acal) }{ n \mathscr{I}_{\ve}  \int_{\Theta}z(\theta)^\top \E_{\theta}^{\pi} \left[ V_t \right]z(\theta) \varrho(\theta)d\theta+ n\trace \left\{ \mathscr{J}(\varrho)\right\}}\\
 & \geq   \frac{\lambda_C^2 \cos^2(\alpha_\Acal) T}{ n \mathscr{I}_{\ve}  \int_{\Theta}z(\theta)^\top \E_{\theta}^{\pi} \left[ V_T \right]z(\theta) \varrho(\theta)d\theta+ n\trace \left\{ \mathscr{J}(\varrho)\right\}},\label{summed_van_trees}
\end{align}
\end{subequations}
where in the last line, we have used that $V_T$ dominates $V_t$ in the positive semidefinite order. Notice that \eqref{regret_to_quadratic_variation} from the proof of Lemma \ref{inference_regret_connaction} produces $
\frac{1}{c_3} \E_{\theta}^\pi\left[\mathscr{R}_{\theta}(T)\right] \geq  \sum_{t = 1}^T\E_{\theta}^{\pi} \left [\vnorm{ (a_t - a^{\star}(\theta)) }^2 \right]
$, which in the last inequality, yields
\begin{align}
 \frac{1}{c_3}  \int_\Theta \E_{\theta}^\pi\left[\mathscr{R}_{\theta}(T)\right] \varrho(\theta)d\theta  
 & \geq  \frac{\lambda_C^2 \cos^2(\alpha_\Acal) T  }{ n \mathscr{I}_{\ve}\int_\Theta  z(\theta)^\top \E_{\theta}^\pi\left[V_T\right] z(\theta) \varrho(\theta) \: d\theta  + n \trace\{\mathscr{J}(\varrho)\}}.
\label{van_trees_ineq_fin_integral_form}
\end{align}
Since supremum dominates expectation, we get
\begin{align}
\sup_{\theta \in \Theta}\E_{\theta}^\pi \left[\mathscr{R}_{\theta}(T) \right]
 & \geq   \frac{\lambda_C^2 \cos^2(\alpha_\Acal)  c_3 T}{ n\mathscr{I}_{\ve} \sup_{\theta \in \Theta}  z(\theta)^\top \E_{\theta}^\pi\left[V_T\right] z(\theta)   + n\trace \left\{ \mathscr{J}(\varrho)\right\}},\label{van_trees_ineq_fin} 
\end{align}
Finally, $z(\theta) \perp a^\star(\theta)$ by construction. Appealing to Theorem \ref{inference_regret_connaction}, we obtain
\begin{align}
\sup_{\theta \in \Theta}\E_{\theta}^\pi \left[\mathscr{R}_{\theta}(T) \right] \geq  \frac{\lambda_C^2 \cos^2(\alpha_\Acal) c_3^2 \;T}{ n \mathscr{I}_{\ve} \sup_{\theta \in \Theta}\E_{\theta}^\pi \left[\mathscr{R}_{\theta}(T) \right] + n c_3\trace \left\{ \mathscr{J}(\varrho)\right\}}. 
\end{align}
This yields $\sup_{\theta \in \Theta}  \E_{\theta}^\pi\left[\mathscr{R}_{\theta}(T)\right] \gtrsim   \sqrt{{T}/{\mathscr{I}_{\ve}}}$
for sufficiently large $T$.
\end{proof}
Theorems \ref{inference_regret_connaction} and \ref{theorem_lower_bound} provide a deep connection between inference and control. The close relation of $[\lambda_{\min}(V_t)]^{-1}$ and the inference of $\theta^\star$, shown by \cite{improved_algorithms_for_stochastic_bandits}, allows us to informally state Theorem \ref{inference_regret_connaction} as  
\begin{align}
\text{Inference Quality}\lesssim  \text{Cumulative Regret},
\label{eq:informal.1.repeat}
\end{align}
restated from \eqref{eq:informal.1}.
In other words, one cannot achieve high inference quality (with small confidence ellipsoids) without compromising performance from a control policy, whose expected cost is $\E_{\theta^\star}^\pi\left[\mathscr{R}_{\theta^{\star}}(T)\right]$. On the other hand, \eqref{van_trees_ineq_fin_integral_form} within the proof of Theorem \ref{theorem_lower_bound} implies a complementary result--if a policy does not guarantee a good-enough inference, then it cannot be regret-wise order-optimal. 
To see why, for directions $z(\theta^\star)$ orthogonal to $a^\star(\theta^\star)$, \eqref{van_trees_ineq_fin_integral_form} gives
\begin{align}
\E^{\varrho,\pi}\left[\mathscr{R}_{\theta^{\star}}(T) \right]\E^{\varrho}[z(\theta^\star)^\top \E^\pi_{\theta^\star}[V_T] z(\theta^\star)], \gtrsim T, \label{connection_1}
\end{align}
where $\E^{\varrho}[\cdot] = \int_{\theta^\star} \cdot d\varrho(\theta^\star)$ averages over the prior $\varrho$. If $\pi$ incurs a sublinear expected regret, then $a_t\sim \pi$ must ultimately align with $a^\star(\theta^\star)$. Since $V_T$ adds contributions from $a_t a_t^\top$, the sublinear regret accumulation then implies
$\mathring{a}^\star(\theta^\star)^\top\E^\pi[ V_T] \mathring{a}^\star(\theta^\star) \gtrsim T$ for any $\theta^\star \in \Theta$. 
Theorem \ref{inference_regret_connaction}, combined with the sublinear nature of the regret under $\pi$, then implies that the eigenvector corresponding to the minimum eigenvalue of $\E^\pi[ V_T]$ is roughly orthogonal to $a^\star(\theta^\star)$ for large $T$. In effect, we deduce from \eqref{connection_1} that
\begin{align}
\E^{\varrho,\pi}\left[\mathscr{R}_{\theta^{\star}}(T) \right]\E^{\varrho}[\lambda_{\min}(\E^{\pi}[V_T])] \gtrsim T
\label{eq:E.R.Lambda.T}
\end{align}
for any algorithm $\pi$ with sublinear expected regret. Said informally,
\begin{align}
\left( \text{Cumulative Regret}\right) \left(\text{Inference Quality}\right) \gtrsim T,
\label{eq:informal.2.repeat}
\end{align}
as restated from \eqref{eq:informal.2} in the Bayesian sense. Notice that \eqref{eq:informal.1.repeat} holds over each sample path and choice of the underlying parameter, while \eqref{eq:informal.1.repeat} holds in expectation over a prior of parameter choices. Notwithstanding the possible role of the prior, these two statements together roughly indicate that optimal regret can at most be $\Ocal(\sqrt{T})$, for which one must have $\lambda_{\min}(\E^{\pi}[V_T]) \sim \Omega(\sqrt{T})$--a sentiment that resonates with the findings of \cite{rich_action_spaces} under the condition that the boundary of the action set has a locally constant Hessian around the optimal action. No regret-wise order-optimal policy can guarantee better than $\Omega(\sqrt{T})$ inference quality. This observation runs counter to the conclusions made in \cite[Section 8]{rich_action_spaces} on unit $L^p$ norm-balls, a topic we discuss in detail in Section \ref{sec:Lp.balls}.

\section{Emprical Comparison with Other Algorithms}
\label{sec:numerics}

We considered two experimental setups to evaluate the performance of four algorithms for linear bandits. Namely, they are Linear Upper Confidence Bound (LinUCB) from \cite{context_bandits_ucb}, Thompson Sampling (TS) from \cite{linear_thompson_sampling_revisited}, Forced Exploration for Linear Bandit Problems (FEL) from \cite{AbbasiYadkori2009ForcedExplorationBA}, and our TRAiL algorithm on action sets defined as $\{x \in \Rset^n: x^\top \Delta_g x \leq 1\}$ for a matrix $ \Delta_g$. Such sets include both spherical sets considered in Section \ref{sec:numerics.sphere} and ellipsoidal sets in Section \ref{sec:numerics.ellipsoid}.
For TS, we employed the frequentist approach of \cite{linear_thompson_sampling_revisited} that avoids the need to compute Bayesian updates of classic TS in \cite{thompson_sampling_for_contextual_bandits_with_linear_payoffs}. The pseudo-codes of these algorithms are included in Appendix \ref{sec:appendix_pseudocodes}.

The FEL algorithm from \cite{AbbasiYadkori2009ForcedExplorationBA} separates the exploration and exploitation steps. During exploration, one needs a policy that ensures $\E[a_t a_t^\top]$ is full rank.  Since the experiments are conducted using ellipsoidal action sets, we devised such a strategy using the eigendirections of the ellipses. For an ellipsoidal $\Acal$ defined by the function $x^\top \Delta_g x \leq 1$,  we calculate an eigenbasis for $\Delta_g$ denoted as $\{e_i\}_{i = 1}^n$ with the eigenvalues $\{\lambda_i\}_{i = 1}^n$. Randomly sampling $\{e_i\}_{i = 1}^n$ and playing $e_i/\sqrt{\lambda_i}$ assures that $\E[a_t a_t^\top] \succeq 1/\lambda_{\min}(\Delta_g)$ and that $a_t \in \Acal$. 

The projection step in TRAiL can be computationally intensive, particularly when the structure of the action space is complex. However, in our experiments, we employed ellipsoidal sets for the action space, which allowed us to efficiently compute $\textrm{proj}_{\Acal}$ using \cite[Section 8]{projection_onto_ellipsoids}. We remark that even when the action space is more general than being ellipsoidal, TRAiL can be adapted with a minor adjustment to the projection step. We anticipate that our analysis remains valid if this step is replaced by finding the nearest point in the action set such that the difference from the perturbed action aligns with $\hat{\theta}_t$.

\subsection{Experiments on the Unit Sphere}
\label{sec:numerics.sphere}
In the initial setup, the action set $\Acal$ was defined as the unit sphere ($\Delta_g = I_{10}$) in $\Rset^{10}$ with $\ve_t \sim \Ncal(0,0.1)$. We selected $\theta^\star$ uniformly at random from the unit sphere of radius $1$. We tuned the hyperparameters  $\lambda^{\textrm{TS}}, \lambda^{\textrm{UCB}}, c^{\textrm{FEL}}, D^{\textrm{TRAiL}}$\footnote{In Section \ref{sec:numerics}, the hyperparameters $D$ and $\lambda$ of TRAiL are denoted as $D^{\text{TRAiL}}$ and $\lambda^{\text{TRAiL}}$, respectively.}, but chose $\lambda^{\textrm{TRAiL}} = 0.01$ without tuning. We performed a grid search using the  expected cumulative regret on these parameters and selected those that resulted in the lowest mean + standard deviation evaluated over 10 runs for $T = 1000$. The grid search  resulted in  $\lambda^{\textrm{TS}} =0.01 $, $\lambda^{\textrm{UCB}} = 0.01$, $c^{\textrm{FEL}} = 0.4$, $D^{\textrm{TRAiL}} = 0.1$. The tuning process for $D^{\textrm{TRAiL}}$ is visualized in Figure \ref{circle_sub_fig_1}, where the results are generated over $30$ runs of the TRAiL algorithm. Until $D^{\textrm{TRAiL}}$ reaches $ 0.1$, decreasing it reduces regret. Beyond a certain threshold, however, it leads to insufficient exploration and hurts performance. 

After selecting the hyperparameters, we  sampled $20$ new $\theta^\star$'s lying on the unit-sphere and compared the algorithms. TRAiL outperformed all other algorithms followed by FEL and UCB as Figure \ref{circle_sub_fig_2} reveals. We repeated the experiment with a higher error variance where $\ve_t \sim \Ncal(0,1)$. The grid-search yielded $\lambda^{\textrm{TS}} =0.2$, $\lambda^{\textrm{UCB}} = 0.1$, $c^{\textrm{FEL}} = 0.6$, $D^{\textrm{TRAiL}} = 0.5$. According to Figure \ref{circle_sub_fig_3}, TRAiL performed better than other algorithms.

\begin{figure}[!ht]
    \centering
    \begin{subfigure}[b]{0.3\textwidth}
        \centering
        \includegraphics[width=\textwidth]{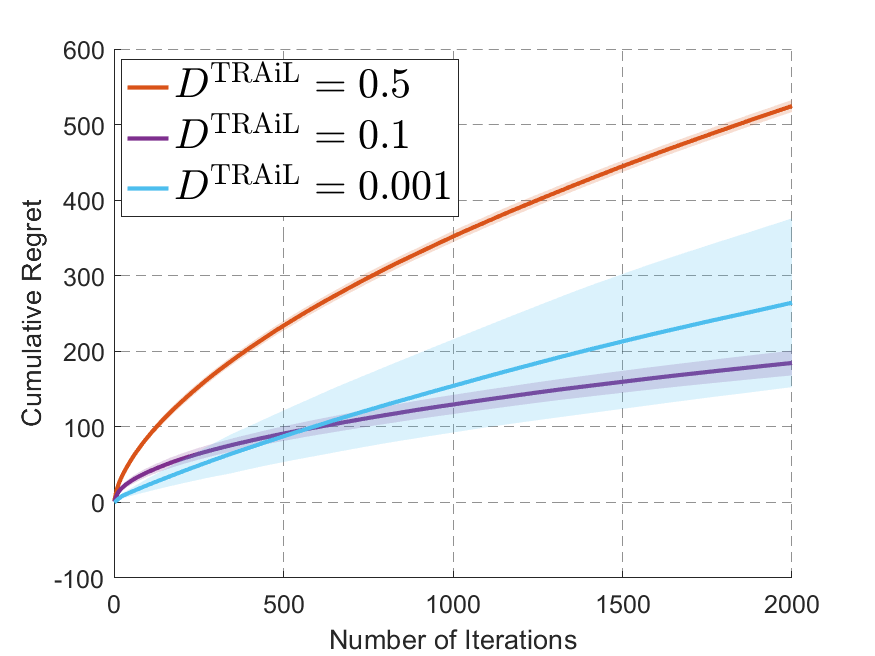}
        \caption{Effect of $D^{\textrm{TRAiL}}$}
        \label{circle_sub_fig_1}
    \end{subfigure}
    \hfill
    \begin{subfigure}[b]{0.3\textwidth}
        \centering
        \includegraphics[width=\textwidth]{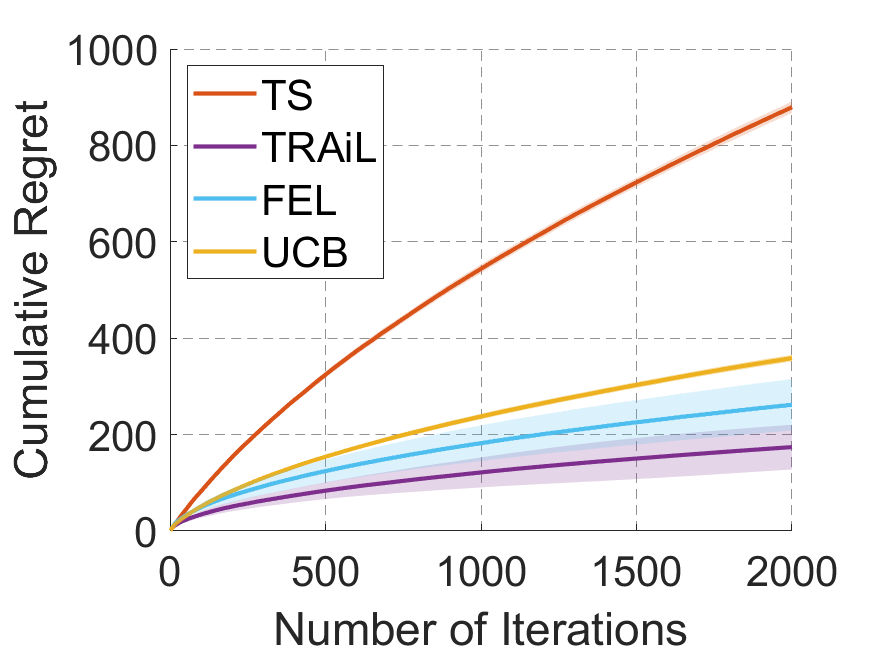}
        \caption{$\ve_t \sim \Ncal(0,0.1)$}
        \label{circle_sub_fig_2}
    \end{subfigure}
    \hfill
    \begin{subfigure}[b]{0.3\textwidth}
        \centering
        \includegraphics[width=\textwidth]{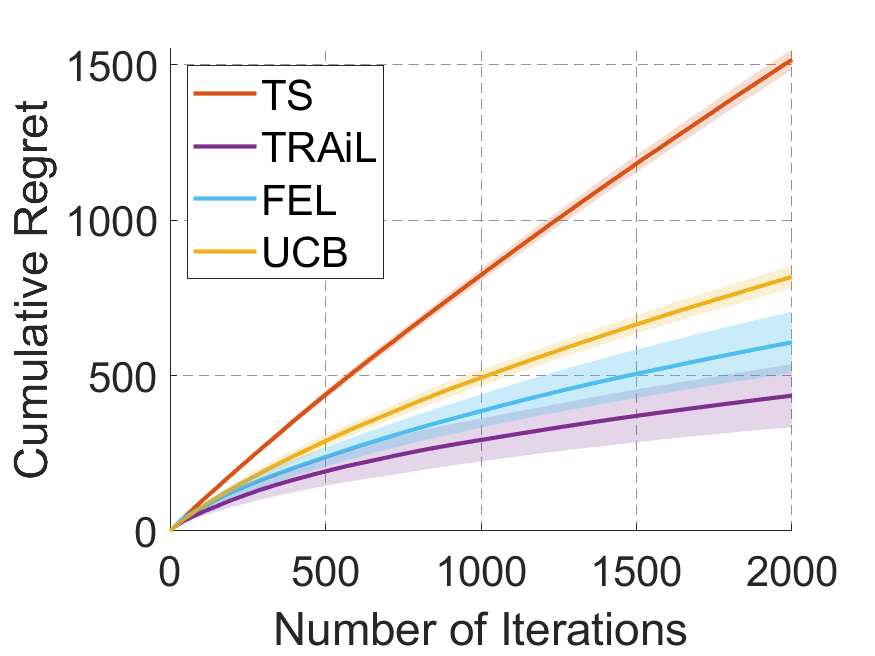}
        \caption{$\ve_t \sim \Ncal(0,1)$}
        \label{circle_sub_fig_3}
    \end{subfigure}
    \caption{Comparing FEL, TS, UCB, and TRAiL on a spherical action set in $\Rset^{10}$ where the area that lies in one standard deviation from the means of the curves are shaded with their respective colors; (a) shows the effect of $D^{\textrm{TRAiL}}$ on regret, (b)-(c) plots  regret with reward error variances of $0.1$ and $1$, respectively.}
    \label{fig:three_subfigures_sphere}
\end{figure}

\subsection{Experiments on Randomly Sampled Ellipsoids}
\label{sec:numerics.ellipsoid}
For the final experiment, we set $\ve_t=0.1$, but significantly expanded the complexity of the action set. We sampled random ellipsoidal action spaces in $\Rset^{20}$ as follows. We generated $\Delta_g = I_n + \frac{1}{n}\sum_{i = 1}^n \delta_i x_i x_i^\top$ where $x_i \sim \Ncal(0, I_n)$ and $\delta_i \sim 10^{-(1 + v_i)}$ for $v_i \sim \textrm{Uniform}[0,1]$. The hyper parameters were tuned using 10 trials and the algorithms were tested with 20 new action sets, sampled independently from the given distribution. The training process produced $\lambda^{\textrm{TS}} = 0.15$, $c^{\textrm{FEL}} = 0.3$, $D^{\textrm{TRAiL}} = 0.3$. UCB was excluded from this comparison due to the considerable computational demands of the algorithm in high-dimensional settings, which rendered the tuning process computationally burdensome. TRAiL outperformed TS and FEL in terms of regret, as evidenced by Figure \ref{sub_fig_2}. The same conclusion remained true when we repeated the experiment with action sets in $\Rset^{100}$, as Figure \ref{sub_fig_3} shows. For $n = 100$, the training process produced $\lambda^{\textrm{TS}} = 0.01$, $c^{\textrm{FEL}} = 0.07$, $D^{\textrm{TRAiL}} = 0.03$.

We remark that TRAiL outperformed FEL when the condition numbers of the ellipsoids $\lambda_{\max}(\Delta_g)/\lambda_{\min}(\Delta_g)$ were larger. This is expected, given that FEL's fixed exploration strategy is more sensitive to the composition of the action set--variations in eigendirections directly influence the degree of perturbation introduced into the system.  Consequently, significant size discrepancies among action sets can lead to unnecessary exploration with FEL.

\begin{figure}[h]
    \centering
    \begin{subfigure}[b]{0.3\textwidth}
        \centering
        \includegraphics[width=\textwidth]{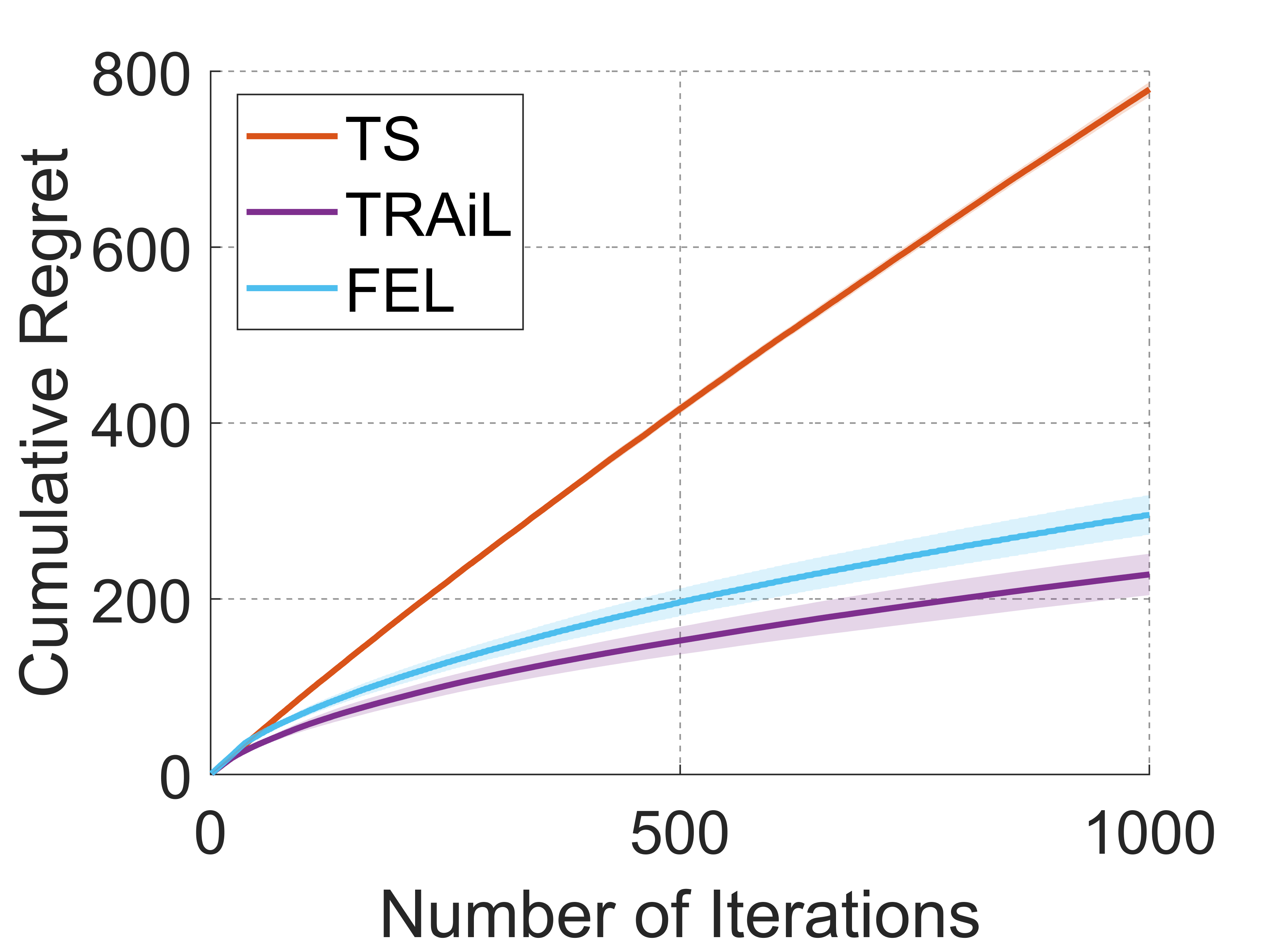}
        \caption{$n = 20$}
        \label{sub_fig_1}
    \end{subfigure}
    \hfill
    \begin{subfigure}[b]{0.3\textwidth}
        \centering
        \includegraphics[width=\textwidth]{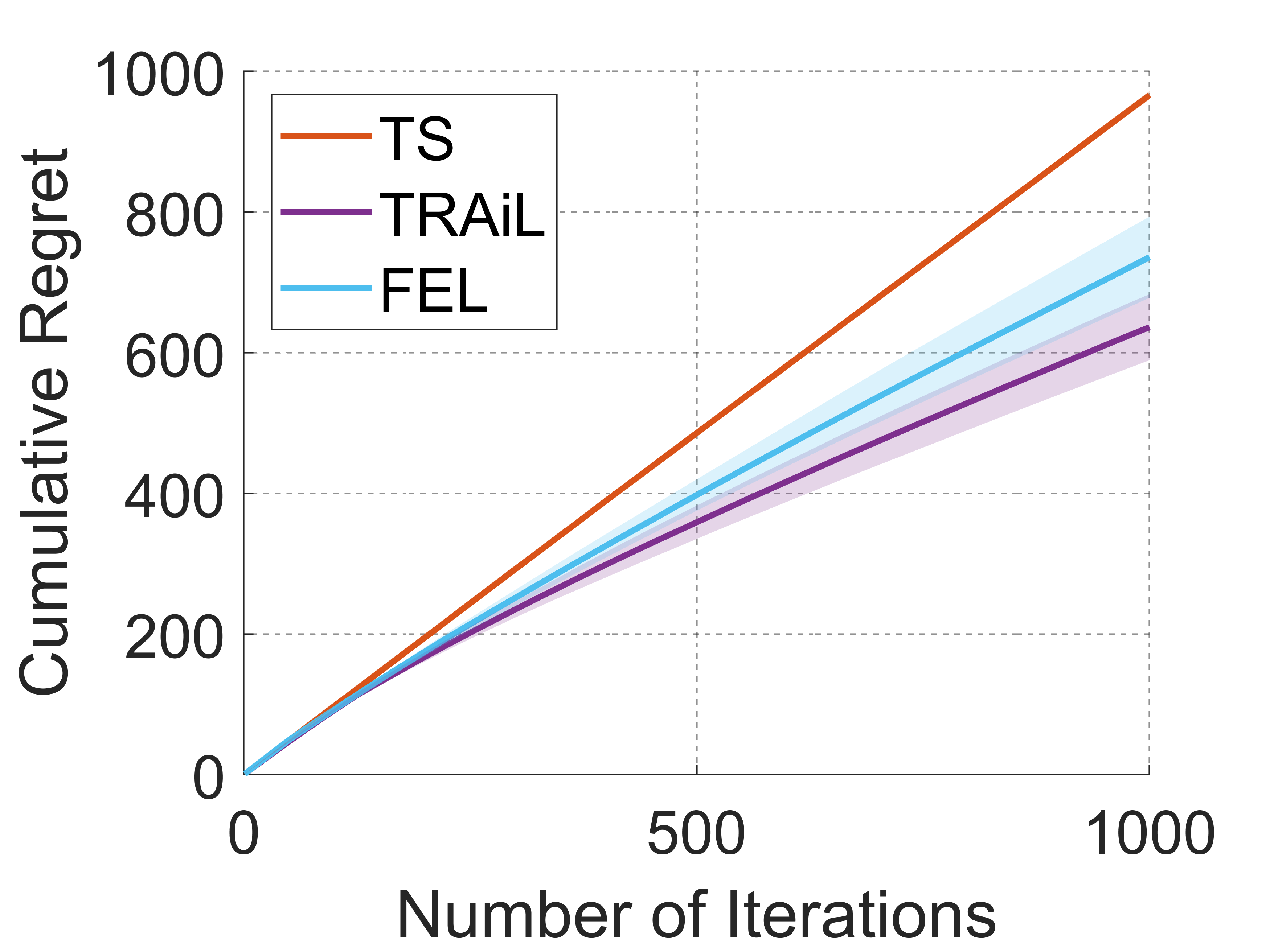}
        \caption{$n = 100$}
        \label{sub_fig_2}
    \end{subfigure}
    \hfill
    \begin{subfigure}[b]{0.3\textwidth}
        \centering
        \includegraphics[width=\textwidth]{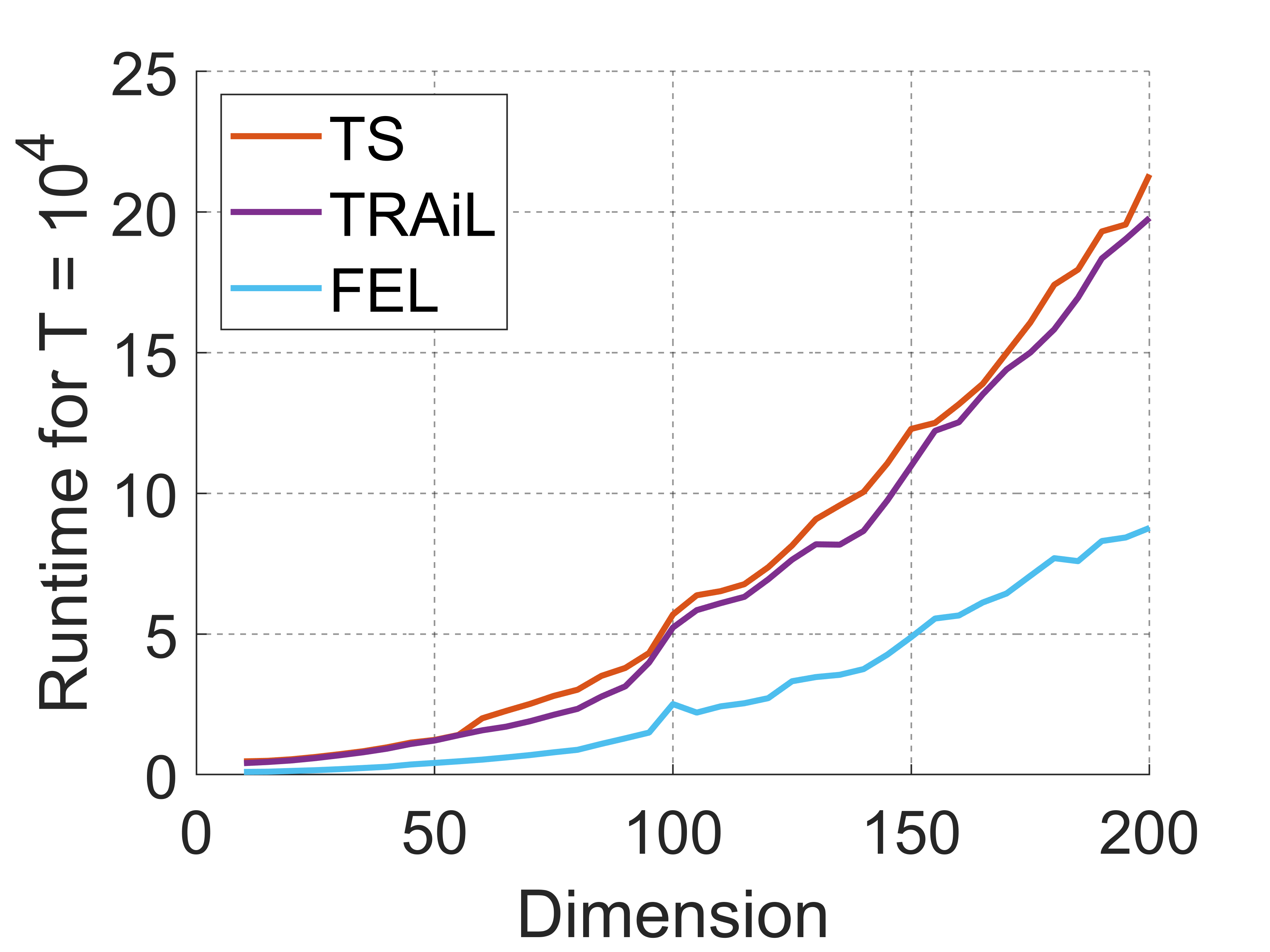}
        \caption{Average run-times}
        \label{sub_fig_3}
    \end{subfigure}
    \caption{Comparing FEL, TS, and TRAiL on randomly selected ellipsoidal action sets where $\ve_t \sim \Ncal(0,0.1)$; (a), (b) show the regret curves for these algorithms where the the action sets are in $\Rset^{20}$ and $\Rset^{100}$, respectively, (c) provides a speed comparison in seconds with the dimension of the action space varied between 10 and 200 and averaged over 5 runs with $T = 10^4$.}
    \label{fig:three_subfigures_ellipsoid}
\end{figure}

To gauge the computational speeds, we varied the action set dimension between $10$ and $200$ and generated ellipsoidal actions sets using the same sampling procedure. We ran $5$ trials for each of these algorithms with $T = 10^4$. The average run-times of the algorithms in seconds are shown in Figure \ref{sub_fig_3}. FEL proved to be the most computationally efficient algorithm, aligning with our expectations. This is due to its infrequent exploration over time, reducing computational demands, with the primary computational burden being the identification of $a^\star(\hat{\theta}_t)$, a task shared by all algorithms. TRAiL also demonstrated slightly higher computational efficiency compared to TS. These experiments were conducted on a computer with a Intel(R) Xeon(R) CPU E3-1225 v5 processor, 16 GB RAM, on MATLAB R2021a.

\section{The Curious Case of the \texorpdfstring{$L^p$}{Lp} Unit-Norm Balls}
\label{sec:Lp.balls}
We concluded Section \ref{sec:lowerBoundRegret} with the thumb rule that good inference is crucial for good performance. We begin this section with a toy \emph{counterexample} to this principle that motivates the study of unit $L^p$-norm balls as action sets of the form $\Acal^p:=\{a \in \Rset^2 | \vnorm{a}_p \leq 1\}$. This section is inspired by the insightful experiments conducted in \cite[Section 8]{rich_action_spaces} and allows us to deepen the connection between inference and performance in linear bandits on convex sets with smooth surfaces.

Consider the action set $\Acal^\infty$. This set is \emph{not} the sublevel set of a strongly convex function, but it serves to illustrate the notion of control relevance of inference. For \emph{all} $\theta^\star \in \Rset^2_{+}$ save the origin, the optimal action is $a^\star(\theta^\star) = (1,1)$, the upper-right corner of $\Acal^\infty$. Hence, even with an $\Ocal(1)$-deviation in the inference of $\theta^\star$, an algorithm can identify the optimal arm. How fast should $\lambda_{\min}(V_t)$ grow to deduce $\theta^\star$ up to an $\Ocal(1)$ deviation? Lemma \ref{lemma:probability_Et} comes to the rescue to answer that question. Specifically,  
\begin{align}
\vnorm{\hat{\theta}_t - \theta^\star} \leq \frac{\rho_{t}(1/T)}{\sqrt{\lambda_{\min}(V_t)}} \approx \frac{\sqrt{\log T}}{\sqrt{\lambda_{\min}(V_t)}}.
\end{align}
In other words, $\lambda_{\min}(V_t) \gtrsim {\log (T)}$ allows a crude (up to $\Ocal(1)$) inference of $\theta^\star$ with high probability, and such inference is sufficient for zeroing out further regret. Hence, better than $\Ocal(1)$ inference becomes irrelevant to control performance.

$\Acal^\infty$ has a nonsmooth boundary. Authors of \cite{rich_action_spaces} carry the intuition of control-irrelevance of inference to its smooth counterpart $\Acal^{10}$. With a regret-optimal algorithm, they demonstrate that for $\frac{1}{\log t}\log(\lambda_{\min} (V_t)) \to 0.4$ within 5000 iterations, indicating that $\lambda_{\min}(V_T) \ngtr \Omega(\sqrt{T})$. In the sequel, we argue why this example does \emph{not} contradict our conclusion in Section \ref{sec:lowerBoundRegret}, but provides deep insights into the interplay between inference/control metrics and the geometry of the action set.

The inequality in \eqref{eq:E.R.Lambda.T} can be stated as
\begin{align}
\E^{\varrho,\pi}\left[\mathscr{R}_{\theta^{\star}}(t) \right]
\geq \frac{c_N t}{\mathscr{I}_\varepsilon \E^{\varrho}[\lambda_{\min}(\E^{\pi}_{\theta^\star}[V_t])] + \mathscr{J}(\varrho)}.
\end{align}
For $\varepsilon_t \sim \Ncal(0, \sigma_\varepsilon^2)$ and $\varrho = \Ncal((1,1), \sigma_{\theta^\star}^2 I)$, we have $\mathscr{I}_\varepsilon = \sigma_\varepsilon^{-2}$ and $\mathscr{J}(\varrho) \simeq \sigma_{\theta^\star}^{-2}$. 
With ``small'' $\sigma_{\theta^\star}$, the prior becomes concentrated around $(1,1)$ and the inequality implies
\begin{align}
\E^{\pi}\left[\mathscr{R}_{\theta^{\star}}(t) \right] \E^{\pi}_{\theta^\star}[\lambda_{\min}(V_t)]
\approx \E^{\pi}\left[\mathscr{R}_{\theta^{\star}}(t) \right] \lambda_{\min}(\E^{\pi}_{\theta^\star}[V_t])
\geq \sigma_\varepsilon^{2} c_N t,
\label{eq:er.ei.t}
\end{align}
provided $\lambda_{\min}(\E^{\pi}_{\theta^\star}[V_t]) \gg \mathscr{J}(\varrho)/\mathscr{I}_\varepsilon = \sigma_\varepsilon^2/\sigma_{\theta^\star}^2$ for sufficiently large $t$ for all $\theta_\star$ close enough to $(1,1)$. When $\pi$ has sublinear regret, $a_t \to a^\star(\theta^\star)$ and we expect $V_t$ to concentrate under $\pi$, allowing us to exchange $\lambda_{\min}$ and expectation. Also, $\E^\pi$ can now be estimated via its sample average. 
Assume that $\E^{\pi}\left[\mathscr{R}_{\theta^{\star}}(t) \right] \sim t^{e_R}$ and $ \E^{\pi}_{\theta^\star}[\lambda_{\min}(V_t)] \sim t^{e_I}$ for large $t$. Then, \eqref{eq:er.ei.t} indicates that we would expect $e_R + e_I \geq 1$.

\begin{figure}[!t]
    \centering
    \begin{subfigure}[b]{0.4\textwidth}
        \centering
        \includegraphics[width=\textwidth]{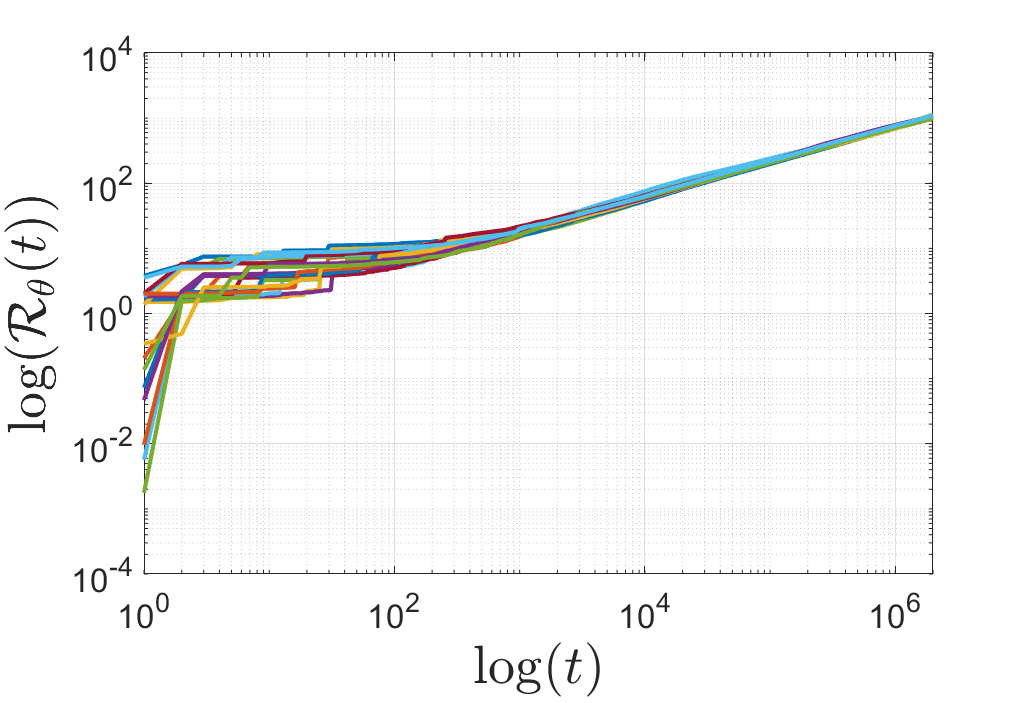}
        \caption{Regret}
        \label{fig:regret.performance}
    \end{subfigure}
    \begin{subfigure}[b]{0.4\textwidth}
        \centering
        \includegraphics[width=\textwidth]{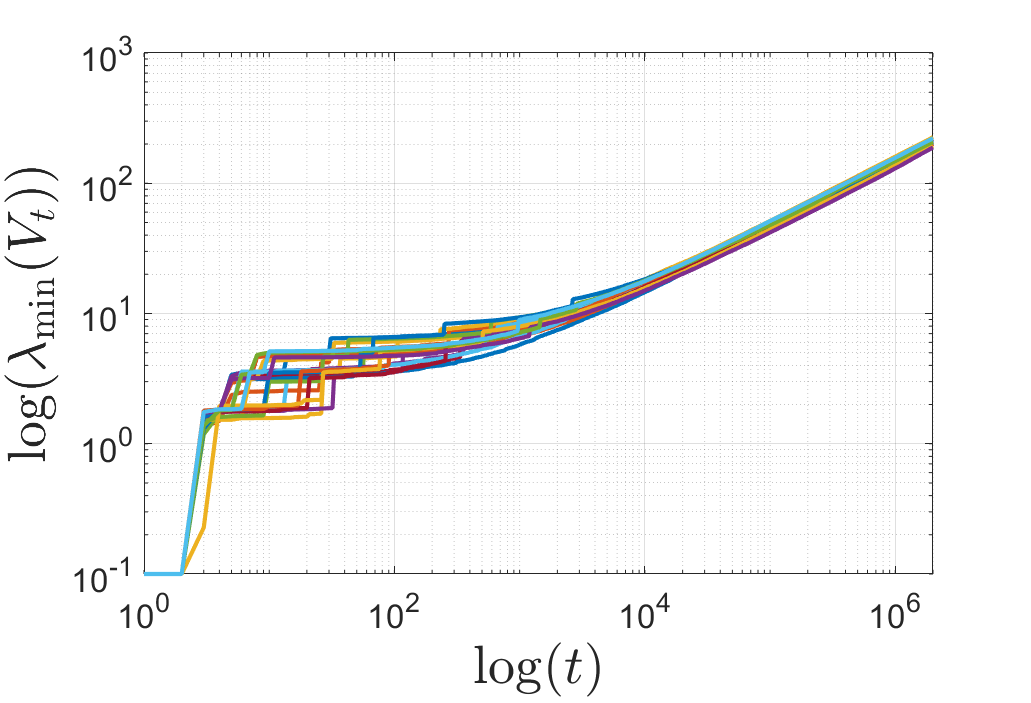}
        \caption{Inference}
        \label{fig:inference.quality}
    \end{subfigure}
    \\
    \begin{subfigure}[b]{0.4\textwidth}
        \centering
        \includegraphics[width=\textwidth]{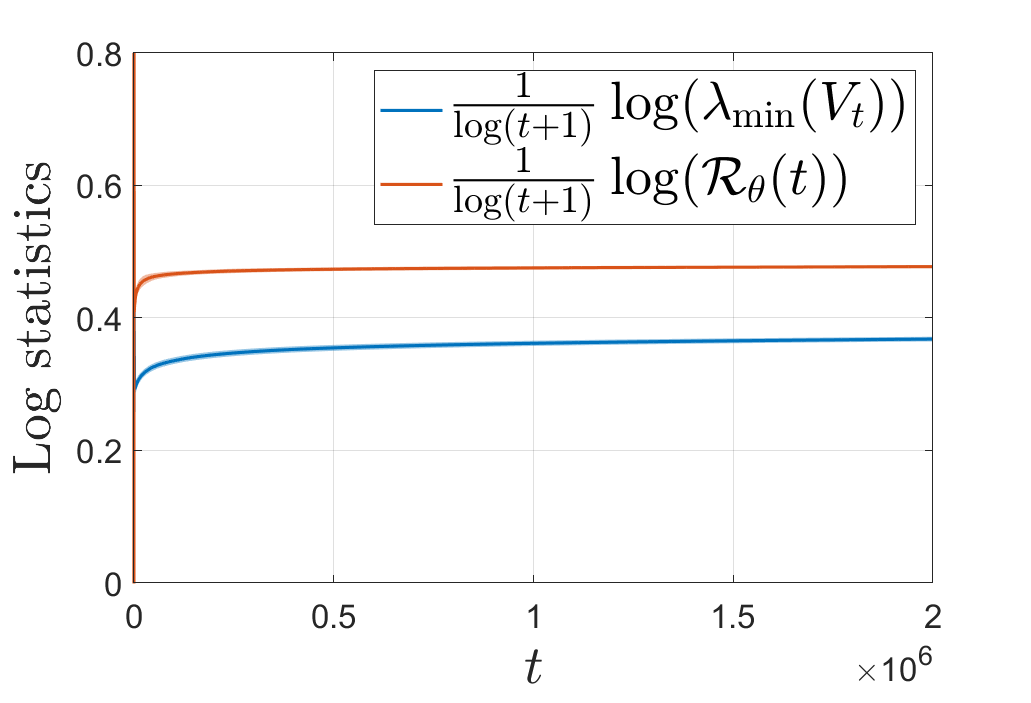}
        \caption{Log Statistics}
        \label{fig:log.stats}
    \end{subfigure}
    \begin{subfigure}[b]{0.4\textwidth}
        \centering
        \includegraphics[width=\textwidth]{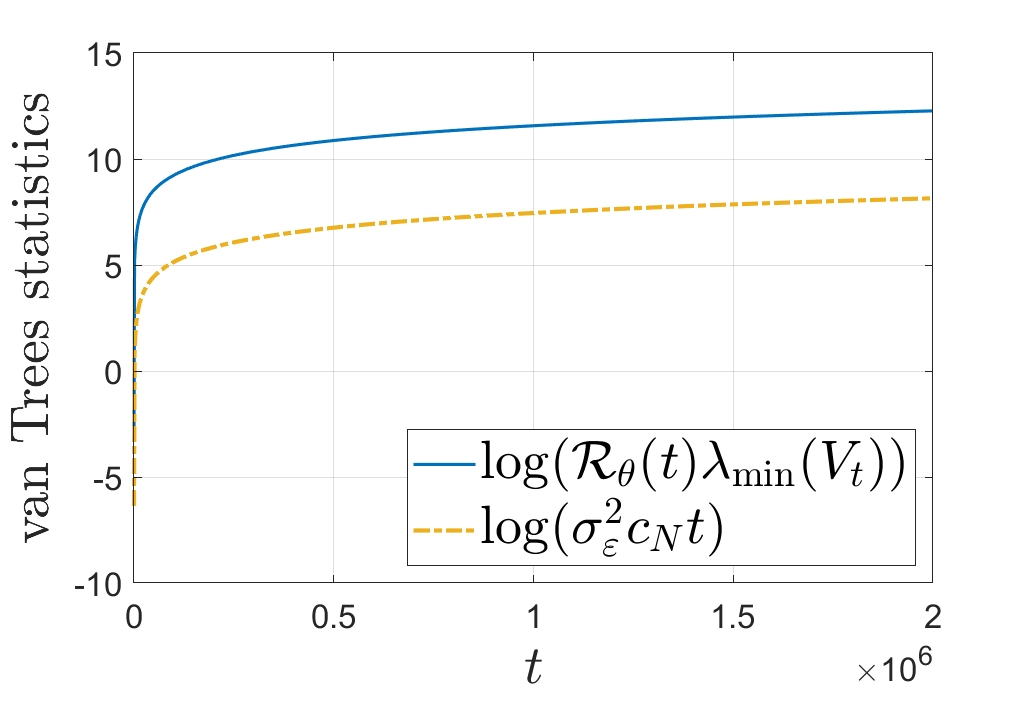}
        \caption{Verifying \eqref{eq:er.ei.t}}
        \label{fig:van.tree}
    \end{subfigure}
    \caption{The performance of BayesTS, given in \cite{linear_thompson_sampling_revisited}, on $\Acal^{10}$ with  $\sigma_\varepsilon^{2} = 0.1$ and $\sigma_{\theta^\star}^{2} = 0.01$, over 20 runs. (a) plots the progress of regret, (b) plots the progress of inference equality, measured via $\lambda_{\min}(V_t)$, (c) plots the log-ratio of regret and inference quality with time, and (d) shows the validity of \eqref{eq:er.ei.t}.}
    \label{fig:TS.inf.reg}
\end{figure}

We simulated the Bayesian Thompson Sampling (BayesTS) as given in  \cite{linear_thompson_sampling_revisited}, which can be produced from Algorithm \ref{alg:thompson_sampling} by changing step \ref{step_beta} to  $\beta_t' \gets  1$, for $T=10^5$ in $n=2$ dimensions, reported over 20 runs. The results are included in Figure \ref{fig:TS.inf.reg}. Indeed, the plot of $\log(\lambda_{\min} (V_t))/{\log (t+1)}$ in Figure \ref{fig:log.stats}\footnote{We plot the mean with standard deviations, but the deviations are too small to notice.} suggests  $e_I \approx 0.37$, where a similar behavior has been noted in \cite{rich_action_spaces}. However, a close inspection of the log-log plot of $\lambda_{\min} (V_t)$ in Figure \ref{fig:inference.quality} reveals that the slope is not quite constant, but shows a slow but steady increase. To estimate $e_I$ at higher $t$'s, we regressed $\log(\lambda_{\min} (V_t))$ against $\log t$ over the last $\frac{1}{100}$-fraction of the iterates to obtain $e_I = 0.49$. Thus, reading off  $\log(\lambda_{\min} (V_t))/{\log (t+1)}$ too early  from a graph such as Figure \ref{fig:log.stats} can be misleading about the asymptotic rate of inference quality. Regressing regret similarly yields $e_R \sim 0.52$, that is close to the terminal value of $\log(\mathscr{R}_{\theta^{\star}}(t))/\log(t + 1) = 0.48$. This match is expected, given that the slope of the regret with time on the log-log plot in Figure seems virtually constant in Figure \ref{fig:regret.performance}. Regret starts showing its asymptotic behavior earlier than $\lambda_{\min}(V_t)$.
With the estimation, we indeed verify $e_R + e_I \approx 1.01 > 1$, as conjectured by our derivation. With the sample set $\hat{\Theta}$ and a collection of sub-sampled time points $\Tcal$, we estimated,
\begin{align}
    \lambda_C \approx \min_{\theta_\star \in \hat{\Theta}} \lambda_1(\nabla a^\star(\theta^\star)), 
    \;
    \cos(\alpha_\Acal) \approx \min_{\theta_\star \in \hat{\Theta}}  \mathring{a}^\star(\theta_\star)^\top \mathring{\theta}^\star,
    \;
    c_3 \approx \min_{\theta_\star \in \hat{\Theta}} \min_{t\in\Tcal} \frac{\mathscr{R}_{\theta^\star}(t)}{\lambda_{\min}(V_t)},
\end{align}
that gave $c_N = \lambda_C^2 \cos^2(\alpha_\Acal) c_3/n \approx 0.01$. With these values, Figure \ref{fig:van.tree} demonstrates the validity of \eqref{eq:er.ei.t}. This inequality is predicated on $\lambda_{\min}(\E^{\pi}[V_t]) \gg \sigma_\varepsilon^2/\sigma_{\theta^\star}^2 = 10$, which holds for $\lambda_{\min}(V_t)$ beyond $t\sim10^3$. This analysis testifies that even though our claim about the product of inference quality and regret holds \emph{on average} in a Bayesian setting, concentrated priors often allow a sharper analysis.

Next, we dig deeper into the increase in the slope of $\log(\lambda_{\min}(V_t))$ over time in Figure \ref{fig:inference.quality}. Treating inference as \emph{currency} for better control, this increase in $\log (\lambda_{\min}(V_t))$ admits an interesting interpretation. In the short-run, $\Acal^{10}$ with $\theta^\star=(1,1)$ offers low-cost control efficacy. Cost leniency disappears, however, with stringent  demands on regret. In a sense, the return-on-investment relationship changes for higher returns. In what follows, we provide a geometric explanation behind this phenomenon.

\begin{figure}[!ht]
    \centering
    \includegraphics[width=0.7\linewidth]{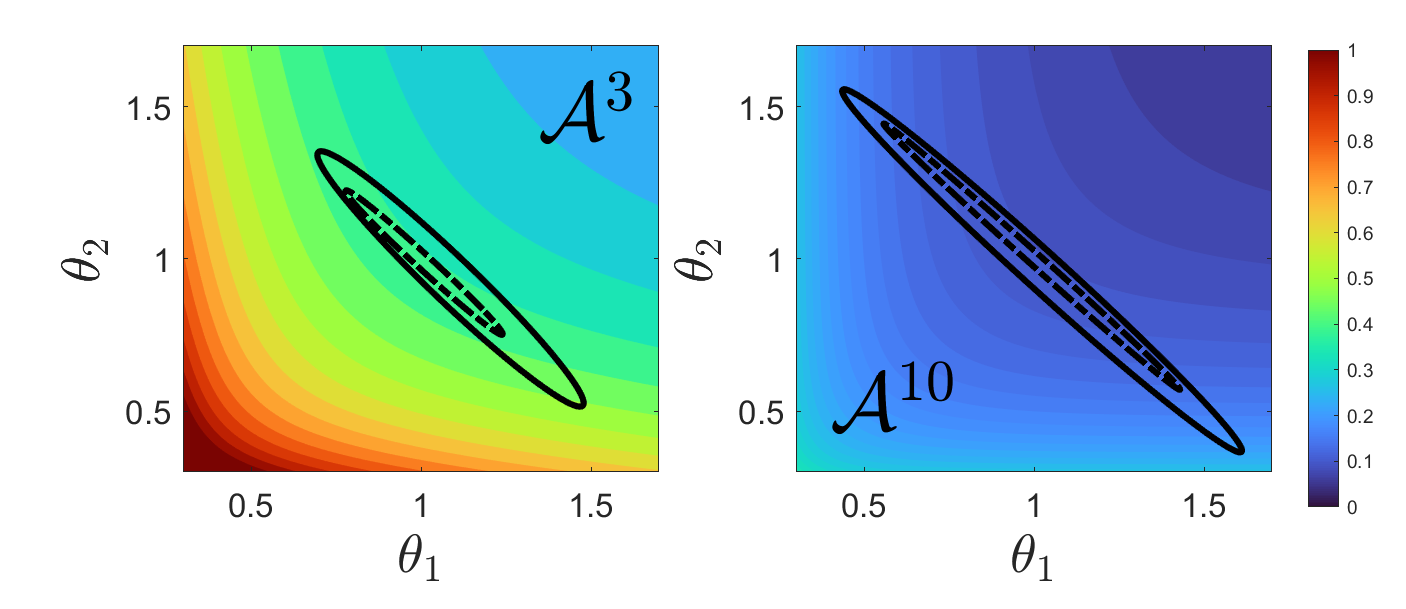}
    \caption{Confidence ellipsoids, defined at Lemma \ref{lemma:probability_Et} with $a_{\max} = \sqrt{2}$, for the progress of BayesTS with $\theta^\star = (1,1)$ at (\protect\solidblackline)$T_1=10^3$ iterations and (\protect\dashedblackline)$T_2=10^4$ iterations are overlaid on the heatmap of  
    $\lambda_1(\nabla a^\star(\theta^\star))$.}
    \label{fig:nabla.a}
\end{figure}

The heatmaps of $\lambda_1(\nabla a^\star(\theta^\star))$ for $\Acal^3$ have much larger values than that for $\Acal^{10}$, as evidenced by Figure \ref{fig:nabla.a}. Here, $\lambda_1$ computes the only nonzero eigenvalue of $\nabla a^\star(\theta^\star)$. This quantity captures how well changes in $\theta^\star$ affect the reward-maximizing action. The smaller it is, the more insensitive the best action becomes to the variation in $\theta^\star$. For two actions $a_1 = a^\star(\theta_1^\star)$ and $a_2 = a^\star(\theta_2^\star)$ that are close on the surface of $\Acal$, we expect the distance of $\theta_1^\star$ and $\theta_2^\star$ to be larger when $\lambda_1(\nabla a^\star(\theta^\star))$ is smaller. As a result, one expects a larger area of $\theta^\star$ to remain \emph{consistent} with observations upon playing actions in $\A^\star$ near the optimal action $a^\star(\theta^\star)$. Confidence ellipsoids, obtained via RLS estimation, encode this consistence. Indeed, these ellipsoids in Figure \ref{fig:nabla.a} are larger for $\Acal^{10}$ than they are for $\Acal^3$ at different points in time. More precisely, the major axis of the ellipsoid is larger for $\Acal^{10}$ than that for $\Acal^{3}$, and vice versa for the minor axis. To see why, consider starting the algorithm from the same sized ellipsoids for both $\Acal^{3}$ and $\Acal^{10}$. Given the discussion above, $a^\star(\theta^\star)$ for all $\theta^\star$ in these ellipsoids is a smaller set of points on the surface of $\Acal^{10}$ than they are on $\Acal^{3}$. As a result, algorithms that exploit RLS estimates such as BayesTS will result in more of the actions $a_t$'s aligned with $a^\star(\theta^\star)$ earlier in $\Acal^{10}$ than in $\Acal^{3}$. 
Since $V_t = \lambda I + \sum_{s = 1}^t a_t a_t^\top$, the minor axis will start aligning with the direction of $a^\star(\theta^\star)$ faster for $\Acal^{10}$ than for $\Acal^{3}$. However, this also means that these actions close to $a^\star(\theta^\star)$ will glean information much more slowly along the major axis. Given our earlier discussion, a larger set of $\theta^\star$'s remains consistent with observations and actions close to $a^\star(\theta^\star)$ for $\Acal^{10}$ than for $\Acal^3$ and this happens along the major axis--a direction in which such actions glean less information in. This explains the difference in shapes of the confidence ellipsoids in $\Acal^3$ and $\Acal^{10}$, and the difference increases in the short-run. The size of the major axis being $\lambda_{\min}(V_t)$, this difference leads to lower qualities of inference for $\Acal^{10}$ than for $\Acal^3$. Since actions are aligning faster with the optimum on $\Acal^{10}$ than on $\Acal^3$, regret drops faster on $\Acal^{10}$ than $\Acal^3$. These two facts together give rise to an illusion of good performance with poor inference in the short-run.

\begin{figure}[!ht]
    \centering
    \includegraphics[width=0.3\linewidth]{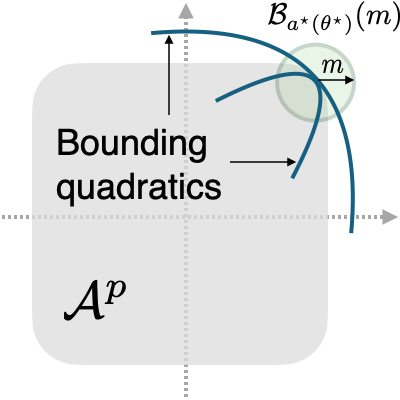}
    \caption{The geometry of $\Acal^p$ and how it locally behaves similar to a convex set that resembles the sub-level set of a strongly convex function.}
    \label{fig:Ap_ball}
\end{figure}
Why does this relationship change in the long-run? For a control policy  with sublinear regret, control actions must eventually align enough with the optimal action. These actions remain within a small ball of the optimal action, portrayed as $\Bcal_{a^\star(\theta^\star)}(m)$ in Figure \ref{fig:Ap_ball}. As long as the function describing the action set is \emph{locally strongly convex}, within this ball, our geometric analysis of Section \ref{sec:geometric_properties} applies. Consequently, the superlinear lower bound on the product of inference quality and cumulative regret on average from \eqref{eq:er.ei.t} in Section \ref{sec:lowerBoundRegret} precludes bettering regret without paying the cost of good inference in the long-run.

\section{Conclusions and Future Work}
\label{sec:conc}
Linear bandits define a well-studied class of decision problems, where the stochastic reward is linear in the action that can vary over a continuous set. Opposed to the multi-arm counterpart, this problem offers the ability to garner information about the unknown governing parameter even with sub-optimal ``arms'' or actions, if done correctly. In this paper, we studied a forced exploration algorithm for linear bandits with convex compact action sets. The algorithm adds a noise along the tangent plane of the reward-maximizing action based on the current point estimate of the unknown parameter and then projects it back to the action set. Our analysis guaranteed fast  estimation of the unknown parameter with high probability that translated to an $\Ocal(\sqrt{T}\log(T))$ expected regret in high probability. Further analysis of the problem domain revealed that this upper bound was tight up to $\log$ factors. In particular, we provided an $\Omega(\sqrt{T})$ lower bound for linear bandits with a wide variety of action sets and noise processes. The lower bound analysis also revealed interesting connections between optimal rates for inference and control in linear bandits. We dug deep into examples in the literature where regret-optimal performance was achieved with low inference quality. We provided a geometric  argument to establish that such phenomenon is essentially an illusion that is short-lived. The asymptotic regime ties them to the same rate for minimax order-optimal regret.

Adaptive control literature on linear bandits is extensive. Perhaps the most well-studied algorithms include UCB and TS. In future work, we want to understand the commonalities among these algorithms and forced exploration with an aim to unify their analyses. A second line of future work includes extension of our work to adaptive control of dynamical systems, such as fully or partially-observed linear time-invariant systems and Markov decision processes. Furthermore, we wish to extend the forced exploration setting and the lower-bound analysis to varying noise processes as in \cite{log_minmax_regret}. Finally, of interest to us, are applications of our methodology to cooperative multi-agent learning contexts in unknown environments.

\section*{Acknowledgments}
The work was partially supported by grants from the National Science Foundation, specifically NSF CPS 2038775 and NSF ECCS 23-49418. 
{The authors would like to thank Nikolai Matni from the University of Pennsylvania and John R. Birge from the University of Chicago for insightful discussions.}

\bibliography{adaptive}

\appendix
\section{Algorithms Compared in Section \ref{sec:numerics}}
\label{sec:appendix_pseudocodes}

The pseudocodes for LinUCB, TS\footnote{We adjust the factor in $\beta_t'$ with $a_{\max}^2$ to incorporate action sets with $a_{\max} \geq 1$. }, and FEL are included here.

\begin{algorithm}
\caption{LinUCB Algorithm, adapted from \cite{improved_algorithms_for_stochastic_bandits, context_bandits_ucb}}
\label{alg:action_space_sampling}
\begin{algorithmic}[1]
\State \textbf{Data:} $\hat{\theta}_1$, T, $\lambda^{\textrm{UCB}}$
\State  $V_0\gets\lambda^{\textrm{UCB}} I_n$  

\For{$t = 1, \dots, T$}
      
    \State  $\beta_t \gets M \sqrt{n \log(T (1 + t a_{\max}^2/\lambda^{\textrm{UCB}}))} + \sqrt{\lambda^{\textrm{UCB}}}\theta_{\max}$
    \State $a_t\gets\arg \max_{a \in \Acal} \, a^\top \hat{\theta}_t +  \beta_t \vnorm{a}_{V_{t-1}^{-1}}$
    \State Observe  $Y_t$ 
    \State  $V_t\gets V_{t-1} + a_t a_t^\top$
    \State  $\hat{\theta}_{t + 1}\gets V_{t}^{-1} \sum_{s = 1}^t a_s Y_{s}$
\EndFor

\end{algorithmic}
\end{algorithm}

\begin{algorithm}
\caption{TS Algorithm, adapted from \cite{linear_thompson_sampling_revisited}}
\label{alg:thompson_sampling}
\begin{algorithmic}[1]
\State \textbf{Data:} $\hat{\theta}_1$, T, $\lambda^{\textrm{TS}}$
\State $V_0 \gets \lambda^{TS} I$
\For{$t = 1, \dots, T$}
    \State $\beta_t' \gets M \sqrt{2 \log(4T^2 (1 + a_{\max}^2 t /\lambda^{\textrm{TS}})^{n/2})} + \sqrt{\lambda^{\textrm{TS}}}\theta_{\max}$ \label{step_beta}
    \State $\tilde{\theta}_t\gets\hat{\theta}_t +  \beta_t' V_{t-1}^{-1/2} \eta_t$, where $\eta_t \sim \Ncal(0, I_n)$
    \State $ a_t \gets  \arg \max_{a\in \Acal} a^\top \tilde{\theta}_t$
    \State Observe $Y_{t}$
    \State $V_t\gets V_{t-1} + a_t a_t^\top$
    \State $\hat{\theta}_{t+1} \gets V_{t}^{-1} \sum_{s = 1}^{t} a_s Y_{s}$

\EndFor
\end{algorithmic}
\end{algorithm}

\begin{algorithm}
\caption{FEL Algorithm, adapted from \cite{AbbasiYadkori2009ForcedExplorationBA}}
\label{alg:fel_algorithm}
\begin{algorithmic}[1]
\State \textbf{Data:} $\hat{\theta}_1$, T
\State $V_0 \gets I$, $f_0 \gets 0$
\State Find an eigenbasis $\{e_i\}_{i = 1}^n$ for $\Delta_g$ with associated eigenvalues $\{\lambda_i\}_{i = 1}^n$.
\For{$t = 1, \ldots, T$}
    \If{$f_{t-1} < c^{\textrm{FEL}} n \sqrt{t}$} \text{: Exploration phase}
        \State  $a_t \gets e_i/\sqrt{\lambda_i}$, where $i \sim \operatorname{Uniform}\{1,...,n\}$
        \State  Observe $Y_t$
        \State $V_t \gets V_{t-1} + a_t a_t^\top$
        \State $\hat{\theta}_{t + 1} \gets (V_t)^{-1} \sum_{s = 1}^{t} a_s Y_{s}$
        \State $f_t \gets f_{t-1} + 1$
    \Else \text{: Exploitation phase}
        \State $a_t \gets a^\star(\hat{\theta}_t)$
        \State Observe $Y_t$
        \State Keep previous values: $V_t \gets V_{t-1}$, $\hat{\theta}_{t+1} \gets \hat{\theta}_{t}$, $f_t \gets f_{t-1}$
    \EndIf
\EndFor
\end{algorithmic}
\end{algorithm}
\newpage
\section{High-Probability Guarantees for the FEL Algorithm}\label{sec:FEL_hp_proof}
Our matrix-martingale based analysis yields high-probability inference and regret guarantees for the FEL algorithm in \cite{AbbasiYadkori2009ForcedExplorationBA}.
For ellipsoidal action sets, the FEL algorithm is given in Appendix \ref{sec:appendix_pseudocodes}; see the original paper for general action sets. The following derivation illustrates the power of our proof technique.

The FEL algorithm updates $V_t$ only when it explores and keeps it constant when it exploits. Let $\Tcal^\textrm{exp}$ be the collection of times up to time $T$ when it explores. The design of the algorithm is such that $|\{ s \leq t | s \in \Tcal^\textrm{exp} \}| \sim \Ocal(\sqrt{t})$.
For $t \in \Tcal^\textrm{exp}$, $a_t$ is sampled from a pre-determined distribution  such that $\lambda_{\min}(\E_{t-1}[a_t a_t^\top]) \geq \lambda_\textrm{exp} > 0$. Thus, we have
\begin{align}
    \lambda_{\min}(\E[V_t]) \geq \sum_{s = 1}^t \lambda_{\min}(\E_{s-1}[a_s a_s^\top]) \mathds{1}_{s \in \Tcal^\textrm{exp}} \gtrsim \lambda_{\textrm{exp}} \sqrt{t}.
    \label{eq:E.Vt.FEL}
\end{align}
The per-step matrix variation satisfies
\begin{align}
    \vnorm{\Ex{\left(a_s a_s^\top - \Ex{a_s a_s^\top}\right)^2} } 
    \begin{cases}
        \leq 16 a_{\max}^4, & {s \in \Tcal^\textrm{exp}},
        \\
        = 0, & \text{otherwise}.
    \end{cases}
    \label{eq:var.Vt.FEL}
\end{align}
Collectively, \eqref{eq:E.Vt.FEL}--\eqref{eq:var.Vt.FEL} yield the growth rates in Lemmas \ref{lemma_lower_bound_for_variation} and \ref{lemma_upper_bound_for_variation}, implying that the analysis of Lemma \ref{lemma:probability_Ct} carries over for the FEL algorithm. 

To obtain the high probability regret bound, we modify the proof of Theorem \ref{theorem_regret} as follows. Recall that the proof starts by showing \eqref{rewriting_the_action}, restated below. 
    \begin{align}
    \begin{aligned}
    \vnorm{a_{t } - a^\star(\theta^\star)}^2
    & \leq 2\vnorm{a^\star(\hat{\theta}_t) - a^\star(\theta^\star)}^2 + 2\vnorm{\sum_{i = 2}^n\nu_t^i \mu_t^i}^2.
    \end{aligned}
    \label{rewriting_the_action.replica}
    \end{align}
The left-hand-side of the above inequality is the per-step regret of TRAiL. The first summand on the right-hand side captures the result of playing the reward-maximizing action $a^\star(\hat{\theta}_t)$ and the second term is the result of the perturbations of TRAiL. In the proof of Theorem \ref{theorem_regret}, we bounded the first summand on the right hand side obtained from $V_t$'s that satisfy the inference guarantee, and hence, applies to the FEL algorithm. Thus, collectively all the exploitation steps and the $\Ocal(\sqrt{T})$ exploration steps incur at most $\Ocal(\sqrt{T})$ expected regret growth with high probability.

\end{document}